\PassOptionsToPackage{table,xcdraw}{xcolor}

\documentclass{article}

\usepackage{microtype}
\usepackage{graphicx}
\usepackage{subcaption}
\usepackage{multirow}
\usepackage{booktabs}
\usepackage{tabularx}
\usepackage{color}

\def\red{\color{red}}

\usepackage{hyperref}

\usepackage[preprint]{icml2024}

\usepackage{amsmath}
\usepackage{amssymb}
\usepackage{mathtools}
\usepackage{amsthm}

\usepackage[capitalize,noabbrev]{cleveref}

\theoremstyle{plain}
\newtheorem{theorem}{Theorem}[section]
\newtheorem{proposition}[theorem]{Proposition}
\newtheorem{lemma}[theorem]{Lemma}

\newtheorem{example}{Example}[section]
\newtheorem{fact}{Fact}[section]
\theoremstyle{definition}
\newtheorem{definition}[theorem]{Definition}
\newtheorem{assumption}[theorem]{Assumption}
\theoremstyle{remark}
\newtheorem{remark}[theorem]{Remark}

\input{math_commands}
\renewcommand{\leq}{\leqslant}
\renewcommand{\le}{\leqslant}
\renewcommand{\geq}{\geqslant}
\renewcommand{\ge}{\geqslant}

\newcommand{\est}{\operatorname{est}}
\newcommand{\full}{\operatorname{full}}
\newcommand{\adv}{\operatorname{adv}}

\newcommand{\clip}{\operatorname{clip}}

\usepackage{amsthm}
\usepackage{amsmath}
\usepackage{amssymb}
\usepackage{thm-restate}

\usepackage{babel}

\icmltitlerunning{Optimism-then-NoReget}

\usepackage{listings}
\usepackage{soul}
\usepackage{comment}

\definecolor{codegreen}{rgb}{0,0.6,0}
\definecolor{codegray}{rgb}{0.5,0.5,0.5}
\definecolor{codepurple}{rgb}{0.58,0,0.82}
\definecolor{backcolour}{rgb}{0.95,0.95,0.92}

\begin{document}

\twocolumn[
    \icmltitle{Optimistic Thompson Sampling for No-Regret Learning in Unknown Games}

    \icmlsetsymbol{equal}{*}

    \begin{icmlauthorlist}
        \icmlauthor{Yingru Li}{equal,cuhksz,sribd}
        \icmlauthor{Liangqi Liu}{equal,cuhksz,sribd}
        \icmlauthor{Wenqiang Pu}{sribd}
        \icmlauthor{Hao Liang}{cuhksz}
        \icmlauthor{Zhi-Quan Luo}{cuhksz,sribd}
    \end{icmlauthorlist}

    \icmlaffiliation{cuhksz}{The Chinese University of Hong Kong, Shenzhen, China}
    \icmlaffiliation{sribd}{Shenzhen Research Institute of Big Data}

    \icmlcorrespondingauthor{Yingru Li}{yingruli@cuhk.edu.cn}
    \icmlcorrespondingauthor{Wenqiang Pu}{wenqiangpu@cuhk.edu.cn}

    \icmlkeywords{Machine Learning, ICML}

    \vskip 0.3in
]

\printAffiliationsAndNotice{\icmlEqualContribution} %

\begin{abstract}

This work tackles the complexities of multi-player scenarios in \emph{unknown games}, where the primary challenge lies in navigating the uncertainty of the environment through bandit feedback alongside strategic decision-making. We introduce Thompson Sampling (TS)-based algorithms that exploit the information of opponents' actions and reward structures, leading to a substantial reduction in experimental budgets---achieving over tenfold improvements compared to conventional approaches. Notably, our algorithms demonstrate that, given specific reward structures, the regret bound depends logarithmically on the total action space, significantly alleviating the curse of multi-player. Furthermore, we unveil the \emph{Optimism-then-NoRegret} (OTN) framework, a pioneering methodology that seamlessly incorporates our advancements with established algorithms, showcasing its utility in practical scenarios such as traffic routing and radar sensing in the real world.

\end{abstract}
\begin{figure}[!htbp]
    \centering\includegraphics[width=0.8\linewidth]{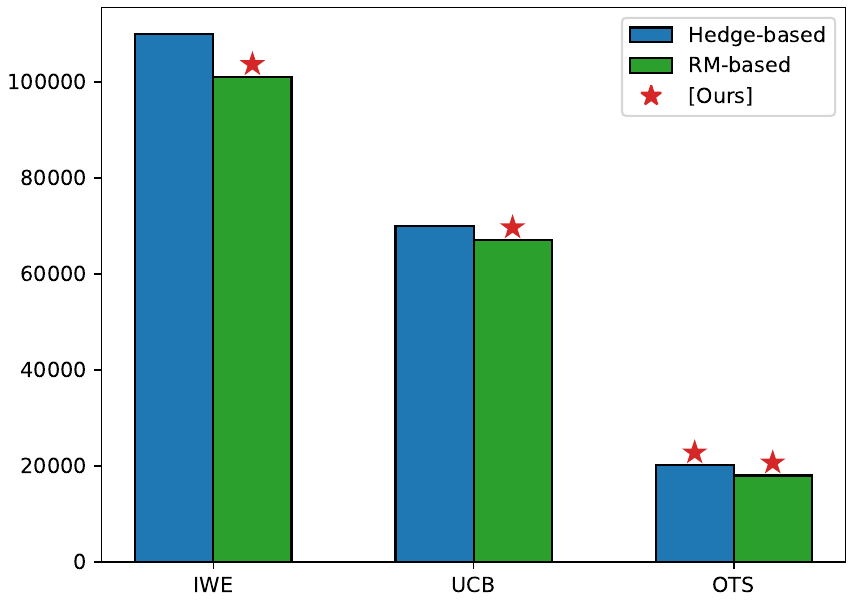}
    \caption{
        \textbf{Reduction on experimental budgets} by our methods in terms of \# samples required for various algorithms to reach the same level in average regret $(10^{-1})$ under the $50\times50$ matrix game setups ( \cref{sec:rmg}).
        IWE, UCB and OTS correspond to \underline{I}mportance \underline{W}eighted \underline{E}stimator, \underline{U}pper \underline{C}onfidence \underline{B}ound and \underline{O}ptimistic \underline{T}hompson sampling.
        RM corresponds to \underline{R}egret \underline{M}atching.
        IWE-RM, UCB-RM, OTS-Hedge and OTS-RM are our proposed algorithms.
        The compared baselines include
        IWE-Hedge (known as Exp3 ~\citep{auer2002nonstochastic} while we implement all Exp3 variants~\citep{stoltz2005incomplete,kocak2014efficient,lattimore2020bandit} and select the best as baseline.) and
        UCB-Hedge (known as GP-MW in~\cite{sessa2019no}).
    }
    \label{fig:hist}
    \vspace{-3.5mm}
\end{figure}

\section{Introduction}
\label{sec:intro}
Many real-world problems in economics~\citep{fainmesser2012community}, sociology~\citep{skyrms2009dynamic}, transportation~\citep{leblanc1975algorithm}, politics~\citep{ordeshook1986game}, signal processing~\citep{song2011mimo}, and other fields~\citep{fudenberg1991game} can be described as \textit{unknown games}, where each player only observes their opponents’ actions and the noisy rewards associated with their own selected actions (referred to as \textit{bandit feedback}). The goal of each player is to maximize their individual reward, and the only way to achieve this goal is by repeatedly playing the game and learning its structure from the observed rewards. The challenge in unknown games is how to efficiently learn from bandit feedback. Celebrated no-regret learning algorithms, such as Hedge~\citep{freund1997decision} and regret-matching~\citep{hart2000simple}, ensure sublinear regret guarantees under the \textit{full information} setting, where the rewards of all actions at each round are observable. However, these algorithms cannot handle problems with bandit feedback and face the curse of multi-player: the complexity scales exponentially as the number of players increases.
We aim to develop efficient algorithms learning to play repeated unknown games by leveraging information on opponents' actions and reward structures. We highlight the contributions:
\begin{itemize}
\vspace{-2mm}
\item \emph{\textbf{Effective algorithms}}: The vanilla TS indeed fails in a specific class of unknown games as we demonstrated. To overcome this drawback, we introduce an optimistic variant of TS (a.k.a. OTS) combined with appropriate full information adversarial bandit algorithms. We show that OTS can fix the divergence issue with the help of Gaussian anti-concentration behavior.
Empirically, our proposed methods reduce the experimental budgets \emph{more than an order of magnitude} in two real-world problems - two-player radar sensing (\cref{fig:radar_jammer}) and multi-player traffic routing problem (\cref{fig:traffic}) - and the random matrix game (\cref{fig:hist,fig:rdnmtx01}).
Meanwhile, our proposed methodology effectively mitigates the curse of multi-player: OTS-type algorithms are capable of solving the traffic routing problem with hundreds of decision-makers (\cref{fig:traffic}), resulting in minimal congestion.
\item \emph{\textbf{Theoretical advantages}}: A general information-theoretic regret bound is provided and sublinear regret bounds of all proposed algorithms are established accordingly (Table~\ref{tab:regret}). Our analysis highlights that using (1) the information of opponent's actions and (2) the underlying reward structure can help resolve the curse of multi-player. For structured reward functions, our algorithms achieve regret bounds that depend logarithmically on the size of the action space. In contrast, algorithms that rely only on bandit feedback suffer from the curse of multi-player.
\item \emph{\textbf{Unified framework}}: An Optimism-then-NoRegret (OTN) learning framework for unknown games is also introduced. This framework encompasses various vanilla game algorithms as special cases, and several efficient algorithms can be developed under this framework, including upper confidence bound (UCB) based algorithms and TS-based algorithms. Notably, the proposed OTS-RM algorithm from this framework achieves the best performance in all experiments.
\end{itemize}

\begin{table*}[!htbp]
\centering
    \small
    \begin{tabular}{@{}cc@{\hspace{6pt}}c@{\hspace{6pt}}c@{\hspace{6pt}}c@{\hspace{6pt}}c@{}} 
    \toprule
    Feedback        &  & Full & Bandit & Bandit + Actions & Bandit + Actions \\ \midrule
    Imagined Reward &  & --                     & IWE      & UCB       & OTS \cellcolor[HTML]{C1FEC0}              \\
    \multirow{2}{*}{No-Regret Update} &
    Hedge &
    $\mathcal{O} \big( \sqrt{T  \log \abs{\actions}} \big)$ &
    $\mathcal{O} \big( \sqrt{T \actions  \log \abs{\actions}} \big)$ &
    $\mathcal{O} \big( \sqrt{T  \log \abs{\actions}} + \sqrt{\gamma_T\beta T}\big)$ & \cellcolor[HTML]{C1FEC0}
    $\mathcal{O} \big( \sqrt{T  \log \abs{\actions}} + \sqrt{\gamma_T\beta T}\big)$ \\
    &
    RM &
    $\mathcal{O} \big( \sqrt{T \abs{\actions}} \big)$ & \cellcolor[HTML]{C1FEC0}
    $\mathcal{O} \big( T^{2/3} \abs{\actions}^{2/3} \big)$ & \cellcolor[HTML]{C1FEC0}
    $\mathcal{O} \big( \sqrt{T \abs{\actions}} + \sqrt{\gamma_T\beta T}\big)$  &  \cellcolor[HTML]{C1FEC0}
    $\mathcal{O} \big( \sqrt{T \abs{\actions}} + \sqrt{\gamma_T\beta T}\big)$ \\ \bottomrule
    \end{tabular} 
    \caption{
    Comparison on the regret bounds.
    $\actions$ is the action set of the agent and $\mathcal{B}$ is the action set of opponent. $\gamma_T$ is the maximum information gain (\cref{def:max_infor_gain}) and $\beta=\log (\abs{\actions}T)$. The cells with green color are our results.}
    \label{tab:regret}
    \vspace{-3mm}
\end{table*}

\subsection{Related works}
\vspace{-1mm}
\paragraph{Adversarial Bandits.} In the full-information setting, multiplicative-weights (MW) algorithms such as Hedge~\citep{freund1997decision} achieve optimal regret for adversarial bandit problems. However, full information feedback, requiring perfect game knowledge, is unrealistic in many applications. In the challenging bandit feedback setting, the Exp3 algorithm~\cite{auer2002nonstochastic,stoltz2005incomplete,kocak2014efficient,neu2015explore,lattimore2020bandit} is notable for utilizing an importance-weighted estimator to construct the reward vector.

\vspace{-3mm}
\paragraph{Learning in Games.} A series of works~\citep{daskalakis2011near, syrgkanis2015fast,chen2020hedging,hsieh2021adaptive} have studied no-regret learning algorithms in games, with regret matching~\citep{hart2000simple, hart2001reinforcement} being another prevalent approach. A variation, $\text{regret-matching}^{+}(\text{RM}^{+})$~\citep{tammelin2014solving}, has been shown to lead to significantly faster convergence in practice. These online algorithms treat opponents as part of the environment, thereby reducing the unknown repeated game to a bandit problem. In this adversarial and adaptive environment, reward functions vary over different time steps \citep{cesa2006prediction}.

\paragraph{Structure \& Opponent Awareness.} Prior literature often overlooks the potential to exploit the reward structure in repeated games and the observability of opponents' actions. This oversight persists despite the scenario's relevance to numerous applications~\cite{o2021matrix,sessa2019no}. \cite{o2021matrix} compares the received reward to the Nash value, proposing UCB and K-learning variants to minimize Nash regret. In contrast, our work aims to exploit the opponent's strategy, introducing a focus on adversarial regret—a metric discussed further in \cref{sec:addtional-related}. This approach differs fundamentally from seeking only to achieve the Nash value. \cite{sessa2019no} also prioritizes adversarial regret, employing a Gaussian Process to exploit correlations among game outcomes and achieve a kernel-dependent regret bound. This bound includes the factor $\gamma_T$, derived from an UCB-type algorithm.
Thompson sampling (TS) and its variants~\citep{russo2018tutorial,vaswani20old} represent a strong alternative to UCB-type algorithms in the context of reward structure-aware bandit literature. Recent works by \citet{zhang2022feel,agarwal2022model} have explored the incorporation of optimism into TS through an optimistic prior, albeit facing computational tractability issues. Other research~\citep{li2022hyperdqn,li2023efficient,li2024hyperagent} addresses these computational challenges in TS for large-scale, complex environments in single-agent setups. However, evidence supporting the effectiveness of these randomized exploration methods in our specific setting of unknown games remains limited.

Our proposed methodologies not only address the computational tractability issues inherent in optimistic TS but also introduce a novel perspective on learning in unknown game environments. By focusing on adversarial regret, we provide a more nuanced understanding of how players can strategically navigate these games to their advantage. This shift in focus from merely achieving Nash equilibrium to exploiting strategic opportunities represents a significant departure from traditional approaches.

This paper is organized as follows: \Cref{sec:formulations} introduces the fundamental protocols and notations used in the repeated bandit game. \Cref{sec:algorithm} describes the proposed OTS-type and UCB-RM algorithms, as well as the corresponding OTN framework. \Cref{sec:analysis} provides an analysis of the regret associated with these methods. \Cref{sec:experiments} details the experimental results, showcasing the effectiveness of our approaches. Finally, Section \ref{sec:conlcu} concludes the paper.

\section{Repeated bandit game}
\label{sec:formulations}
To simplify the exposition, we consider a two-player game scenario involving Alice and Bob. However, our results can be straightforwardly extended to multiplayer games by treating all other players as an \emph{abstract} player.

\vspace{-0.5em}
\paragraph{Protocol.}
Consider a repeated game between Alice and her opponent, Bob, where the action index sets for Alice and Bob are denoted by $\mathcal{A} = \{1, \ldots, |\mathcal{A}| \}$ and $\mathcal{B} = \{1, \ldots, |\mathcal{B}| \}$, respectively.\footnote{We use the shorthand $|\mathcal{A}|$ to denote the cardinality of Alice's action set, and similarly $|\mathcal{B}|$ for Bob's.}
At each time $t=0,1, \ldots$, Alice selects an action $A_{t} \in \mathcal{A}$, and Bob simultaneously selects an action $B_t \in \mathcal{B}$. The mean reward for each action pair $(a,b)$ is $f_{\theta}(a,b)$, where $\theta \in \Theta$ is a model parameter unknown to the players. In the \textit{bandit feedback} scenario, Alice only observes a noisy version of the mean reward associated with the selected action pair $(A_{t}, B_t)$:
\[ Y_{t+1, A_{t}, B_t} = f_{\theta}(A_{t}, B_t) + \eta_t, \]
where $\eta_t$ is an i.i.d. noise sequence. Under the \textit{full information} setting, Alice can observe the mean reward vector\footnote{Details of the full information feedback protocol are presented in~\cref{app:game}} $\mathbf{r}_t = (f_{\theta}(a, B_t))_{a \in \mathcal{A}}$ associated with each action $a \in \mathcal{A}$.
Alice's experience up to time $t$ is encoded by the history $H_{t} = \{(A_{0}, B_0, Y_{1, A_{0}, B_0}), \ldots, (A_{t-1}, B_{t-1}, Y_{t, A_{t-1}, B_{t-1}}) \}$.

\vspace{-0.5em}
\paragraph{Algorithm.}
An algorithm $\pi^{\text{alg}} = (\pi_t)_{t \in \mathbb{N}}$ employed by Alice is a sequence of deterministic functions, where each $\pi_t(H_t)$ specifies a probability distribution over the action set $\mathcal{A}$ based on the history $H_t$. Alice's action $A_t$ is sampled from the distribution $\pi_t(H_t)$, i.e., $\prob(A_t \in \cdot | H_t) = \pi_t(\cdot)$.

The above description of the bandit game encompasses various game forms based on the structure of the mean function $f_{\theta}(a, b)$. Several representative game forms, such as the matrix game, linear game, and kernelized game, are summarized in Table~\ref{tab:game} (see Appendix~\ref{app:game}).

\subparagraph{Reward and Performance Metric.}
Alice maintains a reward function $\mathcal{R}: \mathbb{R} \mapsto [0, 1]$ that maps the observations to a bounded value, i.e., $R_{t+1, A_t, B_t} = \mathcal{R}(Y_{t+1, A_t, B_t})$. The objective for Alice is to maximize her expected reward $\sum_{t=0}^{T-1} \mathbb{E}[R_{t+1, A_t, B_t} | \theta]$ over some duration $T$, irrespective of Bob's fixed action sequence $B_{0:T}$. By treating Bob as the adversarial environment, the best action $A^*$ in hindsight is $A^* = \arg\max_{a \in \mathcal{A}} \sum_{t=0}^{T-1} \mathbb{E}[R_{t+1, a, B_t} | \theta]$, and the $T$-period \emph{adversarial regret} $\mathcal{R} (T, \pi^{\text{alg}}, B_{0:T}, \theta)$ is defined by
\begin{align}
\label{eq:adv_regret}
    \sum_{t=0}^{T-1} \mathbb{E}[R_{t+1, A^*, B_t} - R_{t+1, A_t, B_t} | \theta],
\end{align}
where the expectation is taken over the randomness in the actions $A_t$ and the rewards $R_{t+1, A_t, B_t}$. However, this adversarial regret is not a suitable metric under our game setting since it depends on Bob's specific action sequence $B_{0:T}$. We adopt the worst-case regret as the metric. An algorithm $\pi_{\text{alg}}$ is considered \emph{No-Regret} for Alice if, for any $B_{0:T}$, Alice suffers only sublinear regret, i.e.,
\begin{align*}
    \mathcal{R}^*(T, \pi^{\text{alg}}) = \sup_{B_{0:T} \in \mathcal{B}^T} \mathcal{R}(T, \pi^{\text{alg}}, B_{0:T}) = o(T),
\end{align*}
omitting $\theta$ for simplicity of notation.

\section{Optimism-then-NoRegret learning}
\label{sec:algorithm}
\subsection{Review of Full Information Feedback}
We start by providing a brief overview of the full information feedback setting, in which Alice can observe the mean rewards $r_t(a) = f_{\theta}(a, B_t)$ for all actions $a \in \mathcal{A}$. At time $t$, Alice picks action $A_t \sim P_{X_t}$, where $P_X(i) = \frac{X_i}{\sum_{j \in \mathcal{A}} X_j}$. Full-information adversarial bandit algorithms, such as Hedge~\citep{freund1997decision} and Regret Matching (RM)~\citep{hart2000simple}, can be used to update $X_t$ to $X_{t+1} = g_t(X_t, r_t)$,
\begin{align*}
\begin{cases}
  \text{Hedge:} & g_{t, a}(X_t, r_t) = X_{t, a} \exp\left(\eta_t r_t(a)\right),\\
  \text{RM:} & g_{t, a}(X_t, r_t) = \max\left(0, \sum\limits_{s=0}^t r_t(a) - r_t(A_s)\right),
\end{cases}
\end{align*}
where $g_t(\cdot): \mathbb{R}_+^{\mathcal{A}} \times \mathbb{R}_+^{\mathcal{A}} \mapsto \mathbb{R}_+^{\mathcal{A}}$. The full information adversarial regret of an algorithm $\text{adv}$ for a reward sequence $(r_t)_t$ is defined as
\begin{align*}
  \mathcal{R}_{\text{full}}(T, \text{adv}, (r_t)_t) = \max_{a \in \mathcal{A}} \mathbb{E}\left[\sum_{t=0}^{T-1} r_t(a) - r_t(A_t)\right].
\end{align*}
The worst-case regret is defined as
\begin{align}
\label{def:regret_full}
  \mathcal{R}_{\text{full}}(T, \text{adv}) = \max_{(r_t)_t} \mathcal{R}_{\text{full}}(T, \text{adv}, (r_t)_t).
\end{align}
Since $r_t(a) = f_{\theta}(a, B_t)$, the adversarial regret in \cref{eq:adv_regret} can be reformulated as full-information adversarial regret. For Hedge and RM, their worst-case regrets can be bounded as $\mathcal{R}_{\text{full}}(T, \text{Hedge}) = \mathcal{O}(\sqrt{T \log |\mathcal{A}|})$ and $\mathcal{R}_{\text{full}}(T, \text{RM}) = \mathcal{O}(\sqrt{T|\mathcal{A}|})$.

\subsection{Bandit Feedback}
In the bandit feedback setting, Alice can only observe a noisy version of the reward for the action she selects. We propose a framework that combines an optimism algorithm for stochastic bandits with a no-regret algorithm for full information adversarial bandits. First, we construct a sequence of surrogate full information feedback $\tilde{R}_{t} \in \mathbb{R}^{\mathcal{A}}$ in an optimistic sense, which we refer to as the \emph{imagined reward}. Specifically, we use an optimistic estimation algorithm $E$ to construct reward vector $\tilde{R}_{t+1} = E(H_{t+1}, Z_{t+1}) \in \mathbb{R}^{\mathcal{A}}.$
Then, we apply a no-regret update rule $g_t$ to update the sampling distribution with $\tilde{R}_{t+1}$ as $X_{t+1} = g_t(X_t, \tilde{R}_{t+1}).$
This procedure is described in Algorithm~\ref{alg:ETN}, termed Optimism-then-NoRegret (OTN).
\begin{algorithm}
  \caption{Optimism-then-NoRegret (OTN)}
  \label{alg:ETN}
  \begin{algorithmic}[1]
    \STATE Initialize $X_1$
    \FOR{round $t = 0, 1, \ldots, T-1$}
    \STATE Sample action $A_t \sim P_{X_t}$
    \STATE Observe opponent's action $B_t$ and noisy bandit feedback $R_{t+1, A_t, B_t}$
    \STATE Update history: $H_{t+1} = (H_t, A_t, B_t, R_{t+1, A_t, B_t})$
    \STATE Construct the reward vector $\tilde{R}_{t+1}$ with an optimistic estimation algorithm $E$
    \STATE No-regret update
    \ENDFOR
  \end{algorithmic}
\end{algorithm}
The essential part of Algorithm~\ref{alg:ETN} is the construction of the imagined reward vector $\tilde{R}_{t+1}$.

To elucidate the strategic underpinnings of adversarial bandit games, we introduce a comprehensive regret decomposition. This analytical framework sheds light on the subtleties of strategic decision-making against adversaries.
\begin{proposition}[Regret Decomposition]
\label{prop:regret-decomposition}
  Given any action $a \in \mathcal{A}$, we can dissect the one-step regret as follows:
  \begin{align*}
     \mathbb{E}\left[ R_{t+1, a, B_t} - R_{t+1, A_t, B_t} | \theta \right] = (I) + (II) + (III),
  \end{align*}
  where:
  \begin{align*}
  (I) &= \mathbb{E}\left[ \tilde{R}_{t+1}(a) - \tilde{R}_{t+1}(A_t) | \theta \right], \\
  (II) &= \mathbb{E}\left[ f_{\theta}(a, B_t) - \tilde{R}_{t+1}(a) | \theta \right], \\
  (III) &= \mathbb{E}\left[ \tilde{R}_{t+1}(A_t) - f_{\theta}(A_t, B_t) | \theta \right].
  \end{align*}
\end{proposition}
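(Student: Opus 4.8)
The statement is an exact identity rather than a bound, so the plan is to prove it by the standard ``add-and-subtract'' telescoping: I would insert the imagined-reward values at the comparator $a$ and at the played action $A_t$ into the one-step regret and then regroup by linearity of expectation. First I would rewrite the left-hand side in terms of the mean function: since the observation model gives $Y_{t+1,a,b}=f_\theta(a,b)+\eta_t$ and $f_\theta$ is by definition the conditional mean reward, the tower rule over the noise $\eta_t$ (conditioning on $\theta$, $H_t$, $A_t$, and $B_t$) yields $\mathbb{E}[R_{t+1,a,B_t}\mid\theta]=\mathbb{E}[f_\theta(a,B_t)\mid\theta]$ and likewise $\mathbb{E}[R_{t+1,A_t,B_t}\mid\theta]=\mathbb{E}[f_\theta(A_t,B_t)\mid\theta]$. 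Hence the whole left-hand side collapses to $\mathbb{E}[f_\theta(a,B_t)-f_\theta(A_t,B_t)\mid\theta]$.

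The core computation is then purely mechanical. Into $f_\theta(a,B_t)-f_\theta(A_t,B_t)$ I would insert the two zero-sum pairs $\tilde{R}_{t+1}(a)-\tilde{R}_{t+1}(a)$ and $\tilde{R}_{t+1}(A_t)-\tilde{R}_{t+1}(A_t)$, and then group the six terms exactly as in the statement: $\tilde{R}_{t+1}(a)-\tilde{R}_{t+1}(A_t)$ becomes $(I)$, $f_\theta(a,B_t)-\tilde{R}_{t+1}(a)$ becomes $(II)$, and $\tilde{R}_{t+1}(A_t)-f_\theta(A_t,B_t)$ becomes $(III)$. Taking a single expectation $\mathbb{E}[\,\cdot\mid\theta]$ of the sum and distributing it across the three groups gives $(I)+(II)+(III)$; running the cancellation in reverse (both $\tilde{R}_{t+1}(a)$ and $\tilde{R}_{t+1}(A_t)$ drop out) recovers the collapsed form, which closes the identity.

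Because there are no inequalities, the only genuine care-point---and the step I would treat as the ``main obstacle''---is the conditioning bookkeeping. The imagined reward $\tilde{R}_{t+1}=E(H_{t+1},Z_{t+1})$ depends on the auxiliary randomness $Z_{t+1}$ and on $H_{t+1}$, which already contains $A_t$ and $B_t$; I must therefore keep all three blocks under one common outer expectation over $A_t$, $B_t$, the reward noise, and $Z_{t+1}$, so that the inserted terms cancel as genuine random variables and not merely in a conditional sense. I would also state explicitly the modelling convention that makes the identity exact, namely the unbiasedness $\mathbb{E}[R_{t+1,a,B_t}\mid\theta,B_t]=f_\theta(a,B_t)$ of the transformed reward; under it the decomposition is the familiar one, with $(I)$ the surrogate regret against the imagined rewards (later controlled by the no-regret update $g_t$), and $(II)$, $(III)$ the optimism and estimation-error terms at the comparator and the played action.
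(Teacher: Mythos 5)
Your proposal is correct and matches the paper's own proof essentially step for step: the paper likewise first replaces the expected rewards by the mean function values (so the left-hand side becomes $\mathbb{E}[f_\theta(a,B_t)-f_\theta(A_t,B_t)\mid\theta]$) and then performs exactly the same add-and-subtract of $\tilde{R}_{t+1}(a)$ and $\tilde{R}_{t+1}(A_t)$ followed by linearity of expectation. Your extra care about the conditioning bookkeeping and the implicit unbiasedness convention $\mathbb{E}[R_{t+1,a,B_t}\mid\theta]=\mathbb{E}[f_\theta(a,B_t)\mid\theta]$ is sound and, if anything, makes explicit a point the paper glosses over.
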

The aggregation of term $(I)$ essentially quantifies the adversarial regret $\mathcal{R}_{\text{full}}(T, \operatorname{adv}, \tilde{R})$ within a bounded sequence $\tilde{R}_{t+1}$. Through the application of sufficient optimism, term $(II)$ and $(III)$ enables the realization of $\sqrt{T}$-type regret minimization. Thus, the OTN framework facilitates the attainment of sublinear regret, contingent upon the integration of well-designed optimism and no-regret algorithms. We now proceed to examine various algorithms that seamlessly integrate within the \cref{alg:ETN} framework.

The \emph{importance-weighted estimator} (IWE)~\citep{lattimore2020bandit} stands as a cornerstone in the realm of bandit algorithms. When amalgamated with Hedge, it forms the basis of the renowned Exp3 algorithm~\citep{auer2002nonstochastic,kocak2014efficient,neu2015explore}, hereafter referred to as IWE-Hedge. Furthermore, the integration of IWE with Regret Matching (RM) introduces the IWE-RM strategy for bandit games. The nuances of both IWE-Hedge and IWE-RM, alongside their implications for adversarial regret, are meticulously outlined in~\cref{sec:iwe}.

\begin{proposition}[IWE-Hedge (Exp3) Analysis]
\label{prop:IWE-Hedge}
  Engaging in a bandit game utilizing the IWE-Hedge strategy results in 
    $\Re^*(T, \text{IWE-Hedge}) = \mathcal{O}(\sqrt{T|\mathcal{A}|\log|\mathcal{A}|}).$
\end{proposition}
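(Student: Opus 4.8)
The plan is to reduce the worst-case regret to the full-information regret of Hedge run on the imagined reward sequence, and then bound the latter by the classical exponential-weights potential argument. Fix an arbitrary opponent sequence $B_{0:T}$ and aggregate the one-step identity of \cref{prop:regret-decomposition} over $t=0,\dots,T-1$. The importance-weighted estimator $\tilde R_{t+1}(a)=R_{t+1,A_t,B_t}\,\mathbb{1}\{A_t=a\}/P_{X_t}(a)$ is conditionally unbiased for the expected reward, i.e. $\mathbb{E}[\tilde R_{t+1}(a)\mid H_t,B_t]=\mathbb{E}[R_{t+1,a,B_t}\mid\theta]$, so the ``optimism gap'' terms $(II)$ and $(III)$ vanish in expectation. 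The cumulative regret therefore collapses to $\sum_t (I)=\mathbb{E}[\mathcal{R}_{\text{full}}(T,\text{Hedge},\tilde R)]$, the expected full-information adversarial regret of Hedge on the random surrogate sequence. Because every subsequent bound will be uniform in $B_{0:T}$, taking the supremum over $B_{0:T}\in\mathcal{B}^T$ at the end will preserve it.

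Next I would bound $\mathcal{R}_{\text{full}}(T,\text{Hedge},\tilde R)$ pathwise by the standard potential telescoping of $\log\sum_{a}X_{t,a}$. The key is to carry out the analysis with importance-weighted estimates of the induced losses $\ell_t(a)=1-R_{t+1,a,B_t}\in[0,1]$ (equivalently, the negated rewards), whose estimates $\tilde\ell_{t+1}(a)=\ell_t(a)\,\mathbb{1}\{A_t=a\}/P_{X_t}(a)$ are \emph{nonnegative}. The elementary inequality $e^{-x}\le 1-x+\tfrac12 x^2$, valid for every $x\ge 0$, then yields $\mathcal{R}_{\text{full}}(T,\text{Hedge},\tilde R)\le \frac{\log\abs{\actions}}{\eta}+\frac{\eta}{2}\sum_{t=0}^{T-1}\sum_{a\in\mathcal{A}}P_{X_t}(a)\,\tilde\ell_{t+1}(a)^2$. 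Using the loss formulation is exactly what keeps the bound clean: since the one-sided inequality needs no upper bound on $\eta\tilde\ell$, one avoids the uniform-exploration ($\gamma$-mixing) term that a gain-based argument would incur, which is what permits the optimal $\sqrt{\cdot}$ rate.

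It then remains to control the expected second-moment term. Since $\tilde\ell_{t+1}$ is supported on the single coordinate $A_t$, one has $\sum_a P_{X_t}(a)\tilde\ell_{t+1}(a)^2=\ell_t(A_t)^2/P_{X_t}(A_t)$, and taking the conditional expectation over $A_t\sim P_{X_t}$ gives $\sum_a \mathbb{E}[\ell_t(a)^2]\le\abs{\actions}$ because $\mathcal{R}$ takes values in $[0,1]$. Summing over $t$ bounds the contribution by $\tfrac{\eta}{2}T\abs{\actions}$, so the expected regret is at most $\frac{\log\abs{\actions}}{\eta}+\frac{\eta}{2}T\abs{\actions}$; choosing $\eta=\sqrt{2\log\abs{\actions}/(T\abs{\actions})}$ produces $\mathcal{O}(\sqrt{T\abs{\actions}\log\abs{\actions}})$, and the supremum over $B_{0:T}$ delivers the claimed $\mathcal{R}^*(T,\text{IWE-Hedge})=\mathcal{O}(\sqrt{T\abs{\actions}\log\abs{\actions}})$.

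I expect the main care to lie in two places rather than in any deep difficulty. First, the cancellation of $(II)$ and $(III)$ requires identifying the IWE target $\mathbb{E}[R_{t+1,a,B_t}\mid\theta]$ with the quantity $f_\theta(a,B_t)$ appearing in the decomposition, i.e. treating $f_\theta$ as the expected reward (or assuming $\mathcal{R}$ is unbiased for it); I would state this identification explicitly at the outset. Second, the crux of the potential argument is sidestepping the boundedness condition $\eta\tilde R_{t+1}(a)\le 1$, which genuinely fails because the estimate can be as large as $1/P_{X_t}(a)$; routing the analysis through the nonnegative loss estimates and the one-sided inequality is precisely what secures the optimal bound without an exploration-induced penalty.
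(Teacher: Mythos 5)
Your steps 2--3 are the standard loss-based Exp3 potential argument and are sound; the genuine gap is in step 1, where you claim that term $(III)$ of \cref{prop:regret-decomposition} vanishes in expectation under IWE. It does not. Coordinate-wise unbiasedness gives $\mathbb{E}[\tilde R_{t+1}(a)\mid H_t,\theta]=\mathbb{E}[f_{\theta}(a,B_t)\mid H_t,\theta]$ only for each \emph{fixed} $a$; term $(III)$ instead evaluates the estimator at the random index $A_t$ from which it was built, and this correlation destroys unbiasedness. Concretely, with your gain-based estimator $\tilde R_{t+1}(a)=R_{t+1,A_t,B_t}\,\indict_{A_t=a}/P_{X_t}(a)$, using that $A_t$ is conditionally independent of $(B_t,\eta_t)$ given $H_t$,
\begin{align*}
\mathbb{E}\bigl[\tilde R_{t+1}(A_t)\mid H_t,\theta\bigr]
=\sum_{a\in\mathcal{A}} P_{X_t}(a)\,\frac{\mathbb{E}[f_{\theta}(a,B_t)\mid H_t,\theta]}{P_{X_t}(a)}
=\sum_{a\in\mathcal{A}}\mathbb{E}[f_{\theta}(a,B_t)\mid H_t,\theta],
\end{align*}
whereas $\mathbb{E}[f_{\theta}(A_t,B_t)\mid H_t,\theta]=\sum_{a}P_{X_t}(a)\,\mathbb{E}[f_{\theta}(a,B_t)\mid H_t,\theta]$; the difference is of order $|\mathcal{A}|$ per round. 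Consequently the regret does not ``collapse to $\sum_t(I)$,'' and the potential bound you invoke in step 2 is not valid for the $r_t(A_t)$-based surrogate regret: the exponential-weights telescoping controls $\sum_t\langle P_{X_t},\tilde\ell_{t+1}\rangle-\sum_t\tilde\ell_{t+1}(a)$, not $\sum_t\tilde\ell_{t+1}(A_t)-\sum_t\tilde\ell_{t+1}(a)$, and for importance-weighted estimates these two quantities differ by $\Theta(|\mathcal{A}|)$ per round.

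The fix is standard and your remaining machinery already contains it: replace $\tilde R_{t+1}(A_t)$ by $\langle P_{X_t},\tilde R_{t+1}\rangle$ throughout the decomposition. Then the modified $(III)$ does vanish, since $\mathbb{E}[\langle P_{X_t},\tilde R_{t+1}\rangle\mid H_t,\theta]=\sum_a P_{X_t}(a)\,\mathbb{E}[f_{\theta}(a,B_t)\mid H_t,\theta]=\mathbb{E}[f_{\theta}(A_t,B_t)\mid H_t,\theta]$ (using $H_t$-measurability of $X_t$), $(II)$ vanishes by unbiasedness as before, and the modified $(I)$ is exactly what your pathwise potential bound controls; your second-moment computation $\mathbb{E}[\langle P_{X_t},\tilde\ell_{t+1}^{\,2}\rangle\mid H_t]\le|\mathcal{A}|$ and the tuning of $\eta$ then finish the proof. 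It is worth noting that the paper is alert to precisely this pitfall: its general OTN analysis imposes \cref{asmp:res_imagine_R}, which explicitly requires the imagined reward to have no dependence on $A_t$ --- a condition IWE violates --- and accordingly the paper never routes IWE through \cref{prop:regret-decomposition}. Instead it gives a bespoke Blackwell-type argument for IWE-RM (\cref{thm:iwe-rm}) and, for IWE-Hedge, only constructs the loss-based unbiased estimator in \cref{sec:iwe} and appeals to the classical Exp3 analysis, which handles the $A_t$-dependence in the correct $\langle P_{X_t},\cdot\rangle$ form.
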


The adaptation of Regret Matching (RM)~\citep{hart2000simple} through the lens of IWE under bandit feedback culminates in the innovative IWE-RM algorithm.
\begin{theorem}[Analysis of IWE-RM]
\label{thm:iwe-rm}
  Implementing the IWE-RM approach in bandit gameplay yields $\Re^*(T, \text{IWE-RM}) = \mathcal{O}(T^{2/3} |\mathcal{A}|^{2/3}).$
\end{theorem}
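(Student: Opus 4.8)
The plan is to instantiate the OTN template of \cref{alg:ETN} with RM as the no-regret update and the importance-weighted estimator as the (here merely unbiased) reward construction, but crucially with an explicit uniform-exploration floor. At round $t$ Alice samples $A_t$ from the mixed distribution $\tilde P_t := (1-\gamma)P_{X_t} + \gamma\,\mathrm{Unif}(\mathcal{A})$ and feeds RM the one-hot estimate $\tilde R_{t+1}(a) = R_{t+1,A_t,B_t}\,\mathds{1}\{a = A_t\}/\tilde P_t(A_t)$. The floor $\tilde P_t(a)\ge\gamma/|\mathcal{A}|$ is essential here: unlike Hedge, RM may assign zero mass to actions with non-positive cumulative regret, which would make the importance weight $1/\tilde P_t(A_t)$ blow up. I would first peel off the exploration cost. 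Since $\mathbb{E}[f_\theta(A_t,B_t)\mid H_t, B_t] = \langle\tilde P_t, f_\theta(\cdot,B_t)\rangle\ge(1-\gamma)\langle P_{X_t}, f_\theta(\cdot,B_t)\rangle$, the one-step regret against the fixed in-hindsight optimum $A^*$ obeys $f_\theta(A^*,B_t) - \langle\tilde P_t,f_\theta\rangle\le\big[f_\theta(A^*,B_t) - \langle P_{X_t},f_\theta\rangle\big] + \gamma$, so summation yields an $O(\gamma T)$ penalty plus the expected regret of RM's own distribution $P_{X_t}$ against the true mean rewards.

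Next I would cash in unbiasedness. Conditioning on $(H_t,B_t)$ gives $\mathbb{E}[\tilde R_{t+1}(a)] = f_\theta(a,B_t)$ for all $a$, and since $P_{X_t}$ and $A^*$ are $(H_t,B_t)$-measurable the leftover term becomes $\mathbb{E}\big[\sum_t(\tilde R_{t+1}(A^*) - \langle P_{X_t},\tilde R_{t+1}\rangle)\big]$, i.e.\ term $(I)$ of \cref{prop:regret-decomposition} in expected (Blackwell) form. The choice of baseline is the subtle point: RM must be analyzed on the \emph{expected} instantaneous regret $\rho_t(a) := \tilde R_{t+1}(a) - \langle P_{X_t},\tilde R_{t+1}\rangle$, whose baseline $\langle P_{X_t},\tilde R_{t+1}\rangle\le R_{t+1}/(1-\gamma)$ stays bounded, rather than on the realized regret $\tilde R_{t+1}(a) - \tilde R_{t+1}(A_t)$, whose subtracted value $\tilde R_{t+1}(A_t) = R_{t+1}/\tilde P_t(A_t)$ is inflated by the importance weight. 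With this baseline the classical Blackwell/RM guarantee applies pathwise: $\langle P_{X_t},\rho_t\rangle = 0$ and $P_{X_t}\propto(\sum_{s<t}\rho_s)^{+}$ make the cross term $\langle(\sum_{s<t}\rho_s)^{+},\rho_t\rangle$ vanish, so the potential obeys $\|(\sum_{s\le t}\rho_s)^{+}\|_2^2\le\sum_{s\le t}\|\rho_s\|_2^2$ and $\max_a\sum_t\rho_t(a)\le\sqrt{\sum_t\|\rho_t\|_2^2}$; Jensen then bounds term $(I)$ by $\sqrt{\sum_t\mathbb{E}\|\rho_t\|_2^2}$.

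The crux, and the step I expect to dominate the effort, is the second-moment estimate $\mathbb{E}\|\rho_t\|_2^2 = O(|\mathcal{A}|^2/\gamma)$. Writing $p_a = P_{X_t}(a)$, a direct computation gives $\|\rho_t\|_2^2 = \frac{R_{t+1}^2}{\tilde P_t(A_t)^2}\big[(1-p_{A_t})^2 + (|\mathcal{A}|-1)p_{A_t}^2\big]$, and taking the conditional expectation over $A_t\sim\tilde P_t$ (using $R_{t+1}\in[0,1]$) yields $\mathbb{E}\|\rho_t\|_2^2\le\sum_a\frac{1}{\tilde P_t(a)}\big[(1-p_a)^2 + (|\mathcal{A}|-1)p_a^2\big]$. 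I would bound the two pieces separately: the term carrying $(|\mathcal{A}|-1)p_a^2$ uses $\tilde P_t(a)\ge(1-\gamma)p_a$ to get $\sum_a\frac{(|\mathcal{A}|-1)p_a^2}{\tilde P_t(a)}\le\frac{|\mathcal{A}|-1}{1-\gamma} = O(|\mathcal{A}|)$, while the term carrying $(1-p_a)^2\le1$ uses the exploration floor to get $\sum_a\frac{1}{\tilde P_t(a)}\le|\mathcal{A}|\cdot\frac{|\mathcal{A}|}{\gamma} = |\mathcal{A}|^2/\gamma$. It is precisely the $P_{X_t}$-weighted baseline that lets the dangerous coordinates be down-weighted by $p_a^2$, saving one factor of $|\mathcal{A}|$; the realized-regret version would instead give $\mathbb{E}\|\rho_t\|_2^2 = O(|\mathcal{A}|^3/\gamma)$ and only the weaker rate $|\mathcal{A}|T^{2/3}$. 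Verifying this $|\mathcal{A}|^2$-versus-$|\mathcal{A}|^3$ saving, together with the measurability bookkeeping that lets me invoke unbiasedness even though $A_t$ also drives the estimator, is where I expect the real work to lie.

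Finally I would assemble the pieces: the total regret is $O(\gamma T) + \sqrt{T\cdot O(|\mathcal{A}|^2/\gamma)} = O\big(\gamma T + |\mathcal{A}|\sqrt{T/\gamma}\big)$. Balancing the two terms with $\gamma\asymp|\mathcal{A}|^{2/3}T^{-1/3}$ equalizes them at $O(|\mathcal{A}|^{2/3}T^{2/3})$, giving $\Re^*(T,\text{IWE-RM}) = \mathcal{O}(T^{2/3}|\mathcal{A}|^{2/3})$ as claimed. I would close by noting that the resulting $T^{1/6}|\mathcal{A}|^{1/6}$ gap over the full-information RM rate $\mathcal{O}(\sqrt{T|\mathcal{A}|})$ is structural: RM's quadratic potential weights all coordinates equally and cannot discount the high-variance low-probability actions the way Hedge's entropic potential does, which forces the explicit exploration and degrades the horizon exponent from $1/2$ to $2/3$, matching the IWE-RM entry of \cref{tab:regret}.
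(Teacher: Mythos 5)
Your proposal is correct and takes essentially the same approach as the paper's own proof: your mixed sampling distribution $\tilde P_t$ and RM distribution $P_{X_t}$ are precisely the paper's $X_t$ and $\hat X_t$, your $\rho_t$ is the paper's importance-weighted regret estimator $\tilde{\Re}_t$ with the same $\hat X_t$-weighted baseline, and your sequence of steps---the Blackwell orthogonality $\langle(\sum_{s<t}\rho_s)^+,\rho_t\rangle=0$, the quadratic potential bound followed by Jensen, the $\mathcal{O}(|\mathcal{A}|^2/\gamma)$ conditional second-moment estimate, and the $\mathcal{O}(\gamma T)$ exploration bias balanced at $\gamma\asymp|\mathcal{A}|^{2/3}T^{-1/3}$---mirrors the paper's Lemma~\ref{lem:blackwell}, Lemma~\ref{lem:regret-bound}, and Steps 1--3 of the proof of Theorem~\ref{thm:iwe-rm}. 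The only cosmetic difference is that you assume $R_{t+1}\in[0,1]$ where the paper bounds $\mathbb{E}[R^2]\le 1+\sigma_w^2$ for noisy rewards, which does not affect the rate.
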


\begin{remark}
    The derivation of IWE-RM represents a novel contribution, previously unexplored in the literature. Detailed justification is available in Section~\ref{sec:iwe}.
\end{remark}

Crucially, while IWE hinges on bandit reward feedback, it does not inherently leverage opponent action information. By incorporating opponent action insights and reward structure knowledge $f_{\theta}$, we can refine our estimation strategies. Specifically, $f_{\theta}(a, b)$ for all pairs $(a, b) \in \mathcal{A} \times \mathcal{B}$ is approximated by $\tilde{f}_{t+1}(a, b)$, facilitating the construction of an imagined reward $\tilde{R}_{t+1}(a)$. Adopting a Gaussian distribution as the prior enhances the precision of mean $\mu_t(a,b)$ and variance $\sigma_t^2(a,b)$ estimations over time. The articulation of these estimation processes, alongside strategies for balancing exploration and exploitation, is detailed in \cref{app:update}.

In the stochastic bandit landscape, the Upper Confidence Bound (UCB)~\citep{auer2002finite} and Thompson Sampling (TS)~\citep{thompson1933likelihood} emerge as pivotal methods for instilling optimism in algorithmic choices. These methods are adeptly tailored to bandit games, with specific constructions for UCB and TS elucidated below, employing a parameter $\beta_t=\sqrt{2\log{|\mathcal{A}|\sqrt{t}}}$:
\begin{align*}
\begin{cases}
  \text{UCB:} & \tilde{f}_{t+1}(a, B_t) | H_{t+1} = \mu_{t}(a, B_t) + \beta_t \sigma_{t}(a, B_t),\\
               & \tilde{R}_{t+1}(a) = \min(\tilde{f}_{t+1}(a, B_t), 1), \forall a \in \mathcal{A}. \\
  \text{TS:}   & \tilde{f}_{t+1}(a, B_t) | H_{t+1} \sim \mathcal{N}(\mu_{t}(a, B_t), \sigma_{t}^2(a, B_t)),\\
               & \tilde{R}_{t+1}(a) = \min(\tilde{f}_{t+1}(a, B_t), 1), \forall a \in \mathcal{A}.
\end{cases}
\end{align*}
The synergy of UCB with Hedge, and its integration within the OTN framework, underscores a pathway to sublinear regret, as substantiated through our analytical endeavors (details in \cref{sec:analysis}). The comparative analysis of UCB-Hedge and UCB-RM, delineated below, highlights the efficacy of these approaches:
\begin{theorem}[Efficacy of UCB-Hedge and UCB-RM]
\label{thm:ucb_banditgame}
  Application of UCB-Hedge or UCB-RM in the context of bandit feedback games ensures 
    $\Re^*(T, \text{UCB-Hedge}) = \mathcal{O}(\sqrt{T \log |\mathcal{A}|} + \sqrt{\gamma_T\beta T}),$
    and similarly,
    $\Re^*(T, \text{UCB-RM}) = \mathcal{O}(\sqrt{T|\mathcal{A}|} + \sqrt{\gamma_T\beta T}).$
\end{theorem}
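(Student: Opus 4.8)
The plan is to sum the one-step regret decomposition of \cref{prop:regret-decomposition} over $t = 0,\dots,T-1$, specialize the comparator to $a = A^*$, and bound the three aggregate terms separately. The term $(I)$ aggregates to exactly the full-information adversarial regret $\mathcal{R}_{\text{full}}(T, \text{adv}, \tilde{R})$ of the no-regret subroutine run on the clipped surrogate sequence $\tilde{R}_{t+1}\in[0,1]^{\mathcal{A}}$; since the $\min(\cdot,1)$ clipping keeps the imagined rewards bounded in $[0,1]$, I would directly invoke the full-information guarantees $\mathcal{R}_{\text{full}}(T,\text{Hedge}) = \mathcal{O}(\sqrt{T\log|\mathcal{A}|})$ and $\mathcal{R}_{\text{full}}(T,\text{RM}) = \mathcal{O}(\sqrt{T|\mathcal{A}|})$, which supplies the first summand of each stated bound. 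The only point to verify here is that these bounds apply to the adaptively generated sequence $\tilde{R}$: Hedge and RM are deterministic update rules whose regret holds against adaptively chosen bounded reward vectors, so feeding them $\tilde{R}_{t+1}$ is legitimate.

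The heart of the argument is controlling $(II)$ and $(III)$ via optimism. I would first define the good event $\mathcal{G}$ on which the confidence bounds $|\mu_t(a,b) - f_\theta(a,b)| \le \beta_t\sigma_t(a,b)$ hold simultaneously for all $(a,b)\in\mathcal{A}\times\mathcal{B}$ and all $t\le T$. Showing that $\mathbb{P}(\mathcal{G})$ is close to one for the prescribed $\beta_t = \sqrt{2\log(|\mathcal{A}|\sqrt{t})}$ is the core estimation step: it follows from a self-normalized martingale concentration inequality for the Gaussian posterior, and crucially it is insensitive to \emph{how} the inputs $(A_t,B_t)$ are chosen, so Bob's adversarial and adaptive selection of $B_t$ causes no difficulty, since $\mu_t$ and $\sigma_t$ are functions of the observed data alone. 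On $\mathcal{G}$ the UCB value over-estimates the mean, $\tilde{f}_{t+1}(a,B_t) \ge f_\theta(a,B_t)$; combined with $f_\theta\le 1$ (as $f_\theta$ is the mean of a reward in $[0,1]$) this yields $\tilde{R}_{t+1}(a) = \min(\tilde{f}_{t+1}(a,B_t),1) \ge f_\theta(a,B_t)$, hence $(II)\le 0$ for \emph{every} comparator $a$, and in particular for $A^*$. This is precisely the role of optimism: it eliminates the comparator term.

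For $(III)$ I would bound, on $\mathcal{G}$, $\tilde{R}_{t+1}(A_t) - f_\theta(A_t,B_t) \le \tilde{f}_{t+1}(A_t,B_t) - f_\theta(A_t,B_t) \le 2\beta_t\sigma_t(A_t,B_t)$, using the confidence width at the played pair. Summing and applying Cauchy--Schwarz gives $\sum_t \sigma_t(A_t,B_t) \le \sqrt{T\sum_t \sigma_t^2(A_t,B_t)}$, and the summed posterior variances at the queried points are controlled by the maximum information gain, $\sum_t \sigma_t^2(A_t,B_t) = \mathcal{O}(\gamma_T)$ (\cref{def:max_infor_gain}). Since $\beta_t^2 = \Theta(\log(|\mathcal{A}|T)) = \Theta(\beta)$, this produces $\sum_t 2\beta_t\sigma_t(A_t,B_t) = \mathcal{O}(\beta_T\sqrt{T\gamma_T}) = \mathcal{O}(\sqrt{\gamma_T\beta T})$, the second summand in both bounds.

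Finally I would account for the complementary event $\mathcal{G}^c$: because all rewards lie in $[0,1]$, the per-step regret is at most $1$, so the contribution of $\mathcal{G}^c$ to the expected regret is at most $T\cdot\mathbb{P}(\mathcal{G}^c)$, which the choice of $\beta_t$ renders lower order. Collecting the three pieces plus this remainder yields $\mathcal{O}(\sqrt{T\log|\mathcal{A}|} + \sqrt{\gamma_T\beta T})$ for UCB-Hedge and $\mathcal{O}(\sqrt{T|\mathcal{A}|} + \sqrt{\gamma_T\beta T})$ for UCB-RM. The main obstacle I anticipate is the uniform confidence-bound step: establishing the self-normalized concentration with the exact $\beta_t$ uniformly over the joint input space $\mathcal{A}\times\mathcal{B}$ and over $t$, while keeping $\mathbb{P}(\mathcal{G}^c)$ small, together with the information-gain bound that converts the summed predictive variances into the $\gamma_T$ factor. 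These two steps are where the reward-structure assumptions genuinely enter, and they distinguish the structured bound from the bandit-only rates.
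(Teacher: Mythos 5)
Your proposal follows the same architecture as the paper's proof: sum the decomposition of \cref{prop:regret-decomposition}, bound the aggregate of $(I)$ by the full-information guarantees of Hedge and RM applied to the clipped surrogate sequence, use optimism to make $(II)$ nonpositive, and convert the summed widths in $(III)$ into $\sqrt{\gamma_T\beta T}$ via Cauchy--Schwarz and an information-gain argument. The genuine gap is in the step you yourself flag as the main obstacle. The paper's concentration argument (\cref{lem:cb} in \cref{sec:analysis_details}) is Bayesian: conditioned on $H_t$, the value $f_{\theta}(a,b)$ is \emph{exactly} Gaussian $N(\mu_t(a,b),\sigma_t^2(a,b))$, so the confidence event follows from the elementary Gaussian tail bound of \cref{fact:gaussian_chernoff} plus a union bound over $\mathcal{A}$ only, with the adversarially chosen $B_t$ handled by the conditional independence of $B_t$ and $f_{\theta}$ given $H_t$ --- no martingale machinery is needed or used. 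You instead invoke a self-normalized martingale concentration inequality, which is the frequentist device of GP-MW \citep{sessa2019no}; in the kernelized setting that tool forces a width multiplier scaling like $\sqrt{\gamma_t+\log(1/\delta)}$ rather than $\sqrt{2\log(|\mathcal{A}|\sqrt{t})}$, and the resulting bound degrades to order $\gamma_T\sqrt{T}$. That is exactly the $\sqrt{\gamma_T}$ difference the paper's remark attributes to its Bayesian analysis, so the concentration step as you envision it cannot deliver the theorem's stated rate.

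There is also a quantitative slip in your failure-event accounting: with $\beta_t=\sqrt{2\log(|\mathcal{A}|\sqrt{t})}$, your single horizon-wide good event $\mathcal{G}$ satisfies only $\prob(\mathcal{G}^c)\le\sum_{t\le T}2|\mathcal{A}|\exp(-\beta_t^2/2)=\sum_{t\le T}2/\sqrt{t}\approx 4\sqrt{T}$, which exceeds one, so charging $T\cdot\prob(\mathcal{G}^c)$ does not give a lower-order remainder. The paper avoids this by per-round bookkeeping (\cref{prop:general-regret-bound}): a confidence or optimism failure at round $t$ costs at most the bounded one-step regret, so the total failure contribution is $\sum_{t=1}^{T}\mathcal{O}(1/\sqrt{t})=\mathcal{O}(\sqrt{T})$, and this is what makes the small per-round $\beta_t$ admissible. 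Your remaining steps --- nonpositivity of $(II)$ under optimism with clipping, the width bound $2\beta_t\sigma_t(A_t,B_t)$ for $(III)$, and $\sum_t\sigma_t^2(A_t,B_t)=\mathcal{O}(\gamma_T)$ --- are sound and coincide with the paper's, which obtains the last fact through the Gaussian mutual-information identity and the chain rule $\sum_t I_t(\theta;Z_t)=I(\theta;H_T)\le\gamma_T$.
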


\begin{remark}
    The analytical framework for UCB-Hedge adopts a Bayesian perspective, contrasting with the frequentist approach taken in existing studies~\cite{sessa2019no}. This methodological divergence yields a $\sqrt{\gamma_T}$ improvement in our regret bounds.
\end{remark}
\subsection{Challenges with TS}\label{sec:divergence}
We explore the inefficacy of integrating Thompson Sampling (TS) with Regret Matching (RM) in certain bandit game contexts through a demonstrative counterexample.

\begin{example}[Best Response Player]\label{exp:matrix_game}
Consider matrix games characterized by a payoff matrix $\theta$:
  \begin{equation}\label{eq:matrix_game}
    \theta = \begin{bmatrix}
      1 & 1 - \Delta \\
      1 - \Delta & 1
    \end{bmatrix},
  \end{equation}
with $\Delta \in (0,1)$. In each round $t$, Alice selects her action $A_t \sim P_{X_t}$, while Bob, employing a best response strategy, chooses $B_t \sim P_{Y_t}$, aiming to maximize his payoff against Alice's choice. The reward for Alice, $R_{t+1, A_t, B_t} = \theta_{A_t, B_t}$, is determined without noise.
\end{example}

\begin{figure}[htbp]
  \centering
  \includegraphics[width=.95\linewidth]{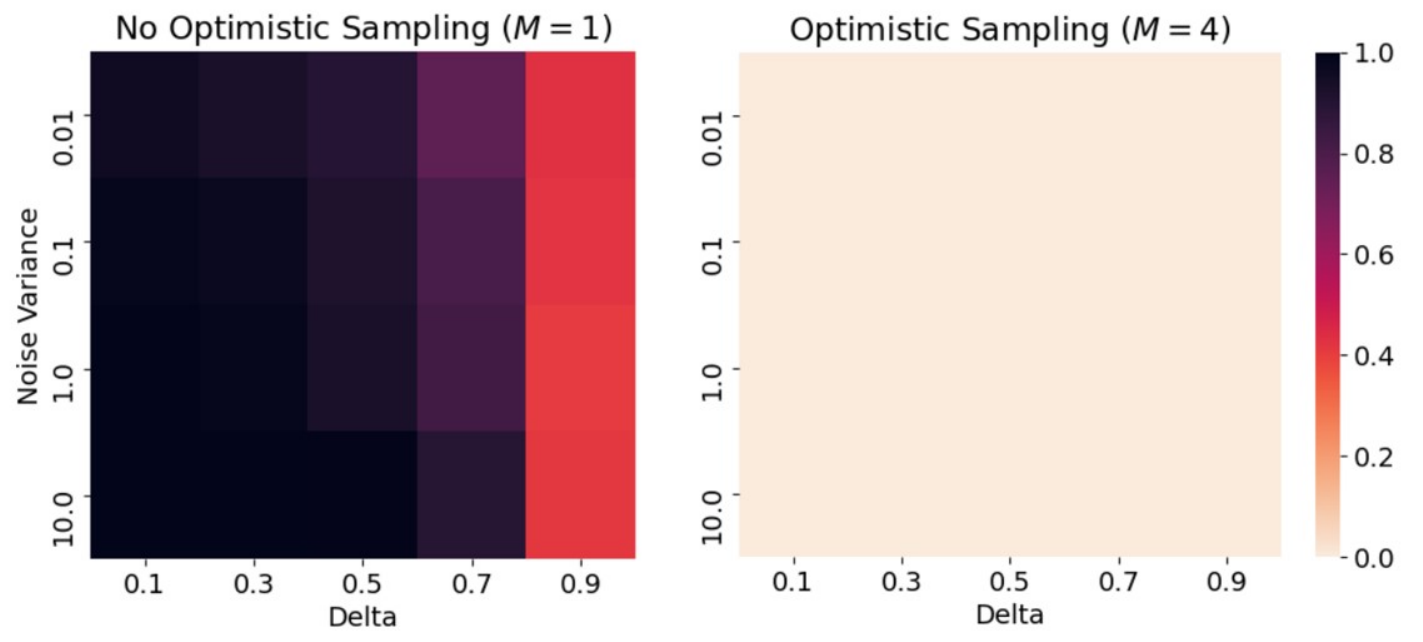}
  \caption{Illustration of failure probabilities for TS-RM versus OTS-RM strategies.} 
  \label{fig:counter}
\end{figure}

This setup reveals Bob's ability to exploit Alice if she adheres to a static strategy, leading to a regret increment proportional to $\Delta$. Define $\Omega_t$ as the scenario where Alice consistently chooses the second row, and Bob the first column, across all iterations. Given Alice's strategy begins uniformly, $\Re(T) \geq 2 \prob(\Omega_T) \Delta \cdot T$, suggesting potential for linear regret if $\Omega_t$ persists with consistent probability.

\begin{proposition}[Limitations of TS-RM]\label{prop:tsrm}
When Alice employs a uniform strategy and integrates TS with RM, for any $\Delta \in (0, 1)$ and noise variance $\sigma_n > 0$, there exists a constant $c(\Delta, \sigma_n) > 0$ ensuring $\prob(\Omega_t) \geq c(\Delta, \sigma_n)$ for all $t \geq 1$, highlighting sustained linear regret.
\end{proposition}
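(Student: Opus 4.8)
The plan is to reduce the event $\Omega_t$ to the event that a real-valued random walk started at the origin never becomes positive, and then to invoke the classical fact that such a walk with strictly negative drift stays non-positive for all time with positive probability. Since $\Omega_{t+1}\subseteq\Omega_t$, it suffices to lower bound the all-time event: writing $\Omega_\infty=\bigcap_{t\ge1}\Omega_t$ we have $\prob(\Omega_t)\ge\prob(\Omega_\infty)$ for every $t$, so exhibiting a single constant $c(\Delta,\sigma_n)=\prob(\Omega_\infty)>0$ establishes the claim uniformly in $t$.

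First I would pin down the learning dynamics on $\Omega_t$. On this event Alice only ever plays the second row and Bob the first column, so the only cell ever sampled is $(2,1)$, whose posterior mean converges to $1-\Delta$ and whose posterior variance $\sigma_t^2(2,1)\to0$; every other cell, and in particular $(1,1)$, retains its prior $\mathcal{N}(\mu_0,\sigma_0^2)$. Consequently, when Bob plays column $1$, the TS imagined rewards are $\tilde R_{t+1}(2)=\min(\tilde f_{t+1}(2,1),1)\to1-\Delta$ (essentially deterministic) while $\tilde R_{t+1}(1)=\min(G_t,1)$ with $G_t\sim\mathcal{N}(\mu_0,\sigma_0^2)$ drawn afresh each round. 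For the two-action regret-matching update one checks that, conditioned on staying in $\Omega$, the cumulative regret of the unplayed row is $S_t(1)=\sum_{s<t}\delta_s$ with $\delta_s=\tilde R_{s+1}(1)-\tilde R_{s+1}(2)$, whereas $S_t(2)=0$; hence Alice keeps playing the second row exactly as long as $S_t(1)\le0$, and Bob, best-responding to a distribution still weakly favouring row $2$, keeps selecting column $1$. This identifies $\Omega_\infty$ with the event $\{\,\sum_{s<t}\delta_s\le0\ \text{for all } t\ge1\,\}$, up to the initial uniform step, which contributes only a harmless factor of $\tfrac12$.

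Next I would analyse the random walk $\Sigma_t=\sum_{s<t}\delta_s$. In the stationary regime $\delta_s\to\min(G_s,1)-(1-\Delta)$, and since truncation strictly lowers the mean, $\mathbb{E}[\min(G_s,1)]\le\mu_0$; for the natural prior mean ($\mu_0\le0$, say) this gives a strictly negative drift $\mathbb{E}[\delta_s]<-(1-\Delta)<0$ for every $\Delta\in(0,1)$. As $\min(G_s,1)$ is bounded above and sub-Gaussian below, the increments have finite exponential moments, so $\Sigma_t\to-\infty$ almost surely and the walk has only finitely many strict ascending ladder epochs; standard random-walk theory then yields $\prob(\Sigma_t\le0\ \forall t\ge1)>0$. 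Combining this with the reduction gives $\prob(\Omega_\infty)>0$, and hence the desired uniform lower bound. The contrast with OTS is exactly that its optimistic shift turns this drift positive, forcing the walk to cross $0$ and escape the trap.

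The main obstacle is making the reduction of the second paragraph rigorous rather than merely asymptotic, and two points need care. First, the increments $\delta_s$ are not exactly i.i.d.: $\tilde R_{s+1}(2)$ depends on the evolving posterior of cell $(2,1)$, so early terms carry additional but bounded and vanishing randomness from $\sigma_s(2,1)>0$; I would control this by conditioning on the observation noise, using $\tilde R_{s+1}(2)\le1$ together with $\mu_s(2,1)\to1-\Delta$ to dominate the transient by an honest negative-drift walk. Second, regret matching is a pure strategy only when the vector of positive regrets is nonzero, and at the knife-edge $S_t(1)\le0=S_t(2)$ the update is degenerate, so the argument hinges on the tie-breaking convention keeping Alice on her current row; I would adopt the sticky (previous-action) convention, or absorb the ties into the constant, and verify that Bob's best response stays at column $1$ whenever Alice weakly favours row $2$. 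The symmetric trap with rows and columns interchanged, which has equal probability, accounts for the factor $2$ in the regret lower bound $\Re(T)\ge2\,\prob(\Omega_T)\,\Delta\,T$.
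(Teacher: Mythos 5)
Your proof follows the paper's reduction essentially verbatim: on the trap event only cell $(2,1)$ is ever sampled, so its posterior mean is the deterministic sequence $\tfrac{t}{t+\sigma_n^2}(1-\Delta)$ while row $1$ keeps drawing fresh prior samples, and regret matching keeps Alice on row $2$ (with Bob best-responding on column $1$) exactly while the cumulative imagined regret of row $1$ stays non-positive; the uniform first step is absorbed into a constant in both arguments. Where you genuinely diverge is the concluding probabilistic step. The paper never leaves the Gaussian world: since all randomness is independent Gaussian, the partial sum $N_t=\sum_{k=2}^t m_k$ is itself Gaussian with mean $\approx -(1-\Delta)t$ and standard deviation $\approx \sqrt{t}$, so after decoupling the conditioning on $\Omega_{t-1}$ it bounds $\prob(\Omega_t)\ge \prod_{s\le t}\Phi\bigl(f_s(\Delta,\sigma_n)\bigr)$ with $f_s \sim (1-\Delta)\sqrt{s}$ and shows the infinite product converges via Mills-ratio tail bounds, which even yields explicit constants (e.g.\ $c=0.54$ at $\Delta=\sigma_n^2=0.1$). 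You instead invoke fluctuation theory for negative-drift random walks (finitely many ascending ladder epochs, hence positive probability of never crossing zero), which is more robust to the exact increment distribution but obliges you to handle non-stationarity by domination. That plan is sound, but note two things. First, the specific inequality you cite points the wrong way: $\tilde R_{s+1}(2)\le 1$ lower-bounds $\delta_s$, whereas keeping the walk down requires a \emph{lower} bound on the subtracted term $\tilde R_{s+1}(2)$; the correct domination uses that $\mu_s(2,1)$ is increasing and $\sigma_s(2,1)$ decreasing, so beyond a finite burn-in $T_0(\Delta,\sigma_n)$ the increments are dominated by i.i.d.\ variables of strictly negative mean, and the finitely many early increments can be forced negative by hand at positive-probability cost (a genuine issue when $\sigma_n$ is large, since the crude bound does not give negative drift at small $s$). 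Second, your concern about RM tie-breaking is unnecessary: on the trap event the cumulative regret of row $2$ equals $-0.5\,m_1>0$ strictly, so the positive-regret vector is nonzero and RM deterministically selects row $2$, which is precisely why the paper tracks the $\pm 0.5\,m_1$ terms.
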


\subsection{Mitigating Strategy: Optimistic Sampling}\label{sec:optimistic_sampling}
Optimistic sampling, leveraging multiple posterior samples, emerges as a strategic remedy to enhance the estimator's optimism. Upon observing $H_{t+1} = (H_t, A_t, B_t, R_{t+1, A_t, B_t})$, optimistic sampling involves generating $M_{t+1}$ independent normal samples to construct an optimistic estimator for each action $a \in \mathcal{A}$:
\begin{align*}
    \tilde{f}^{\operatorname{TS}, j}_{t+1}(a, B_t) &\sim \mathcal{N}(\mu_{t}(a, B_t), \sigma_{t}(a, B_t)), \forall j \in [M_{t+1}], \\
    \tilde{f}^{\operatorname{OTS}}_{t+1}(a, B_t) &= \max_{j \in [M_{t+1}]} \tilde{f}^{\operatorname{TS}, j}_{t+1}(a, B_t), \\
    \tilde{R}^{\operatorname{OTS}}_{t+1}(a) &= \text{clip}_{[0, 1]}\left( \tilde{f}_{t+1}^{\operatorname{OTS}}(a, B_t) \right).
\end{align*}

Reflecting on \cref{exp:matrix_game}, implementing OTS alters the probability dynamics favorably over time, suggesting a reduction in the likelihood of $\Omega_t$. This adjustment confirms OTS's efficacy in countering the limitations of TS in adversarial settings.

\begin{theorem}[Advantages of OTS]\label{thm:ots_advantages}
Incorporating OTS alongside any full-information adversarial bandit strategy $\operatorname{adv}$ yields an enhanced regret bound:
  \begin{align*}
    \Re^*(T, \pi, \theta) \leq \Re_{\operatorname{full}}(T, \operatorname{adv}, \tilde{R}^{\operatorname{OTS}}) + \sqrt{\log(|\mathcal{A}|T) I(\theta; H_T) T},
  \end{align*}
underscoring the strategic benefit of integrating optimism into the estimation process.
\end{theorem}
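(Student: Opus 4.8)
The plan is to build directly on the one-step decomposition in \cref{prop:regret-decomposition}. Fix any opponent sequence $B_{0:T}$, instantiate the decomposition at the surrogate rewards $\tilde{R}^{\operatorname{OTS}}_{t+1}$, set $a = A^*$, and sum over $t = 0,\dots,T-1$ to obtain
\[
\Re(T, \pi, B_{0:T}, \theta) = \sum_{t=0}^{T-1}\big[(I)_t + (II)_t + (III)_t\big].
\]
I would then bound each family separately: the $(I)$ terms are absorbed into the full-information regret of $\operatorname{adv}$, while $(II)$ and $(III)$ are jointly controlled by optimism and collapse into the information term. Taking the supremum over $B_{0:T}$ at the end, together with the prior expectation over $\theta$ at which the mutual information is defined, yields the worst-case statement.

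For the $(I)$ terms, observe that $\sum_t (I)_t = \sum_t \mathbb{E}[\tilde{R}^{\operatorname{OTS}}_{t+1}(A^*) - \tilde{R}^{\operatorname{OTS}}_{t+1}(A_t)\mid\theta] \le \max_{a}\sum_t \mathbb{E}[\tilde{R}^{\operatorname{OTS}}_{t+1}(a) - \tilde{R}^{\operatorname{OTS}}_{t+1}(A_t)]$. Since Alice updates $X_{t+1} = g_t(X_t, \tilde{R}^{\operatorname{OTS}}_{t+1})$ with a genuine no-regret rule fed the bounded surrogate sequence (each coordinate lies in $[0,1]$ after the clip), this maximum is exactly the full-information adversarial regret $\Re_{\operatorname{full}}(T, \operatorname{adv}, \tilde{R}^{\operatorname{OTS}})$ of \cref{def:regret_full}. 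This step is pure bookkeeping and requires no new estimate.

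The substance lies in $(II)$ and $(III)$, where OTS and Gaussian anti-concentration enter. For $(II)$ I would establish \emph{sufficient optimism}: conditioned on $H_{t+1}$, the posterior law of $f_\theta(a, B_t)$ is $\mathcal{N}(\mu_t(a,B_t), \sigma_t^2(a,B_t))$, the very law from which each TS sample is drawn, so $\mathbb{E}[f_\theta(a,B_t)\mid H_{t+1}] = \mu_t(a,B_t)$. The maximum of $M_{t+1}$ i.i.d.\ samples shifts this upward, and an anti-concentration bound $\mathbb{P}(\mathcal{N}(0,1)\ge \beta_t)\ge p_t$ shows that for $M_{t+1}$ chosen large enough the event $\{\tilde{f}^{\operatorname{OTS}}_{t+1}(a,B_t)\ge \mu_t(a,B_t)+\beta_t\sigma_t(a,B_t)\}$ holds with probability $\ge 1-(1-p_t)^{M_{t+1}}$; combined with $f_\theta\in[0,1]$ and the clip, this forces $\sum_t (II)_t \le 0$ up to a negligible residual from the low-probability failure event (absorbed by taking the failure probability $\lesssim 1/T$). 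For $(III)$ the same optimism level yields the matching upper bound $\tilde{R}^{\operatorname{OTS}}_{t+1}(A_t)-f_\theta(A_t,B_t)\lesssim \beta_t\,\sigma_t(A_t,B_t)$ in conditional expectation, so $\sum_t (III)_t \lesssim \beta_T\sum_t \mathbb{E}[\sigma_t(A_t,B_t)]$. Cauchy--Schwarz gives $\sum_t \mathbb{E}[\sigma_t(A_t,B_t)]\le \sqrt{T\sum_t \mathbb{E}[\sigma_t^2(A_t,B_t)]}$, and the Gaussian information-gain identity $I(\theta;H_T)=\sum_t I(\theta;R_{t+1}\mid H_t,A_t,B_t)$ with $\sigma_t^2\lesssim \log(1+\sigma_t^2/\sigma_n^2)$ bounds $\sum_t \mathbb{E}[\sigma_t^2(A_t,B_t)]$ by $O(I(\theta;H_T))$. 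Since $\beta_t^2 = 2\log(|\mathcal{A}|\sqrt{t}) = O(\log(|\mathcal{A}|T))$, this delivers the advertised $\sqrt{\log(|\mathcal{A}|T)\,I(\theta;H_T)\,T}$ term.

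The main obstacle is calibrating the optimism so that $(II)$ and $(III)$ are controlled \emph{simultaneously}: enlarging $M_{t+1}$ drives $(II)$ further below zero but inflates the over-estimation in $(III)$, so the anti-concentration argument must pin $M_{t+1}$—and hence the effective confidence width $\beta_t$—at exactly the level where the optimism event has high enough probability while the expected maximum of $M_{t+1}$ Gaussians stays $O(\beta_t\sigma_t)$. Carrying the clip to $[0,1]$ through both conditional expectations, and verifying the telescoping information-gain identity for the Gaussian observation model, are the remaining careful-but-routine pieces; these are where I expect the dependence on the noise scale $\sigma_n$ and the precise sample count $M_{t+1}$ to surface.
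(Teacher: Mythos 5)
Your outline follows the paper's own proof almost step for step: the same three-term decomposition from \cref{prop:regret-decomposition}, the reduction of the $(I)$ terms to $\Re_{\operatorname{full}}(T,\operatorname{adv},\tilde{R}^{\operatorname{OTS}})$, Gaussian anti-concentration of the maximum of $M_{t+1}$ posterior samples to neutralize $(II)$ outside a low-probability failure event, and for $(III)$ the max-of-Gaussians upper bound plus Cauchy--Schwarz and the chain rule $\sum_t I_t(\theta;A_t,B_t,R_{t+1,A_t,B_t})=I(\theta;H_T)$, with $\sigma_t^2\lesssim\log(1+\sigma_w^{-2}\sigma_t^2)$ converting posterior widths into information gain. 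The paper packages all of this through generic UCB/LCB sequences and three events (\cref{prop:general-regret-bound} together with the optimistic, concentration, and confidence events), but the substance is identical to your plan, including the calibration of $M_{t+1}$ against the upper sequence $U'$ that you correctly single out as the delicate point.

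One step, as you wrote it, does not go through: in the $(II)$ bound you claim that the optimism event $\{\tilde f^{\operatorname{OTS}}_{t+1}(a,B_t)\ge\mu_t(a,B_t)+\beta_t\sigma_t(a,B_t)\}$, ``combined with $f_\theta\in[0,1]$ and the clip,'' forces $(II)_t\le 0$. It does not: whenever $\mu_t(a,B_t)+\beta_t\sigma_t(a,B_t)<1$, knowing only $f_\theta\le 1$ leaves open the possibility $f_\theta(a,B_t)>\mu_t(a,B_t)+\beta_t\sigma_t(a,B_t)\ge\tilde R^{\operatorname{OTS}}_{t+1}(a)$, i.e.\ the true reward can exceed the clipped optimistic sample. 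You need a second high-probability event, namely the posterior concentration of $f_\theta$ below the UCB level, $\mathbb{P}\bigl(f_\theta(a,B_t)>\mu_t(a,B_t)+\beta_t\sigma_t(a,B_t)\mid H_t,B_t\bigr)\le e^{-\beta_t^2/2}$, together with a union bound over $a\in\mathcal{A}$; this is exactly the paper's confidence event and \cref{lem:cb}, and it is the place where the $\log|\mathcal{A}|$ inside $\beta_t$ is actually consumed. The ingredient is already in your hands --- you state the posterior law $f_\theta(a,B_t)\mid H_t\sim\mathcal{N}(\mu_t,\sigma_t^2)$ at the start of your $(II)$ paragraph --- and the same confidence event also supplies the LCB that you use silently in $(III)$. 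Once you intersect the optimism event with this confidence event (and likewise intersect the concentration event for the sampled maximum in $(III)$), both terms are controlled, and the failure probabilities, each $O(1/\sqrt{t})$ or $O(1/T)$ depending on your tuning, sum to the advertised order.
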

\begin{remark}
The efficacy of the OTS-type method hinges on the bounded nature of the imagined rewards, $\tilde{R}^{\operatorname{OTS}} = (\tilde{R}^{\operatorname{OTS}}_{t+1}, t = 0, 1, \ldots)$, where each $\tilde{R}_t$ lies within the range $[0,1]^{\mathcal{A}}$. This constraint ensures that the reward estimations do not exceed plausible limits, thereby maintaining the integrity of the learning process. When the adversarial bandit algorithm $\operatorname{adv}$ is instantiated as Hedge, the regret bound under the OTS framework, $\Re_{\operatorname{full}}(T, \text{Hedge}, \tilde{R}^{\operatorname{OTS}})$, scales optimally as $\mathcal{O}(\sqrt{T  \log |\mathcal{A}|})$, demonstrating efficiency in a wide array of strategic scenarios. Similarly, if $\operatorname{adv}$ is implemented as RM, the regret bound, $\Re_{\operatorname{full}}(T, \text{RM}, \tilde{R}^{\operatorname{OTS}})$, achieves a comparable rate of $\mathcal{O}(\sqrt{T|\mathcal{A}|})$, signifying robustness across varying game dynamics. This adaptability is a testament to the versatility and practical utility of the OTS strategy in navigating the complexities of unknown game environments.
\end{remark}
\begin{remark}
While UCB and OTS both aim to augment algorithmic performance through optimism, OTS distinguishes itself by adopting stochastic bounds, offering practical advantages in complex environments where sampling efficiency and adaptability surpass traditional UCB computations.
\end{remark}

\section{Regret analysis}
\label{sec:analysis}
Our analysis roadmap is outlined as follows. First, we recall Proposition~\ref{prop:regret-decomposition}, which provides a general framework for regret decomposition:
\begin{align*}
    \mathbb{E}[ R_{t+1, a, B_t} - R_{t+1, A_t, B_t} | \theta] = (I) + (II) + (III).
\end{align*}
The summation of term $(I)$ leads to the reduction of adversarial regret $\Re_{\text{full}}(T, \operatorname{adv}, \tilde{R})$ for a sequence of bounded imagined rewards $\tilde{R}_{t+1}$. For term $(II)$, we define sufficient optimism as a condition to ensure $\sqrt{T}$-type regret. Furthermore, Proposition~\ref{prop:regret-decomposition} extends to both IWE and UCB, resulting in the derivation of by-product regret bounds for IWE-RM and UCB-RM, as detailed in Table~\ref{tab:regret} and further elaborated in Appendix~\ref{sec:analysis_details}.

\begin{definition}[Sufficient optimism]
  \label{asmp:optimism}
  The constructed imagined reward sequence $\tilde{R}^{\text{est}} = (\tilde{R}_{t}, t \in \mathbb{Z}_{++})$ is deemed optimistic if for any action $a \in \mathcal{A}$, it holds that
  \[
  \mathbb{P}( f_{\theta}(a, B_t) \ge \tilde{R}_{t+1}(a) | \theta) \le \mathcal{O}(1/\sqrt{T}).
  \]
\end{definition}
OTS adheres to Definition~\ref{asmp:optimism} by appropriately selecting $M_{t+1}$, whereas TS does not. The UCB sequence also satisfies this optimism criterion. Term $(III)$ can be bounded by the one-step information gain $I(\theta; R_{t+1, A_t, B_t} | H_t)$, utilizing the differential entropy of Gaussian distributions.

\begin{theorem}
  \label{thm:general}
  For any full information adversarial bandit $\operatorname{adv}$, utilizing an imagined reward sequence $\tilde{R}^{\operatorname{est}} = (\tilde{R}_{t}, t \in \mathbb{Z}_{++})$ constructed by an estimation algorithm $\operatorname{est}$ that satisfies Definition~\ref{asmp:optimism}, the combined algorithm $\pi = \pi^{\operatorname{adv-est}}$ enjoys the regret bound
  \begin{align*}
    \Re^*(T, \pi, \theta) \leq \Re_{\operatorname{full}}(T, \operatorname{adv})+\sqrt{ \beta I(\theta ; H_T)  T},
  \end{align*}
where $\Re_{\operatorname{full}}$ is as defined in Definition~\ref{def:regret_full}, and $\beta = \mathcal{O}(\log |\mathcal{A}| T)$.
\end{theorem}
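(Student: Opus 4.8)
The plan is to run the argument directly off the regret decomposition of \cref{prop:regret-decomposition}, instantiated at the best-in-hindsight action $a = A^*$, and then sum over $t = 0, \ldots, T-1$. Writing $\Re^*(T,\pi,\theta) = \sum_t (I)_t + \sum_t (II)_t + \sum_t (III)_t$, I would bound each of the three running sums separately and match them, in order, to the full-information regret $\Re_{\operatorname{full}}(T,\operatorname{adv})$, an $O(\sqrt{T})$ optimism error, and the information-gain term $\sqrt{\beta\, I(\theta;H_T)\,T}$. Since the bounds on each piece will be uniform in the opponent sequence $B_{0:T}$ (term $(I)$ via a worst-case over reward sequences, term $(II)$ via the per-action optimism condition, term $(III)$ via a quantity that does not involve $a$), the supremum over $B_{0:T}$ in $\Re^*$ causes no difficulty.

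For $\sum_t (I)_t$, I would use that the no-regret update $g_t$ is driven entirely by the bounded imagined vectors $\tilde R_{t+1} \in [0,1]^{\mathcal{A}}$, and that $\tilde R_{t+1}$ is conditionally independent of $A_t$ given $H_t$ (it is built from the posterior statistics $\mu_t, \sigma_t$ and the observed $B_t$, which under simultaneous play does not depend on the realized $A_t$). Hence $\sum_t \mathbb{E}[\tilde R_{t+1}(A^*) - \tilde R_{t+1}(A_t)\mid\theta] \le \max_a \mathbb{E}\big[\sum_t \tilde R_{t+1}(a) - \tilde R_{t+1}(A_t)\big]$, which is exactly the full-information adversarial regret of $\operatorname{adv}$ on the sequence $\tilde R$; taking the worst case over sequences gives $\sum_t (I)_t \le \Re_{\operatorname{full}}(T,\operatorname{adv})$ as in \cref{def:regret_full}. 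For $\sum_t (II)_t$, I invoke \cref{asmp:optimism}: the quantity $f_\theta(a,B_t) - \tilde R_{t+1}(a)$ is at most $1$ and is nonpositive unless $f_\theta(a,B_t) \ge \tilde R_{t+1}(a)$, an event of probability $O(1/\sqrt T)$, so each $(II)_t \le O(1/\sqrt T)$ and $\sum_t (II)_t = O(\sqrt T)$, which is absorbed into the information-gain term.

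The crux is $\sum_t (III)_t = \sum_t \mathbb{E}[\tilde R_{t+1}(A_t) - f_\theta(A_t,B_t)\mid\theta]$, which I would control by the posterior standard deviation and then by mutual information. Working under the prior, where $\mathbb{E}[f_\theta(A_t,B_t)\mid H_t,A_t,B_t] = \mu_t(A_t,B_t)$, the optimistic construction (the UCB width, or the max of $M$ Gaussians in OTS) yields a per-step overshoot $(III)_t \le \beta_t\,\mathbb{E}[\sigma_t(A_t,B_t)]$ with $\beta_t = O(\sqrt{\log(|\mathcal{A}|T)})$. By Cauchy–Schwarz, $\sum_t \mathbb{E}[\sigma_t(A_t,B_t)] \le \sqrt{T \sum_t \mathbb{E}[\sigma_t^2(A_t,B_t)]}$. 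The Gaussian differential-entropy identity $I(\theta; R_{t+1,A_t,B_t}\mid H_t) = \tfrac12\log(1+\sigma_t^2/\sigma_n^2)$ together with $\log(1+x)\ge c\,x$ on a bounded range gives $\sigma_t^2(A_t,B_t) \le C\, I(\theta; R_{t+1,A_t,B_t}\mid H_t)$; summing and using the chain rule $\sum_t I(\theta; R_{t+1,A_t,B_t}\mid H_t) = I(\theta; H_T)$ (the actions and observed opponent moves carry no direct information about $\theta$) gives $\sum_t \mathbb{E}[\sigma_t^2] \le C\, I(\theta;H_T)$. Chaining these produces $\sum_t (III)_t \le \beta_{\max}\sqrt{C\,T\,I(\theta;H_T)} = \sqrt{\beta\, I(\theta;H_T)\,T}$ with $\beta = \beta_{\max}^2 C = O(\log|\mathcal{A}|T)$, and adding the three contributions yields the stated bound.

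I expect the main obstacle to be this third term: passing cleanly from the per-round optimistic overshoot to the posterior variance, and from the variance to mutual information. Two points need care. First, the entropy computation presumes (sub-)Gaussian observation noise with bounded posterior and noise variances so that $\log(1+x)\ge c\,x$ is legitimate, and the chain-rule step must justify that conditioning on the $H_t$-measurable action $A_t$ and the observed $B_t$ adds nothing to $I(\theta;\cdot)$. Second, the identity $\mathbb{E}[f_\theta(A_t,B_t)\mid H_t,A_t,B_t] = \mu_t(A_t,B_t)$ is inherently Bayesian, so I would make the argument rigorous by carrying out the entire derivation under the prior and reading the conclusion as a bound on the prior-averaged regret, which is precisely the regime in which $I(\theta;H_T)$ is well defined.
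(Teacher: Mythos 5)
Your proposal follows essentially the same route as the paper's own proof: the identical three-term decomposition from \cref{prop:regret-decomposition}, the same reduction of term $(I)$ to the full-information adversarial regret $\Re_{\operatorname{full}}(T,\operatorname{adv})$, the same use of sufficient optimism (\cref{asmp:optimism}) to make term $(II)$ an $\mathcal{O}(\sqrt{T})$ contribution, and the same handling of term $(III)$ via posterior width, Cauchy--Schwarz, the Gaussian mutual-information identity, and the chain rule $\sum_t I_t(\theta;A_t,B_t,R_{t+1,A_t,B_t}) = I(\theta;H_T)$, all carried out in the Bayesian framing exactly as the paper does. The only detail you elide is that for OTS the per-step overshoot bound $(III)_t \le \beta_t\,\mathbb{E}[\sigma_t(A_t,B_t)]$ holds only on a high-probability concentration event (the paper's event $\mathcal{E}^c_t(\tilde{R},U',A_t,B_t)$, controlled by the anti-concentration lemma), whose failure probabilities are summed and absorbed just like term $(II)$ --- a cosmetic rather than substantive difference.
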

\emph{Proof sketch.} We start with a general regret decomposition for any imagined reward sequence, bounding each term separately. The first term is bounded by reducing it to adversarial regret. For the second term, we utilize generic UCB and LCB sequences to derive a general regret bound. The third term is bounded using an information-theoretic quantity. Combining these bounds yields the overall regret upper bound, with detailed proofs available in Appendix~\ref{sec:analysis_details}.

To quantify the uncertainty reduction in specific game structures upon observing new information, we introduce the maximum information gain:
\begin{definition}[Maximum Information Gain]
  \label{def:max_infor_gain}
  \begin{align*}
    \gamma_T := \max_{A_{0:T}, B_{0:T}} I(\theta; A_{0}, B_0, \ldots, A_{T-1}, B_{T-1}),
  \end{align*}
where $I(X; Y)$ denotes the mutual information between random variables $X$ and $Y$.
\end{definition}
\begin{remark}
  A notable property of Gaussian distributions is that the information gain does not depend on the observed rewards, as the posterior covariance of a multivariate Gaussian is determined solely by the sampled points. Consequently, the maximum information ratio $\gamma_T$ in Definition~\ref{def:max_infor_gain} is well-defined, implying that $I(\theta; H_T) = I(\theta; A_0, B_0, \ldots, A_{T-1}, B_{T-1}) \le \gamma_T$.
\end{remark}
The bounds on $\gamma_T$ for various commonly used covariance functions, including finite-dimensional linear, squared exponential, and Matern kernels, are derived from~\cite{srinivas2009gaussian} and detailed in Appendix~\ref{sec:analysis_details}.
\begin{remark}
    The integration of information on opponent action and reward structures emerges as a potent strategy to counteract the challenges inherent in multi-agent settings, often referred to as the curse of multi-player. Notably, when employing squared exponential kernels to model the reward dynamics, the regret for OTS-Hedge is refined to $\mathcal{O}\left(\left(\sqrt{\log |\mathcal{A}|} + \sqrt{\log(|\mathcal{A}|T) \log(T)^{d+1}}\right) \sqrt{T}\right)$, a stark contrast to the polynomial dependence on the action spaces $\mathcal{A} \times \mathcal{B}$ observed in traditional approaches. This deviation from conventional adversarial algorithms, exemplified by the EXP3's regret bound of $\mathcal{O}(\sqrt{T \mathcal{A} \log \mathcal{A}})$, underscores the inefficiency of models that neglect opponent actions. By leveraging such information, OTS-Hedge not only alleviates the curse of multi-player but also significantly surpasses the limitations of standard adversarial bandit methods, echoing the benefits of a full-information framework. This paradigm shift, driven by the strategic exploitation of opponent actions and the nuanced understanding of reward structures, marks a significant leap towards deciphering the complexities of multi-player environments and setting a new benchmark for adversarial strategies.
\end{remark}

\section{Numerical studies and applications}
\label{sec:experiments}

We conduct evaluations of the proposed algorithms on random matrix games and two real-world applications. IWE-Hedge (the classical Exp3 algorithm) and IWE-RM are used as baseline comparisons, both adapted for bandit feedback scenarios. Additionally, Hedge and RM, which necessitate full information, are also employed. Within the OTN learning framework (Algorithm~\ref{alg:ETN}), four algorithms: OTS-Hedge, OTS-RM, UCB-Hedge, and UCB-RM, are compared against these baselines. It is noteworthy that GP-MW~\cite{sessa2019no} is referred to as UCB-Hedge in this context. The performance metric used is the average expected regret, with a detailed definition of performance metrics and algorithm settings provided in Appendix~\ref{appexp:metric}.

\subsection{Random Matrix Games}
\label{sec:rmg}
We assess various algorithms in repeated two-player zero-sum matrix games, where each entry of the payoff matrix is an independently and identically distributed random variable from the uniform distribution $[-1,1]$. Players have $M$ actions each, forming a square payoff matrix of size $M$. The experiment is run for a total of $T=10^7$ rounds, during which players receive noisy rewards $\pm \Tilde{r}_t$, with $\Tilde{r}_t=A_{ij}+\epsilon_t$ and $\epsilon_t\sim\mathcal{N}(0, 0.1)$. The analysis covers different matrix sizes ($M=10,50,70,100$), with each scenario tested across $100$ independent simulation runs. Performance is averaged over these runs against varying opponent models, detailed further in Appendix~\ref{appexp:opponent}.
\begin{figure*}[!htbp]
     \centering
     \begin{subfigure}[b]{.3\linewidth} 
         \centering
         \includegraphics[width=0.85\linewidth]{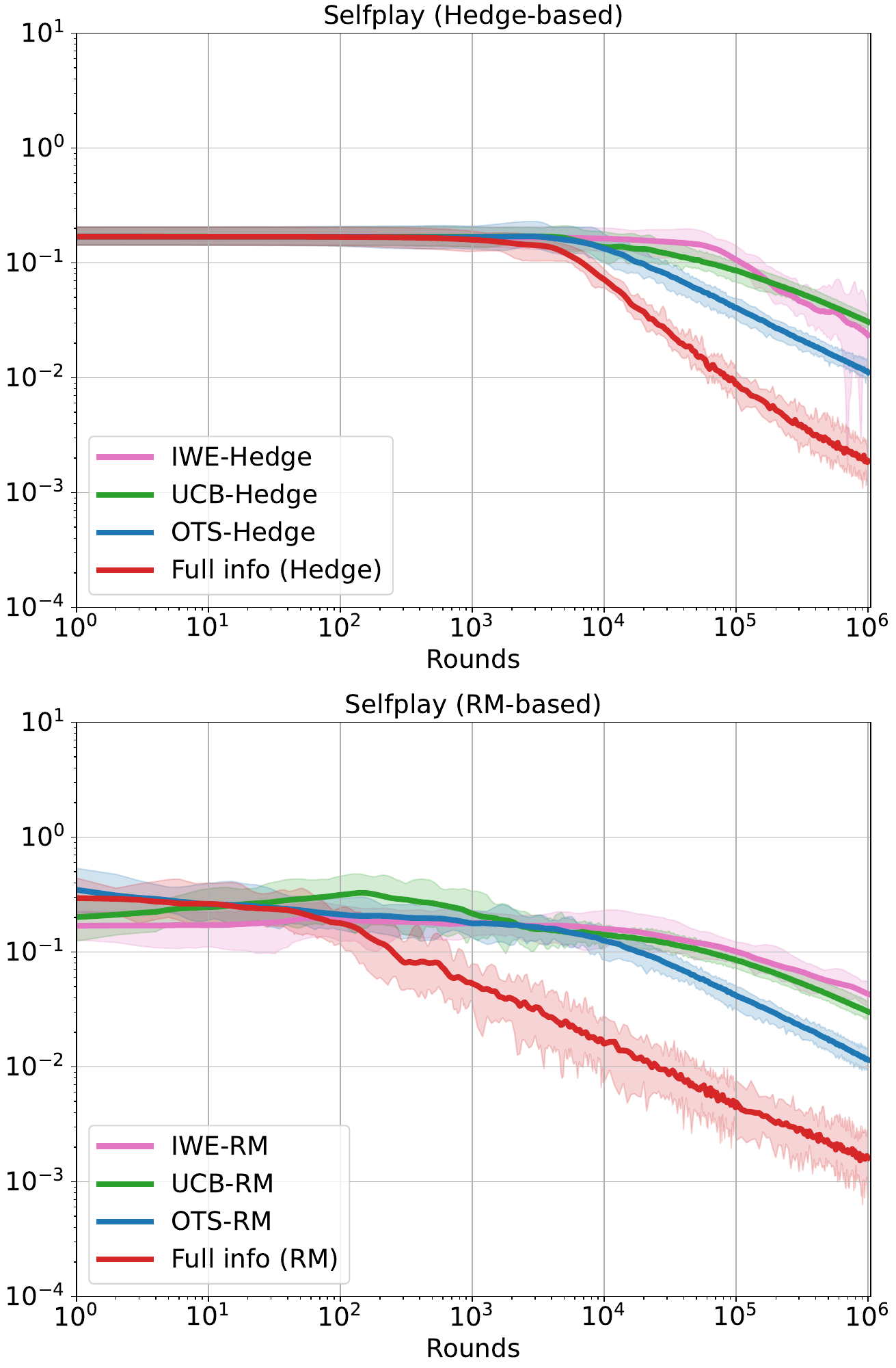}
         \caption{Self-play}
         \label{subfig: 10a}
     \end{subfigure} 
     \hspace{-6mm}
     \begin{subfigure}[b]{0.3\linewidth} 
         \centering
         \includegraphics[width=0.85\linewidth]{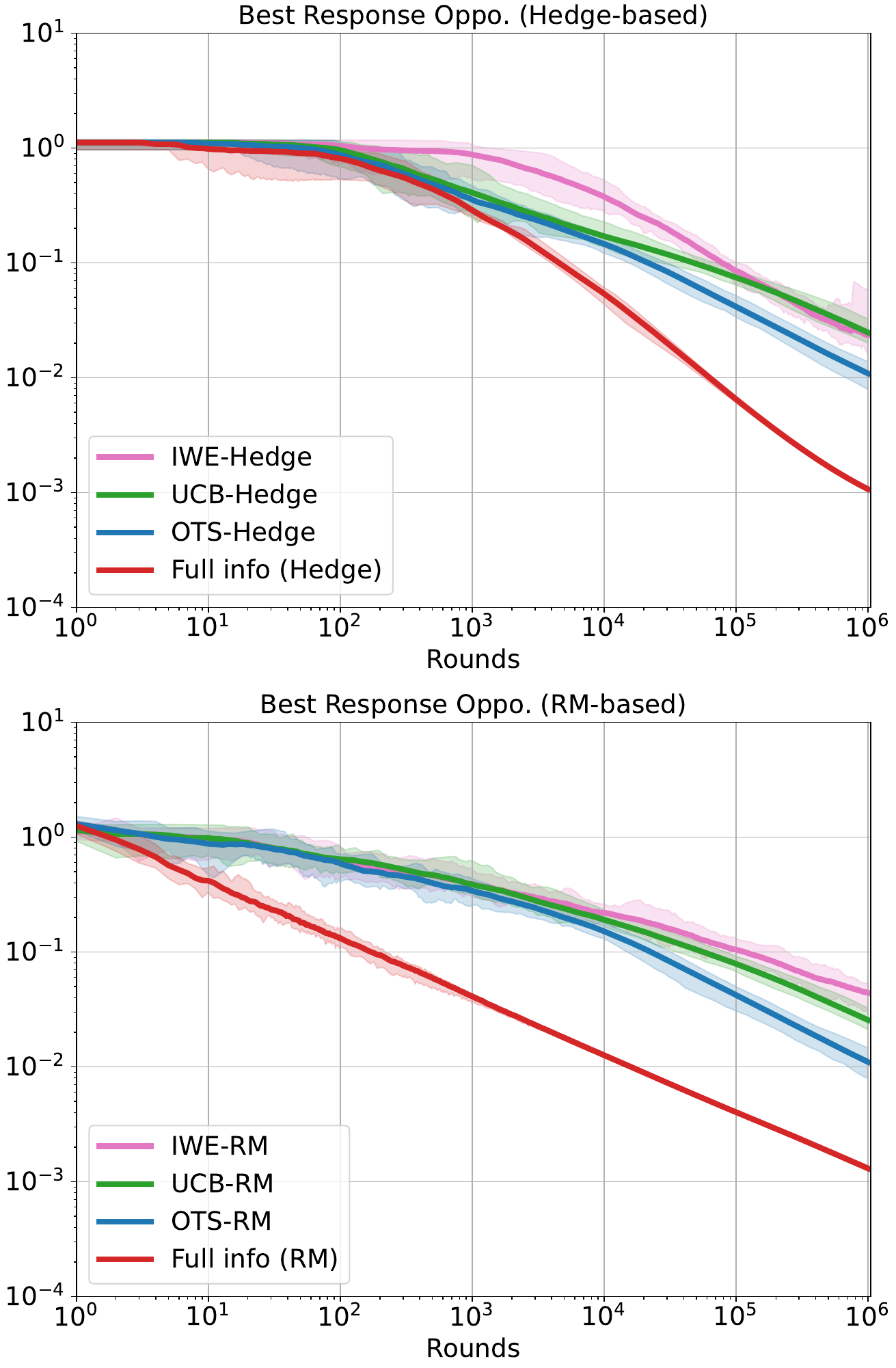}
         \caption{Best-response opponent}
         \label{subfig: 10b}
     \end{subfigure}
     \hspace{-6mm}
     \begin{subfigure}[b]{0.3\linewidth} 
         \centering
         \includegraphics[width=0.85\linewidth]{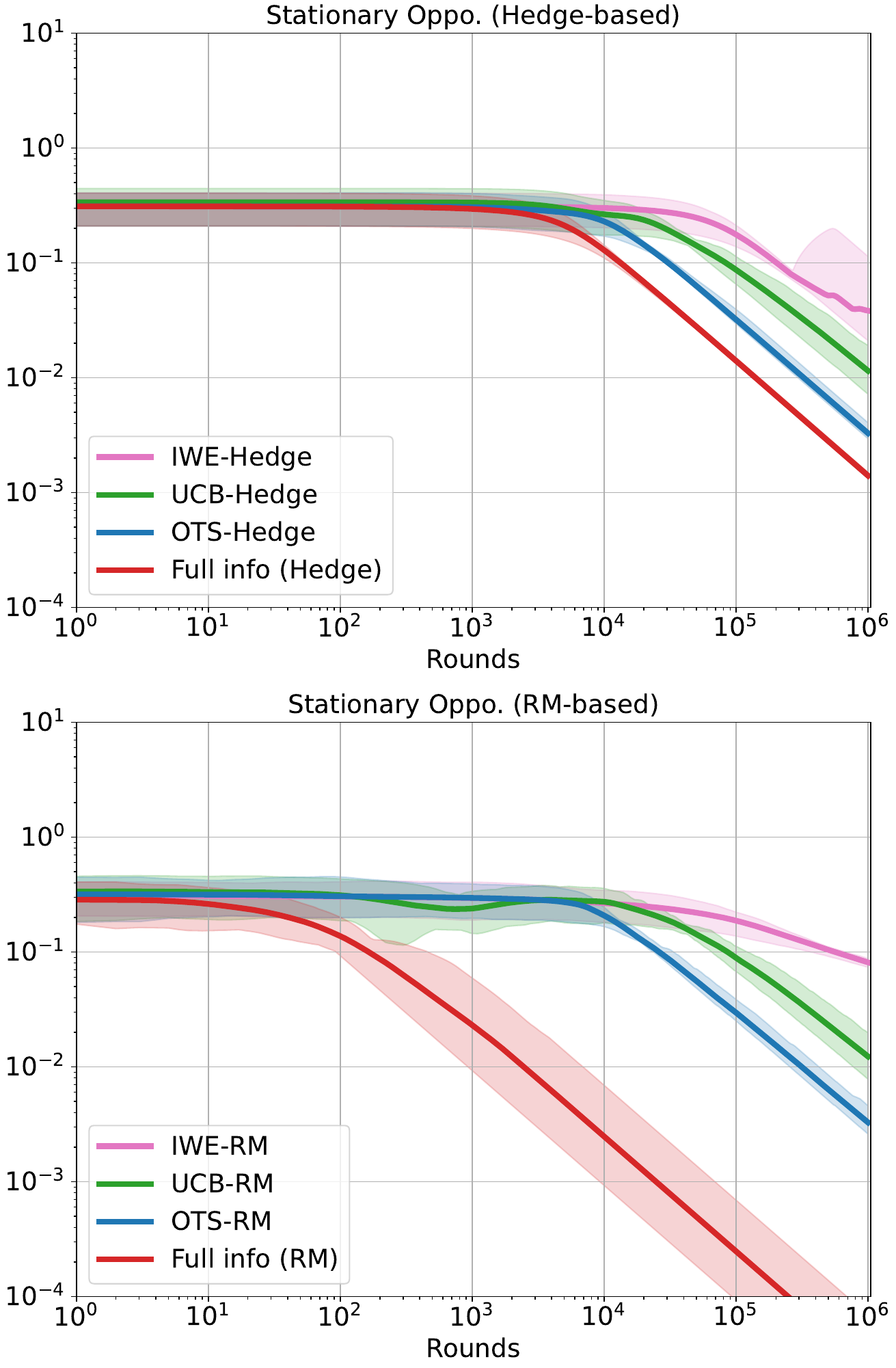} 
         \caption{Stationary opponent}
         \label{subfig: 10c}
     \end{subfigure}
        \caption{Averaged regrets for different opponents in a $50\times50$ matrix game.} 
        \label{fig:rdnmtx01} 
\end{figure*}

\subparagraph{Self-play.}
In the self-play scenario, where both players employ the same algorithm, \cref{subfig: 10a} reveals that algorithms leveraging the game's structure significantly outperform the IWE baselines, particularly in smaller matrix sizes. Notably, OTS-based algorithms demonstrate quicker convergence than their UCB-based counterparts, with RM algorithms showing an earlier reduction in average regret compared to Hedge algorithms.

\subparagraph{Best-response opponent.}
This scenario introduces a best-response opponent, fully informed about the matrix $A$ and the player's current strategy, hence always opting for the action that minimizes the player's expected payoff. \cref{subfig: 10b} shows that all algorithms within our proposed framework surpass the performance of the IWE baselines, with OTS-based algorithms again displaying superior convergence speeds relative to UCB-based methods.

\subparagraph{Stationary opponent.}
The focus here is on a stationary opponent whose strategy is a fixed probability distribution over the action space. The average regret in this context is indicative of an algorithm's effectiveness at exploiting the opponent's static behavior. As depicted in \cref{subfig: 10c}, OTS estimators confer significant advantages in exploiting such opponents over IWE-based estimators, showcasing the efficacy of leveraging game structure in algorithm design.

\subparagraph{Non-stationary opponent.}
Contrasting with previous settings, the opponent's strategy here varies non-statically, altering every $50$ rounds based on a predefined pattern. The game matrix $A \in \mathcal{R}^{10 \times 5}$, with entries drawn from $\mathcal{N}(0.5, 2.0)$, presents a challenging dynamic environment. The algorithms' robustness is evaluated over $1000$ rounds across $100$ simulation runs, with the rewards' distribution and performance metrics summarized in \cref{fig:rdnmtx02} and \cref{tab:rdnmtx01}, respectively. The OTS algorithms notably yield fewer negative rewards and higher average rewards than their IWE counterparts, illustrating enhanced resilience in face of strategic variability.

\begin{table}[htbp]
\centering
\begin{tabular}{ccc}
\hline
          & return$<0$ & mean return \\ \hline
IWE-Hedge & 19.4\% & 1.24        \\
IWE-RM    & 12.6\% & 1.50        \\
OTS-Hedge & 2.5\%  & 1.55        \\
OTS-RM    & 8.8\%  & 1.55      \\ \hline 
\end{tabular}
\caption{Returned rewards of IWE and OTS against a non-stationary opponent.} 
\label{tab:rdnmtx01}
\end{table}

\begin{figure*}[!htbp]
    \centering
    \includegraphics[width=0.8\textwidth]{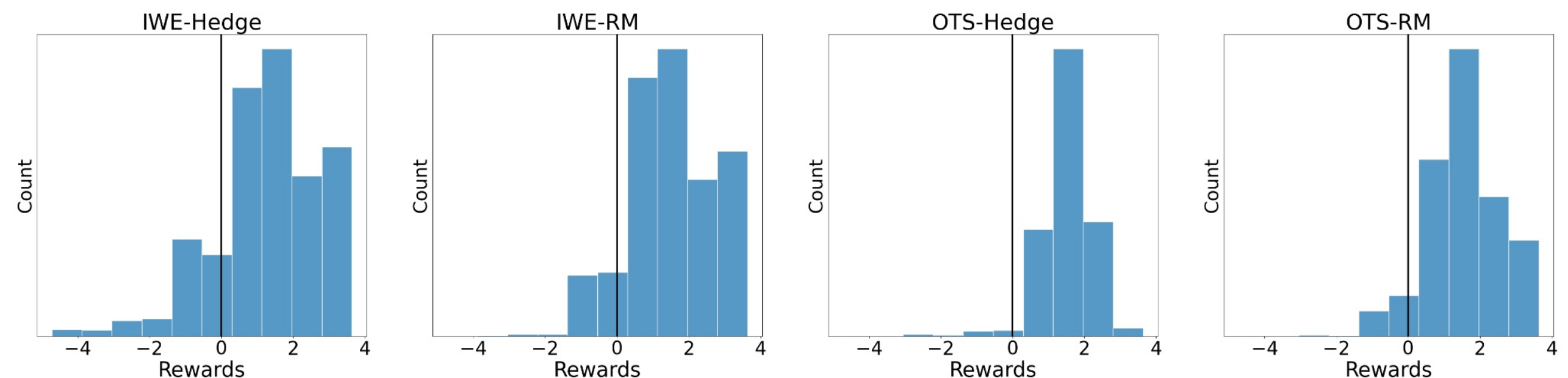} 
    \caption{Reward histograms of different algorithms against a non-stationary opponent, illustrating the performance disparity between algorithm types. OTS-based and UCB-based algorithms demonstrate significant efficiency, particularly in adapting to the jammer's changing strategies.}
    \label{fig:rdnmtx02} 
\end{figure*}

\subsection{Two-player radar signal processing: Linear game} \label{sec:radar}

The anti-jamming problem, a critical issue in signal processing literature~\citep{song2011mimo}, is modeled as a non-cooperative game between a radar and a jammer. The strategic interaction at the signal level involves both parties adjusting their transmitted signals' parameters to achieve opposing objectives: the radar aims to avoid signal interference by differing its carrier frequency from the jammer's, while the jammer attempts to match it.
This competition is formalized in the frequency domain as a linear game, with the signal-to-interference-plus-noise ratio (SINR) serving as the reward function $\text{SINR}(a,b;\theta)$. We conduct experiments comparing the average regret of various algorithms against an adaptive jammer, which updates its action based on the radar's most recent $10$ actions. As depicted in Fig.\ref{fig:radar_jammer}, OTS-based and UCB-based algorithms significantly outperform IWE-based algorithms, underscoring the advantages of adaptive strategies in anti-jamming scenarios.
\begin{figure}[!htbp]
     \centering
     \begin{subfigure}[b]{.49\linewidth} 
         \centering
         \includegraphics[width=\linewidth]{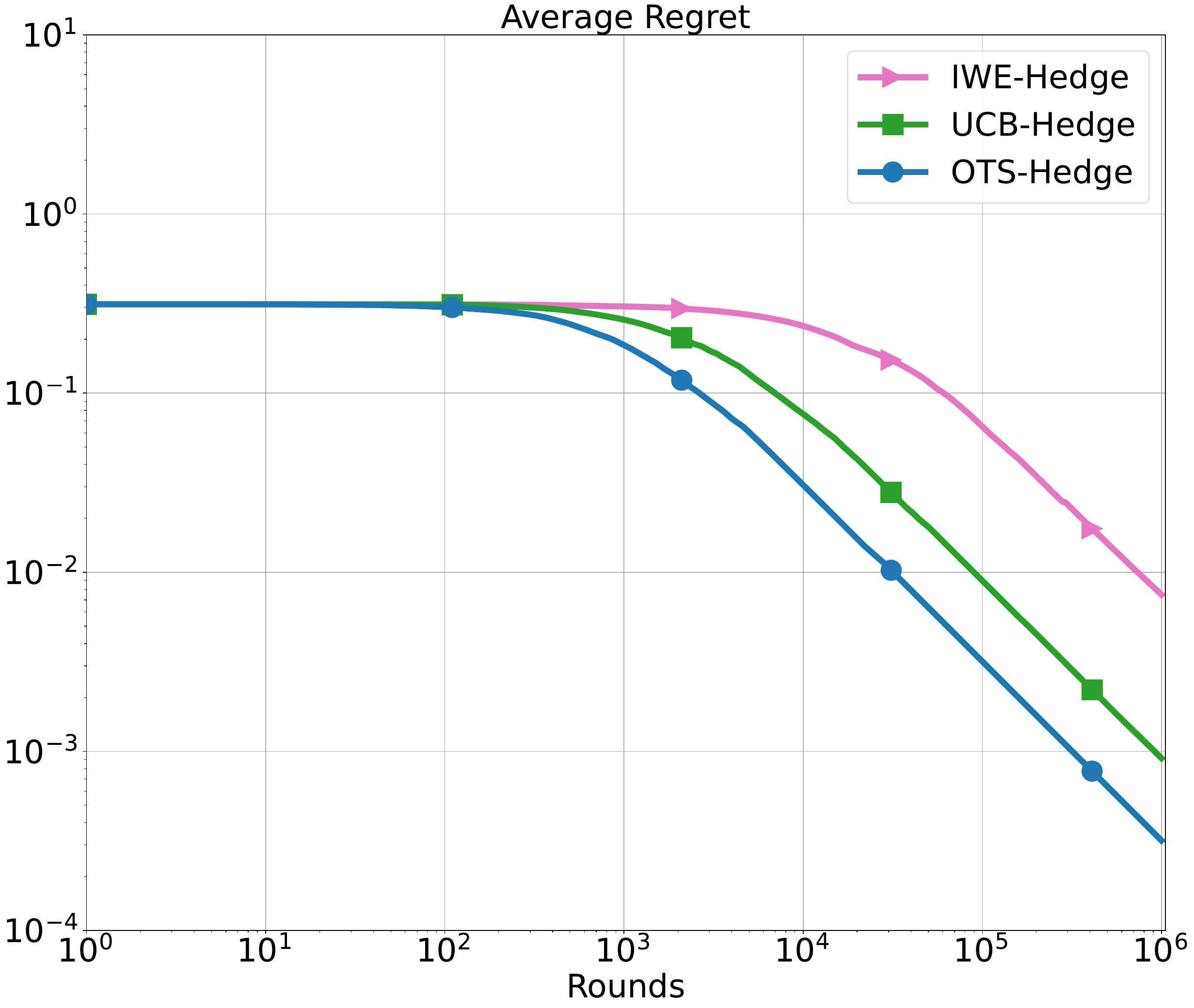}
         \caption{Hedge-based}
     \end{subfigure} 
     \begin{subfigure}[b]{0.49\linewidth} 
         \centering
         \includegraphics[width=\linewidth]{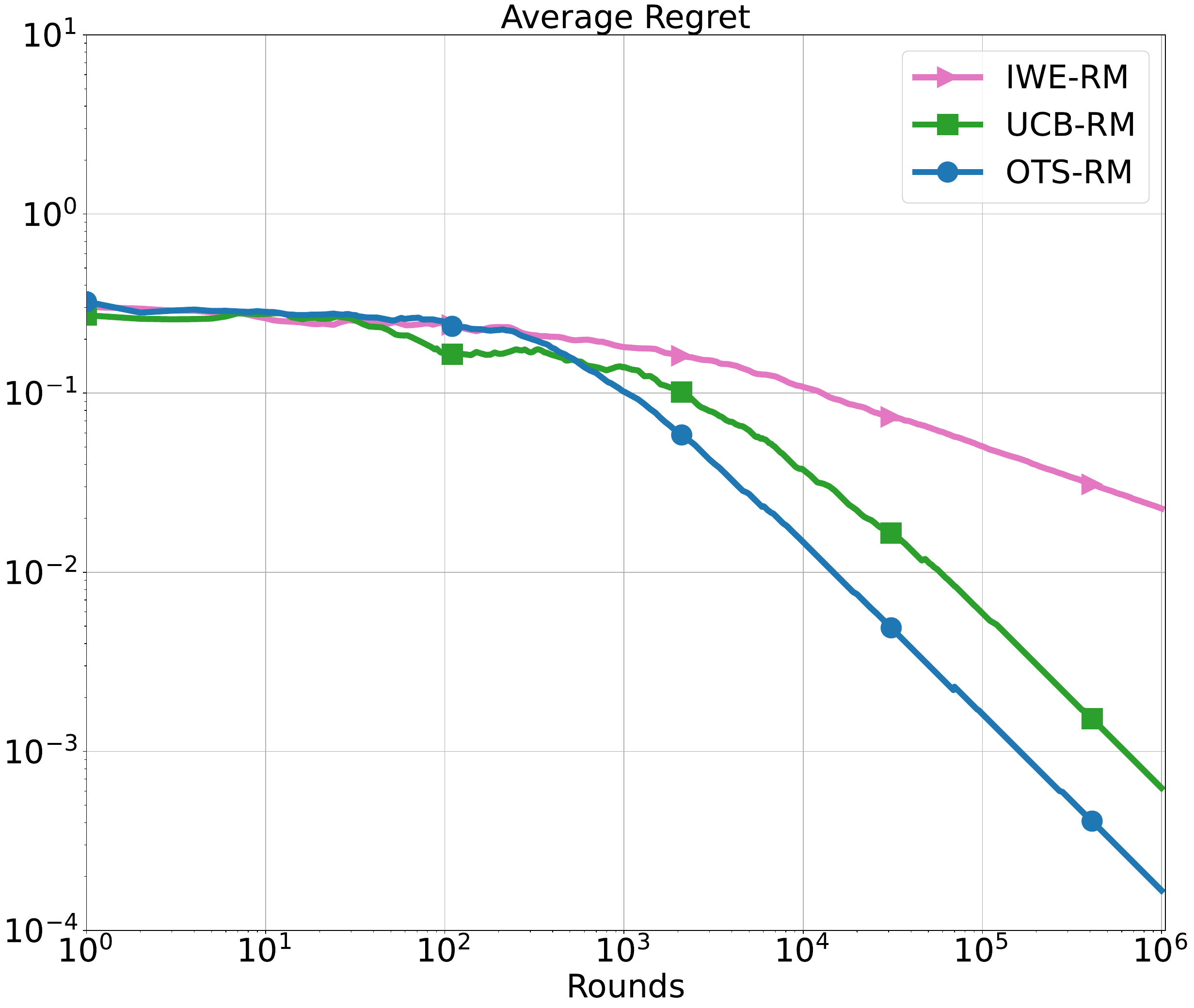}
         \caption{RM-based}
     \end{subfigure}
        \caption{Averaged regrets in the anti-jamming problem demonstrate the superior efficiency of OTS-based and UCB-based algorithms over IWE-based ones. Notably, OTS-Hedge requires significantly fewer samples to achieve a comparable level of average regret, illustrating its robustness and adaptability. Specifically, to achieve an average regret of $10^{-2}$, OTS-Hedge requires only $3.7\%$ and $31\%$ of the samples needed by IWE-Hedge and UCB-Hedge, respectively; OTS-RM uses only 0.1\% of IWE-RM's, 25\% of UCB-RM's, and 53\% of OTS-Hedge's samples.}
        \label{fig:radar_jammer} 
\end{figure}

Further details on the anti-jamming game setting, including experimental setup and algorithm parameters, are provided in \Cref{appexp:radar}, offering readers a comprehensive understanding of the methodologies employed to achieve these results.

\subsection{Repeated Traffic Routing: Kernelized Game}\label{sec:repeated}
The traffic routing problem, as derived from transportation studies, is formulated as a multi-agent game on a directed graph. In this model, each node pair signifies a distinct player, tasked with routing $U^i$ units from an origin to a destination node. The objective is to minimize travel time, influenced by the cumulative occupancy of traversed edges, making it the reward function. The set of actions available to each player comprises all feasible routes within the graph, with rewards inversely proportional to travel times. For an in-depth exploration of the setup, refer to \cref{app:traffic}.

Utilizing the Sioux-Falls road network dataset \cite{bar2015transportation}, we simulate a network of $24$ nodes and $76$ edges, engaging $N=528$ players. The action space $\mathcal{A}^i$ for player $i$ includes up to the $5$ shortest paths, excluding any route exceeding triple the shortest path's length. A Gaussian process models the reward function's correlations, as detailed in \cref{app:traffic}, following the methodology of GP-MW \cite{sessa2019no}, here referred to as UCB-Hedge.
Performance metrics extend beyond regret to encompass average congestion, offering a holistic view of traffic dynamics \cite{sessa2019no}. As depicted in \cref{fig:traffic}, OTS and UCB algorithms surpass IWE counterparts, highlighting the efficacy of the newly introduced OTS-Hedge, OTS-RM, and UCB-RM algorithms in surpassing UCB-Hedge's performance.

\begin{figure}[!htbp]
    \centering
    \includegraphics[width=\linewidth]{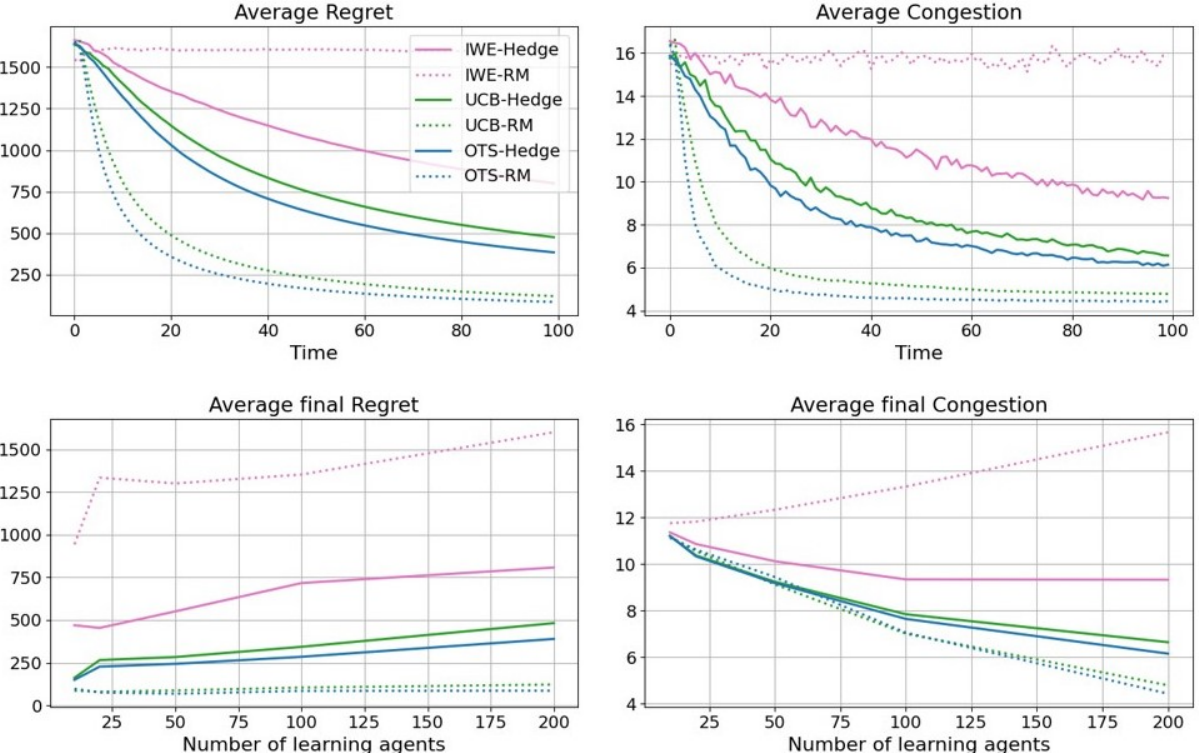} 
    \caption{Illustrating average regret and congestion for the traffic routing challenge. Notably, OTS-RM showcases unparalleled efficiency, requiring significantly fewer samples than its counterparts to achieve benchmark regret and congestion levels, thereby underscoring its capacity to manage hundreds of agents with reduced sample complexity. {Specifically, to achieve an average regret of $750$, OTS-RM requires $8\%$ of IWE-Hedge's, $16.3\%$ of UCB-Hedge's, $21.6\%$ of OTS-Hedge's, and $72.7\%$ of UCB-RM's samples. For an average congestion of $10$, OTS-RM uses $5.6\%$ of IWE-Hedge's, $15.3\%$ of UCB-Hedge's, $20\%$ of OTS-Hedge's, and $66.7\%$ of UCB-RM's samples. Moreover, OTS-RM and can effective handle hundreds of decision-makers in the most sample-efficient manner.}}
    \label{fig:traffic}
\end{figure}

\section{Conclusions}
\label{sec:conlcu}
This study presents a breakthrough in leveraging opponents' actions and reward structures through Thompson Sampling (TS)-inspired algorithms, which markedly optimize experimental resources, reducing costs by more than tenfold relative to traditional methods. We also introduce the Optimism-then-NoRegret (OTN) learning framework, an adaptive strategy for mastering the intricacies of unknown games, which subsumes various algorithmic approaches as special cases. Our proposed techniques have demonstrated significant performance improvements in both simulated and real-world settings. Future directions point towards integrating emerging TS approximation methods within complex models\citep{li2022hyperdqn,li2023efficient,li2024hyperagent}, like deep neural networks, with the OTN framework and the Optimistic Thompson Sampling (OTS) algorithm, especially in more complex multi-agent environments. This convergence is expected to refine decision-making processes, bolster model robustness, and widen the scope of our methodologies to encompass a greater array of machine learning and artificial intelligence challenges. The anticipated fusion of advanced sampling techniques with the OTN framework heralds a new frontier in machine learning research, brimming with promising prospects for innovation and application.

\section*{Impact statement}
Our research presents advancements in Machine Learning through the development of Thompson sampling-type algorithms for multi-agent environments with partial observations, applicable in fields like traffic routing and radar sensing. By significantly reducing experimental budgets and demonstrating a logarithmic dependence of the regret bound on the action space size, our work contributes to more efficient and equitable decision-making processes.

Ethically, this advancement promotes the responsible use of technology by ensuring broader access and reducing resource consumption. Societally, the potential applications of our research promise to enhance the functionality and safety of critical infrastructure, aligning with the goal of advancing public welfare through technological innovation. We are committed to further exploring these ethical and societal implications, ensuring that the deployment of our methodologies actively considers potential impacts to avoid unintended consequences.

In aligning with the conference's guidelines, we recognize the established ethical impacts and societal implications common to advancements in Machine Learning. However, we have endeavored to provide a concise discussion reflective of our work's unique contributions and potential future effects.

\bibliography{ref}

\begin{thebibliography}{34}
\providecommand{\natexlab}[1]{#1}
\providecommand{\url}[1]{\texttt{#1}}
\expandafter\ifx\csname urlstyle\endcsname\relax
  \providecommand{\doi}[1]{doi: #1}\else
  \providecommand{\doi}{doi: \begingroup \urlstyle{rm}\Url}\fi

\bibitem[Agarwal \& Zhang(2022)Agarwal and Zhang]{agarwal2022model}
Agarwal, A. and Zhang, T.
\newblock Model-based rl with optimistic posterior sampling: Structural conditions and sample complexity.
\newblock \emph{Advances in Neural Information Processing Systems}, 35:\penalty0 35284--35297, 2022.

\bibitem[Auer et~al.(2002{\natexlab{a}})Auer, Cesa-Bianchi, and Fischer]{auer2002finite}
Auer, P., Cesa-Bianchi, N., and Fischer, P.
\newblock Finite-time analysis of the multiarmed bandit problem.
\newblock \emph{Machine learning}, 47:\penalty0 235--256, 2002{\natexlab{a}}.

\bibitem[Auer et~al.(2002{\natexlab{b}})Auer, Cesa-Bianchi, Freund, and Schapire]{auer2002nonstochastic}
Auer, P., Cesa-Bianchi, N., Freund, Y., and Schapire, R.~E.
\newblock The nonstochastic multiarmed bandit problem.
\newblock \emph{SIAM journal on computing}, 32\penalty0 (1):\penalty0 48--77, 2002{\natexlab{b}}.

\bibitem[Bar-Gera(2015)]{bar2015transportation}
Bar-Gera, H.
\newblock Transportation network test problems (2002).
\newblock \emph{URL http://www. bgu. ac. il/\~{} bargera/tntp}, 2015.

\bibitem[Cesa-Bianchi \& Lugosi(2006)Cesa-Bianchi and Lugosi]{cesa2006prediction}
Cesa-Bianchi, N. and Lugosi, G.
\newblock \emph{Prediction, learning, and games}.
\newblock Cambridge university press, 2006.

\bibitem[Chen \& Peng(2020)Chen and Peng]{chen2020hedging}
Chen, X. and Peng, B.
\newblock Hedging in games: Faster convergence of external and swap regrets.
\newblock \emph{Advances in Neural Information Processing Systems}, 33:\penalty0 18990--18999, 2020.

\bibitem[Daskalakis et~al.(2011)Daskalakis, Deckelbaum, and Kim]{daskalakis2011near}
Daskalakis, C., Deckelbaum, A., and Kim, A.
\newblock Near-optimal no-regret algorithms for zero-sum games.
\newblock In \emph{Proceedings of the twenty-second annual ACM-SIAM symposium on Discrete Algorithms}, pp.\  235--254. SIAM, 2011.

\bibitem[Fainmesser(2012)]{fainmesser2012community}
Fainmesser, I.~P.
\newblock Community structure and market outcomes: A repeated games-in-networks approach.
\newblock \emph{American Economic Journal: Microeconomics}, 4\penalty0 (1):\penalty0 32--69, 2012.

\bibitem[Freund \& Schapire(1997)Freund and Schapire]{freund1997decision}
Freund, Y. and Schapire, R.~E.
\newblock A decision-theoretic generalization of on-line learning and an application to boosting.
\newblock \emph{Journal of computer and system sciences}, 55\penalty0 (1):\penalty0 119--139, 1997.

\bibitem[Fudenberg \& Tirole(1991)Fudenberg and Tirole]{fudenberg1991game}
Fudenberg, D. and Tirole, J.
\newblock \emph{Game theory}.
\newblock MIT press, 1991.

\bibitem[Gordon(1941)]{gordon1941values}
Gordon, R.~D.
\newblock Values of mills' ratio of area to bounding ordinate and of the normal probability integral for large values of the argument.
\newblock \emph{The Annals of Mathematical Statistics}, 12\penalty0 (3):\penalty0 364--366, 1941.

\bibitem[Hart \& Mas-Colell(2000)Hart and Mas-Colell]{hart2000simple}
Hart, S. and Mas-Colell, A.
\newblock A simple adaptive procedure leading to correlated equilibrium.
\newblock \emph{Econometrica}, 68\penalty0 (5):\penalty0 1127--1150, 2000.

\bibitem[Hart \& Mas-Colell(2001)Hart and Mas-Colell]{hart2001reinforcement}
Hart, S. and Mas-Colell, A.
\newblock A reinforcement procedure leading to correlated equilibrium.
\newblock In \emph{Economics Essays: A Festschrift for Werner Hildenbrand}, pp.\  181--200. Springer, 2001.

\bibitem[Hsieh et~al.(2021)Hsieh, Antonakopoulos, and Mertikopoulos]{hsieh2021adaptive}
Hsieh, Y.-G., Antonakopoulos, K., and Mertikopoulos, P.
\newblock Adaptive learning in continuous games: Optimal regret bounds and convergence to nash equilibrium.
\newblock In \emph{Conference on Learning Theory}, pp.\  2388--2422. PMLR, 2021.

\bibitem[Koc\'{a}k et~al.(2014)Koc\'{a}k, Neu, Valko, and Munos]{kocak2014efficient}
Koc\'{a}k, T., Neu, G., Valko, M., and Munos, R.
\newblock Efficient learning by implicit exploration in bandit problems with side observations.
\newblock In Ghahramani, Z., Welling, M., Cortes, C., Lawrence, N., and Weinberger, K. (eds.), \emph{Advances in Neural Information Processing Systems}, volume~27. Curran Associates, Inc., 2014.
\newblock URL \url{https://proceedings.neurips.cc/paper_files/paper/2014/file/25b2822c2f5a3230abfadd476e8b04c9-Paper.pdf}.

\bibitem[Lattimore \& Szepesv{\'a}ri(2020)Lattimore and Szepesv{\'a}ri]{lattimore2020bandit}
Lattimore, T. and Szepesv{\'a}ri, C.
\newblock \emph{Bandit algorithms}.
\newblock Cambridge University Press, 2020.

\bibitem[Leblanc(1975)]{leblanc1975algorithm}
Leblanc, L.~J.
\newblock An algorithm for the discrete network design problem.
\newblock \emph{Transportation Science}, 9\penalty0 (3):\penalty0 183--199, 1975.

\bibitem[Li et~al.(2023)Li, Xu, and Luo]{li2023efficient}
Li, Y., Xu, J., and Luo, Z.-Q.
\newblock Efficient and scalable reinforcement learning via hypermodel.
\newblock In \emph{NeurIPS 2023 Workshop on Adaptive Experimental Design and Active Learning in the Real World}, 2023.
\newblock URL \url{https://openreview.net/forum?id=juq0ZUWOoY}.

\bibitem[Li et~al.(2024)Li, Xu, Han, and Luo]{li2024hyperagent}
Li, Y., Xu, J., Han, L., and Luo, Z.-Q.
\newblock Hyperagent: A simple, scalable, efficient and provable reinforcement learning framework for complex environments, 2024.
\newblock URL \url{https://arxiv.org/abs/2402.10228}.

\bibitem[Li et~al.(2022)Li, Li, Zhang, Zhang, and Luo]{li2022hyperdqn}
Li, Z., Li, Y., Zhang, Y., Zhang, T., and Luo, Z.-Q.
\newblock Hyper{DQN}: A randomized exploration method for deep reinforcement learning.
\newblock In \emph{International Conference on Learning Representations}, 2022.
\newblock URL \url{https://openreview.net/forum?id=X0nrKAXu7g-}.

\bibitem[Neu(2015)]{neu2015explore}
Neu, G.
\newblock Explore no more: Improved high-probability regret bounds for non-stochastic bandits.
\newblock In Cortes, C., Lawrence, N., Lee, D., Sugiyama, M., and Garnett, R. (eds.), \emph{Advances in Neural Information Processing Systems}, volume~28. Curran Associates, Inc., 2015.
\newblock URL \url{https://proceedings.neurips.cc/paper_files/paper/2015/file/e5a4d6bf330f23a8707bb0d6001dfbe8-Paper.pdf}.

\bibitem[Ordeshook et~al.(1986)]{ordeshook1986game}
Ordeshook, P.~C. et~al.
\newblock Game theory and political theory.
\newblock \emph{Cambridge Books}, 1986.

\bibitem[O’Donoghue et~al.(2021)O’Donoghue, Lattimore, and Osband]{o2021matrix}
O’Donoghue, B., Lattimore, T., and Osband, I.
\newblock Matrix games with bandit feedback.
\newblock In \emph{Uncertainty in Artificial Intelligence}, pp.\  279--289. PMLR, 2021.

\bibitem[Russo et~al.(2018)Russo, Van~Roy, Kazerouni, Osband, Wen, et~al.]{russo2018tutorial}
Russo, D.~J., Van~Roy, B., Kazerouni, A., Osband, I., Wen, Z., et~al.
\newblock A tutorial on thompson sampling.
\newblock \emph{Foundations and Trends{\textregistered} in Machine Learning}, 11\penalty0 (1):\penalty0 1--96, 2018.

\bibitem[Sessa et~al.(2019)Sessa, Bogunovic, Kamgarpour, and Krause]{sessa2019no}
Sessa, P.~G., Bogunovic, I., Kamgarpour, M., and Krause, A.
\newblock No-regret learning in unknown games with correlated payoffs.
\newblock \emph{Advances in Neural Information Processing Systems}, 32, 2019.

\bibitem[Skyrms \& Pemantle(2009)Skyrms and Pemantle]{skyrms2009dynamic}
Skyrms, B. and Pemantle, R.
\newblock A dynamic model of social network formation.
\newblock In \emph{Adaptive networks}, pp.\  231--251. Springer, 2009.

\bibitem[Song et~al.(2011)Song, Willett, Zhou, and Luh]{song2011mimo}
Song, X., Willett, P., Zhou, S., and Luh, P.~B.
\newblock The mimo radar and jammer games.
\newblock \emph{IEEE Transactions on Signal Processing}, 60\penalty0 (2):\penalty0 687--699, 2011.

\bibitem[Srinivas et~al.(2009)Srinivas, Krause, Kakade, and Seeger]{srinivas2009gaussian}
Srinivas, N., Krause, A., Kakade, S.~M., and Seeger, M.
\newblock Gaussian process optimization in the bandit setting: No regret and experimental design.
\newblock \emph{arXiv preprint arXiv:0912.3995}, 2009.

\bibitem[Stoltz(2005)]{stoltz2005incomplete}
Stoltz, G.
\newblock \emph{Incomplete information and internal regret in prediction of individual sequences}.
\newblock PhD thesis, Universit{\'e} Paris Sud-Paris XI, 2005.

\bibitem[Syrgkanis et~al.(2015)Syrgkanis, Agarwal, Luo, and Schapire]{syrgkanis2015fast}
Syrgkanis, V., Agarwal, A., Luo, H., and Schapire, R.~E.
\newblock Fast convergence of regularized learning in games.
\newblock \emph{Advances in Neural Information Processing Systems}, 28, 2015.

\bibitem[Tammelin(2014)]{tammelin2014solving}
Tammelin, O.
\newblock Solving large imperfect information games using cfr+.
\newblock \emph{arXiv preprint arXiv:1407.5042}, 2014.

\bibitem[Thompson(1933)]{thompson1933likelihood}
Thompson, W.~R.
\newblock On the likelihood that one unknown probability exceeds another in view of the evidence of two samples.
\newblock \emph{Biometrika}, 25\penalty0 (3/4):\penalty0 285--294, 1933.

\bibitem[Vaswani et~al.(2020)Vaswani, Mehrabian, Durand, and Kveton]{vaswani20old}
Vaswani, S., Mehrabian, A., Durand, A., and Kveton, B.
\newblock Old dog learns new tricks: Randomized ucb for bandit problems.
\newblock In Chiappa, S. and Calandra, R. (eds.), \emph{Proceedings of the Twenty Third International Conference on Artificial Intelligence and Statistics}, volume 108 of \emph{Proceedings of Machine Learning Research}, pp.\  1988--1998. PMLR, 26--28 Aug 2020.
\newblock URL \url{https://proceedings.mlr.press/v108/vaswani20a.html}.

\bibitem[Zhang(2022)]{zhang2022feel}
Zhang, T.
\newblock Feel-good thompson sampling for contextual bandits and reinforcement learning.
\newblock \emph{SIAM Journal on Mathematics of Data Science}, 4\penalty0 (2):\penalty0 834--857, 2022.

\end{thebibliography}
\bibliographystyle{icml2024}

\newpage
\appendix
\onecolumn
\section*{\Large Appendix: Optimistic Thompson Sampling for No-Regret Learning in Unknown Games}

 {
  \tableofcontents
 }

\newpage

\section{Additional discussion on related works}
\label{sec:addtional-related}
The Nash regret defined in \cite{o2021matrix} is only meaningful when facing a best-response opponent.
\begin{definition}{Nash equilibrium and Nash value}
The Nash value is defind as 
    \begin{align*}
        V^* = \max_{P \in \mathcal{D}(\mathcal{A})} \min_{Q \in \mathcal{D}(\mathcal{B})} \E[A \sim P, B \sim Q]{ f_{\theta}(A, B) },
    \end{align*}
and corresponding optimum $(P^*, Q^*)$ are the Nash equilibrium.
\end{definition}
\begin{definition}{Nash regret in \citep{o2021matrix}}
The Nash regret in step $t$ is defined as
    \begin{align}
    \label{eq:def-nash-regret-t}
        \operatorname{NashRegret}_t = \E{ V^* - R_{t+1, A_t, B_t} }
    \end{align}
    and the total Nash regret is defined as
    \begin{align}
        \label{eq:def-total-nash-regret}
        \operatorname{NashRegret}(T) = \sum_{t= 1}^T \operatorname{NashRegret}_t
    \end{align}
\end{definition}
We can only guarantee that $\operatorname{NashRegret}_t$ defined in \cref{eq:def-nash-regret-t} is positive when the opponent is the best response player with full knowledge on the matrix, which is unrealistic.
As for general opponent, $\operatorname{NashRegret}_t$ can be negative, which lose the meaning of `regret'. This is because even we can derive sublinear $\operatorname{NashRegret}(T)$, we could infer anything about the intermediate behavior of the $A$-player.
A special case is that if the $A$-player can always exploit the weakness of $B$-player, the total Nash regret can be linearly decreasing to $- \infty$, which is obviously `sublinear'.
However, this exploiting situation of the two players should be distinctive to the case that two players are playing Nash equilibrium and resulting $0$ total Nash regret, which is also `sublinear'.

As for our definition of adversarial regret in \cref{eq:adv_regret} in this repeated game setting with unknown reward function, we expect to measure how the $A$-player could exploit the weakness of $B$-player as time going on. The negative regret would not appear in any cases of the opponent. This is our motivation to use adversarial regret. 

\section{Description of bandit games and full information game}\label{app:game}
In this work, we consider three representative game forms: the matrix game, linear game, and kernelized game, as summarized in Table~\ref{tab:game}. 
\begin{table}[htbp]
\centering
\caption{Examples of bandit games.}
\small 
\begin{tabular}{cccc}
\toprule
& \textbf{Matrix game} &  \textbf{Linear game}&  \textbf{Kernelized game}   \\ \midrule
\multirow{2}{*}{Mean Reward} & $f_{\theta}(a, b) = \theta_{a, b}$  & $f_{\theta}(a, b)=\phi(a,b)^\top\theta$ &   $f_{\theta}(a, b)=\theta(a, b) $\\
  & $\theta\in\R^{\actions \times \mathcal{B}}$ & $\phi(a,b)\in\R^d$, $\theta\in\R^d$ & $\theta(a, b)\in\R$ is mean of a Gaussian Process \\ 
\bottomrule
\end{tabular}
\label{tab:game}
\end{table}

\subsection{Various bandit games}
\begin{assumption}
\label{assump:noise-likelihood}
    The corruption noise $W_t=Y_{t+1, A_{t}, B_t}-f_{\theta}(A_t, B_t)$ is assumed to be zero-mean Gaussian noise and is independent at each time $t$.
\end{assumption}
\begin{example}[Matrix games]
\label{example:matrix_linear}
In a matrix game, the reward function simplifies to $f_{\theta}(a, b) = \theta_{a, b}$. In this degenerate setting, $\theta$ can be considered as the utility matrix for Alice. 
\end{example}

\begin{example}[Linear games]
\label{example:gaussian_linear}
In a linear game, a known feature mapping $\phi: \actions \times \mathcal{B} \mapsto \R^d$is defined, and the mean reward function is given by $f_{\theta}(a, b) = \phi(a, b)^\top \theta$, where the reward is linear in the feature. We assume that the random parameter $\theta$ follows the normal distribution $ {N}(\mu_p, \Sigma_p)$, and the reward noise $W_{t+1} = Y_{t+1, A_t, B_t} - f_{\theta}(A_t, B_t)$ is normally distributed with with mean zero and variance $\sigma_w^2$, independent of $(H_t, A_t, B_t, \theta)$.  
\end{example}
In the \cref{sec:repeated}, the reward structure in the repeated traffic routing problem is modeled using a kernel function, which is referred to as Kernelized games. 
\begin{example}[Kernelized games]
  \label{example:kernel}
In kernelized games, we consider the case where the reward function $f_{\theta}$ is sample from a Gaussian process. The stochastic process $(f_{\theta}(a, b): (a, b) \in \actions \times \mathcal{B})$ follows a multivariate Gaussian distribution, where the mean function is denoted as $\mu(a, b) = \E{f_{\theta}(a, b)}$ and covariance (or kernel) function is denoted as $k((a, b), (a', b')) = \E{(f_{\theta} (a, b) - \mu(a, b) )(f_{\theta} (a', b') - \mu(a', b') )}$.
The kernel function $k((a,b), (a', b'))$ measures the similarity between different action pairs $(a,b), (a', b') \in \actions \times \mathcal{B} $ in the game.
We assume that the function $f_{\theta}$ is sampled from a  Gaussian process prior $GP(0, k((a, b), (a', b')))$, the reward noise $W_{t+1} = Y_{t+1, A_t, B_t} - f_{\theta}(A_t, B_t)$ is independent of $(H_t, \theta, A_t, B_t)$, and $(W_{t}: t \in \Z_{++})$ is an i.i.d sequence following $N(0, \sigma_w^2)$.
\end{example}

\subsection{Full information feedback}
To introduce the proposed Optimism-then-NoRegret learning framework, we first consider the full information feedback setting where Alice can observe the mean rewards $r_t(a) = f_{\theta}(a, B_t)$ for all actions $a \in \actions$.
In this case, the problem can be solved using full-information adversarial bandit algorithms such as Hedge~\citep{freund1997decision} and Regret Matching (RM)~\citep{hart2000simple} applied to the sequence of adversarial reward vectors $(r_{t})_{t\in [T]} \in [0,1]^{\actions \times T}$. The procedure is summarized in \cref{alg:NR}, where $P_X$ denotes probability simplex proportional to $X$, and function $g_t: \Delta^{\actions} \times [0, 1]^\actions \mapsto \Delta^{\actions}$ in round $t$ is specified as follows:
$$
\textrm{Hedge:}\ g_{t, a}(X_t, r_t)= X_{t, a} \exp ( \eta_t r_t(a) ),\quad\textrm{RM:}\ g_{t, a}(X_t, r_t) = \max\left(0, \sum_{s=0}^t r_t(a) - r_t(A_s) \right).$$

In the full information setting, where $r_t(a) = f_{\theta}(a, B_t)$, the adversarial regret defined in~\cref{eq:adv_regret} translates to the following full information adversarial regret: 
\begin{definition}[Regret with Full Information]
  \label{def:regret_full_appendix}
  The full information adversarial regret of algorithm $\operatorname{adv}$ for arbitrary reward sequence $(r_t)_t$ is defined as
\begin{align}
  \Re_{\operatorname{full}} ( T, {\operatorname{adv} }, (r_t)_t ) = \max_{a\in \actions} \E{ \sum_{t= 0}^{T-1} r_t(a) -  r_t(A_t) }
\end{align}
\end{definition}
The following proposition provides the regret bounds for Hedge and RM algorithms in the full information feedback setting~\citep{freund1997decision}: 
\begin{proposition}[Regrets of Hedge and RM]
Consider playing in a full information feedback game with Hedge or RM algorithms. The regrets are bounded as follows: 
$$
\Re_{\operatorname{full}} ( T, {\operatorname{Hedge} }, (r_t)_t )=\mathcal{O}( \sqrt{T \log \actions} ),\ \Re_{\operatorname{full}} ( T, {\operatorname{RM} }, (r_t)_t )=\mathcal{O}( \sqrt{T\actions} ). 
$$
\end{proposition}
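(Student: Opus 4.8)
The plan is to establish both bounds by classical potential-function (Lyapunov) arguments, treating the reward sequence $(r_t)_t \in [0,1]^{\actions}$ as adversarial. Since Hedge and RM are standard algorithms, the task is to reproduce the textbook analyses within the present notation, where $A_t \sim P_{X_t}$ and $r_t(a) = f_\theta(a, B_t) \in [0,1]$; boundedness of the rewards is guaranteed by the reward map $\mathcal{R} \colon \mathbb{R} \mapsto [0,1]$.

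For Hedge, I would track the exponential potential $W_t = \sum_{a \in \actions} X_{t,a}$ under the multiplicative update $X_{t+1,a} = X_{t,a}\exp(\eta r_t(a))$. The core is a two-sided estimate of $\log(W_T/W_0)$. For the upper bound, I would expand one step as $\log(W_{t+1}/W_t) = \log \mathbb{E}_{a \sim P_{X_t}}[\exp(\eta r_t(a))]$ and apply Hoeffding's lemma to obtain $\log(W_{t+1}/W_t) \le \eta \langle P_{X_t}, r_t\rangle + \eta^2/8$; summing over $t$ telescopes the left-hand side to $\log(W_T/W_0)$. For the lower bound, I would use $W_T \ge X_{T, a^\star}$ for the best fixed action $a^\star = \argmax_a \sum_t r_t(a)$, which gives $\log(W_T/W_0) \ge \eta\sum_t r_t(a^\star) - \log\abs{\actions}$. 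Since $X_t$ is deterministic given the past reward vectors, $\mathbb{E}[r_t(A_t)] = \langle P_{X_t}, r_t\rangle$; combining, rearranging, and choosing $\eta = \sqrt{8\log\abs{\actions}/T}$ yields $\Re_{\operatorname{full}}(T, \operatorname{Hedge}, (r_t)_t) \le \sqrt{(T/2)\log\abs{\actions}} = \mathcal{O}(\sqrt{T\log\abs{\actions}})$. Because the stated update uses a time-varying $\eta_t$, I would either fix a known horizon or invoke the standard doubling trick / decreasing schedule $\eta_t \propto \sqrt{\log\abs{\actions}/t}$, which preserves the rate up to constants.

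For RM, I would introduce the cumulative regret vector $\Phi_t(a) = \sum_{s=0}^{t-1}(r_s(a) - r_s(A_s))$ and its positive part $\Phi_t^+(a) = \max(0, \Phi_t(a))$; by construction the RM law plays $P_{X_t} \propto \Phi_t^+$. Take the potential $\Psi_t = \norm{\Phi_t^+}_2^2$. Writing $\rho_{t}(a) = r_{t}(a) - r_{t}(A_{t})$ for the instantaneous regret, a one-step expansion together with the elementary inequality $(\max(0, u+v))^2 \le (\max(0,u))^2 + 2\max(0,u)\,v + v^2$ gives $\Psi_{t+1} \le \Psi_t + 2\langle \Phi_t^+, \rho_{t+1}\rangle + \norm{\rho_{t+1}}_2^2$. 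The crucial Blackwell step is that, under the RM sampling rule, $\mathbb{E}[\langle \Phi_t^+, \rho_{t+1}\rangle \mid H_{t+1}] = 0$, because the play probabilities are exactly proportional to $\Phi_t^+$, so $\mathbb{E}_{A \sim P_{X_t}}[r_{t+1}(A)] = \langle P_{X_t}, r_{t+1}\rangle$ cancels the linear term. Since $\norm{\rho_{t+1}}_2^2 \le \abs{\actions}$ for rewards in $[0,1]$, taking expectations and telescoping yields $\mathbb{E}[\Psi_T] \le \abs{\actions} T$. Finally $\max_a \Phi_T(a) \le \max_a \Phi_T^+(a) \le \norm{\Phi_T^+}_2 = \sqrt{\Psi_T}$, so $\Re_{\operatorname{full}}(T, \operatorname{RM}, (r_t)_t) \le \sqrt{\abs{\actions} T} = \mathcal{O}(\sqrt{T\abs{\actions}})$.

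The main obstacle I anticipate is the RM analysis, specifically the rigorous verification of the Blackwell condition $\mathbb{E}[\langle \Phi_t^+, \rho_{t+1}\rangle \mid H_{t+1}] = 0$ and the careful bookkeeping of the conditional expectation so that the cross term vanishes exactly rather than merely being controlled. One must treat the degenerate case $\Phi_t^+ = \vzero$ (where the algorithm falls back to the uniform distribution) separately, and confirm that the range bound $\norm{\rho_{t+1}}_2^2 \le \abs{\actions}$ is what produces the $\sqrt{\abs{\actions}}$ dependence. By contrast, the only delicate point in the Hedge argument is the learning-rate schedule, which is routine.
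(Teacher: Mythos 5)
Your proposal is correct, but it is worth noting that the paper never actually proves this proposition: it is stated with a citation to the classical literature (Freund--Schapire for Hedge, Hart--Mas-Colell for regret matching) and treated as a known fact, so your self-contained argument supplies something the paper only imports. Both halves of your argument are the standard ones and are sound: the log-sum-exp potential with Hoeffding's lemma and a tuned (or doubling-trick) learning rate for Hedge, and the quadratic potential $\norm{\Phi_t^+}_2^2$ with the Blackwell cancellation for RM. The more useful observation is that your RM analysis is structurally identical to the machinery the paper does develop for its novel bandit-feedback result, \cref{thm:iwe-rm}: your elementary inequality $(\max(0,u+v))^2 \le (\max(0,u))^2 + 2\max(0,u)\,v + v^2$ is exactly \cref{lem:sum-max-square}, your Blackwell step is the full-information analogue of \cref{lem:blackwell} (there, because of the importance-weighted construction, the cross term vanishes pointwise, whereas in your setting it vanishes only in conditional expectation over $A_t \sim P_{X_t}$), and your telescoping is \cref{lem:regret-bound}. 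So your proof is, in effect, the specialization of the paper's own IWE-RM analysis to the full-information case, where no importance weighting or exploration mixing $\gamma_t$ is needed and the clean $\mathcal{O}(\sqrt{T\abs{\actions}})$ rate emerges instead of $\mathcal{O}(T^{2/3}\abs{\actions}^{2/3})$. Two notational slips should be fixed: since $\Phi_{t+1} = \Phi_t + \rho_t$, the one-step expansion must pair $\Phi_t^+$ with $\rho_t$, not $\rho_{t+1}$; and the Blackwell condition has to be stated conditionally on the information available \emph{before} $A_t$ is drawn (the history $H_t$ together with $r_t$), not on $H_{t+1}$, which already contains $A_t$ and would destroy the cancellation. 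With those indices corrected, the stated bounds $\mathcal{O}(\sqrt{T\log\abs{\actions}})$ and $\mathcal{O}(\sqrt{T\abs{\actions}})$ follow exactly as you argue.
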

\begin{algorithm}[H]
    \caption{No Regret Update for Full-Information Feedback}
    \label{alg:NR}
    \begin{algorithmic}[1]
    \STATE{Initialize $X_1$}
    \FOR{round $t = 0, 1, \ldots, T - 1$ }
    \STATE Sample action $ A_t \sim P_{X_t} $,
    \STATE Observe full-information feedback $ f_{\theta}(a, B_t) $ for all $a\in\actions$,
    \STATE Update: $X_{t+1} = g_t(X_t, (f_{\theta}(a, B_t) )_{a\in \actions} )$.
    \ENDFOR
  \end{algorithmic}
\end{algorithm}

\section{Updating rules in various games}\label{app:update}
The \cref{assump:noise-likelihood} together with the stucture of mean reward functions gives the following Bayesian update rule of posterior. It is also possible to use Bayesian updated algorithm in frequentist setting, with a slightly different treatment.
\paragraph{Posterior distribution for Linear Gaussian model (Parametric).}
Let's consider the linear Gaussian model with a Gaussian prior $N(\mu_p, \Sigma_p)$ and noise likelihood $N(0, \sigma_w^2)$. Here are the key aspects of the model:
  \begin{itemize}
    \item Prior Assumptions: We assume a zero-mean prior with covariance $\Sigma_p = \sigma_p I$, where $\sigma_p$ is a scalar parameter satisfying $\sigma_p \leq 1$.
    \item Feature Map Assumptions: We assume that the feature map $\phi(a, b)$ satisfies $|\phi(a, b)| \leq 1$.
    \item Covariance Matrix Update: Given the initial covariance matrix $\Sigma_0 = \Sigma_p$, the covariance matrix at time $t+1$ is updated as: 
    \begin{align*}
      \Sigma_{t+1} =
      \left(\Sigma_{t}^{-1}+\frac{1}{\sigma_w^{2}} \phi(A_{t}, B_t) \phi(A_{t}, B_t)^{\top}\right)^{-1}
    \end{align*}
    \item Mean Vector Update: Given the initial mean vector $\mu_0 = \mu_p$, the mean vector at time $t+1$ is updated as:
    \begin{align*}
      \mu_{t+1}=\Sigma_{t+1}\left(\Sigma_{t}^{-1} \mu_{t}+\frac{R_{t+1, A_{t}, B_t}}{\sigma_w^{2}} \phi( A_{t}, B_t ) \right)
    \end{align*}
    \item $\sigma_t(a, b) = \norm{ \phi(a, b) }_{\Sigma_{t}} $ and $\mu_t(a, b) = \phi(a, b)^\top \mu_t$
    \item In the case where the feature $\phi(a, b) = e_{a, b}$ is a one-hot vector, we denote $n_t(a, b)$ as the counts of occurrences of $(a, b)$ up to time $t$, the posterior variance is given by:
    \begin{align*}
      \sigma_t(a, b) = \sqrt{ \frac{\sigma_w^2}{\sigma_w^2 / \sigma^2_p(a, b) + n_{t}(a, b)} }
    \end{align*}
  \end{itemize}

\paragraph{Posterior distribution for Gaussian Process (Non-paramatric).}
In the non-parametric case of a Gaussian process (GP), the posterior distribution remains Gaussian as well. Here are the relevant details:
  \begin{itemize}
    \item Notation: We define the vector $\mathbf{k}_t((a, b))$ and $\mathbf{R}_t$, and the matrix $\mathbf{K}_{t}$ as follows:
    \begin{align*}
      \mathbf{k}_t(a, b) & = [ k( (A_0, B_0) , (a, b) ), \ldots, k( (A_{t-1}, B_{t-1}) , (a, b) ) ]^\top \\
      \mathbf{R}_{t} & = [ R_{1, A_0, B_0}, \ldots, R_{t, A_{t-1}, B_{t-1}} ]^\top \\
      \mathbf{K}_t(i,j) & = k( (A_i, B_i), (A_{j}, B_{j} ) )
    \end{align*}
    \item Variance Assumption: We assume that the variance satisfies $k(x, x) \le 1$ for all $x \in \mathcal{X}$.
    \item Posterior Variance: The posterior variance at time $t$ is given by:
    $
      \sigma^2_t(a, b) = k((a, b), (a, b)) - \mathbf{k}_t((a, b))^\top (\mathbf{K}_t + \sigma^2 \mI_t) \mathbf{k}_t (a, b)
    $
    \item Posterior Mean: The posterior mean at time $t$ is given by:
    $
      \mu_{t} (a, b) = \mathbf{k}_t((a, b))^\top ( \mathbf{K}_t + \sigma^2 \mI_t )^{-1} \mathbf{R}_t
    $
    \item Relationship to Linear Gaussian Model: If the kernel $k((a, b), (a, b)) = \phi(a, b)^\top \Sigma_p \phi(a, b)$ is composed of basis functions, the GP reduces to the linear Gaussian model with a prior covariance matrix $\Sigma_p$. This ensures coherence between the linear Gaussian model and the kernel model assumptions.
  \end{itemize}

\section{Details for importance weighted estimator in \cref{sec:algorithm}}
\label{sec:iwe}

\paragraph{Importance-weighted estimator.}
For any measurable function $h$ and probability distribution $(X_{a})_{a\in \actions}$ over a finite support $\actions$, we construct importance weighted estimator
\begin{align*}
  \tilde{h}(a) = \indict_{A=a} \frac{h(A)}{X_{a}}, \forall a \in \actions
\end{align*}
which is an unbiased estimator:
\begin{align*}
  \E{\tilde{h}(a)} = \E{ \indict_{A=a} }  h(a) / X_a = h(a).
\end{align*}

\paragraph{Exp3: Hedge with importance weighted estimator}
In this work, we sometimes call Exp3 as IWE-Hedge. The celebrated Exp3 algorithm construct an estimate of reward vector as
\begin{align*}
  \tilde{R}_{t+1}(a) = 1 - \frac{ \indict_{A_t = a} (1 - R_{t+1, A_t, B_t} ) }{ X_{t, a} } 
\end{align*}
We can observe that $\tilde{R}_{t+1}(a)$ is unbiased conditioned on history $H_t$. Given that $X_t$ is $H_t$-measurable and $A_t$ is conditionally independent with $W_{t+1}$ and $B_t$ given $H_t$, and using the fact $\indict_{A_t = a} R_{t+1, A_t, B_t} = \indict_{A_t = a} R_{t+1, a, B_t}$, we have:
\begin{align*}
  \E[t]{\tilde{R}_{t+1}(a)} = 
  1 - \E[t]{ \indict_{A_t = a} \frac{1 - R_{t+1, a, B_t} }{ X_{t, a}} } 
  = 1 - \E[t]{ \indict_{A_t = a} } \frac{1 - \E[t]{ f_{\theta}( a, B_t) } }{ X_{t, a}} = \E[t]{ f_{\theta}(a, B_t)}.
\end{align*}
Exp3\citep{auer2002nonstochastic} updates the strategy using $ X_{t+1, a} \propto X_{t, a} \exp(\eta_t \tilde{R}_{t+1}(a) )$.

\paragraph{Regret matching with importance weighted estimator.}
Using the importance-weighted estimator, we can obtain an unbiased estimator for the regret $\Re_t \in \R^\mathcal{A}$ at round $t$: 
\begin{align}
  \label{eq:rm_est}
  \tilde{\Re}_{t, a} = \frac{\indict_{A_t = a} R_{t+1, A_t, B_t} }{X_{t, a}} - R_{t+1, A_t, B_t} \frac{\hat{X}_{t,A_t}}{X_{t,A_t}}
\end{align}
and update the strategy as follows: Let the cumulative estimated reward be $\tilde{C}_{t, a} = \sum_{s=0}^t \tilde{\Re}_{s, a} $,
\begin{align}
  \label{eq:update}
  \hat{X}_{t+1, a} = \begin{cases}
    { \tilde{C}^+_{t, a} }/{ \sum_{a \in \actions} \tilde{C}_{t, a}^{+}  }, & \text{if } \sum_{a \in \actions} \tilde{C}_{t, a}^{+} > 0,\\
    \text{arbitrary vector on simplex, e.g. } 1/\actions, & \text{otherwise}
  \end{cases}
\end{align}
Here, the sampling distribution $X_{t, a}$ is mixed with uniform distribution 
\begin{align}
  \label{eq:mixture}
  X_{t, a} = ( 1 - \gamma_t) \hat{X}_{t, a} + \gamma_t ( 1 / \actions ), \forall a \in \actions
\end{align}
The detailed algorithm for Importance-weighted estimator regret matching (IWE-RM) is as follows:
\begin{algorithm}[H]
  \caption{Importance-weighted estimator with Regret Matching (IWE-RM)}
  \label{alg:iwe-rm}
  \begin{algorithmic}[1]
  \STATE {{\bfseries{Input}}: init $X_1 = \hat{X}_1$ as uniform probability vector over $\mathcal{A}$ and sequence $(\gamma_t)_{t\ge 0}$} 
  \FOR{round $t = 0, 1, \ldots, T-1$ }
  \STATE Sample action $ A_t \sim P_{X_t} $
  \STATE Observe noisy bandit feedback $ R_{t+1, A_t, B_t} $.
  \STATE Construct regret estimator $\tilde{\Re}_{t}$ with importance weighted estimation by \cref{eq:rm_est}.
  \STATE Update $\hat{X}_{t+1}$ and $X_{t+1}$ with \cref{eq:update,eq:mixture} and $\gamma_t$.
  \ENDFOR
  \end{algorithmic} 
  \end{algorithm}
  \begin{fact}
  Importance weighted estimator $\tilde{R}_{t+1}$ at round $t$ is $\sigma(H_t, A_t, R_{t+1, A_t, B_t})$-measurable.
\end{fact}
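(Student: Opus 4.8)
\emph{Proof plan.} The plan is to reduce the claim to a single structural observation: the sampling weights $X_t = (X_{t,a})_{a \in \mathcal{A}}$ (and, for the regret-matching variant, the unmixed weights $\hat{X}_t$) are $H_t$-measurable. Granting this, the explicit formula
\[
\tilde{R}_{t+1}(a) = 1 - \frac{\indict_{A_t = a}\,(1 - R_{t+1, A_t, B_t})}{X_{t, a}}
\]
exhibits $\tilde{R}_{t+1}(a)$ as a composition of measurable pieces: $X_{t,a}$ is $H_t$-measurable, hence $\sigma(H_t, A_t, R_{t+1, A_t, B_t})$-measurable; the indicator $\indict_{A_t = a}$ is $\sigma(A_t)$-measurable; and $R_{t+1, A_t, B_t}$ lies in the conditioning $\sigma$-algebra by definition. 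Since sums, products, and quotients (the denominators $X_{t,a}$ being strictly positive by the algorithm's uniform-mixed construction) of measurable functions are measurable, each coordinate $\tilde{R}_{t+1}(a)$, and therefore the vector $\tilde{R}_{t+1}$, is $\sigma(H_t, A_t, R_{t+1, A_t, B_t})$-measurable. The regret-matching estimator in \cref{eq:rm_est} would be handled identically, with the extra factor $\hat{X}_{t, A_t}$ absorbed once $\hat{X}_t$ is known to be $H_t$-measurable.

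The core step is therefore to prove that $X_t$ is $H_t$-measurable, which I would establish by a \emph{simultaneous} induction over $t$ that also carries the statement of the Fact itself, thereby resolving the apparent circularity between ``$X_t$ is $H_t$-measurable'' and ``$\tilde{R}_{t+1}$ is $\sigma(H_t, A_t, R_{t+1, A_t, B_t})$-measurable.'' For the base case, $X_1$ is a deterministic initialization and hence trivially measurable. For the inductive step, suppose $X_t$ is $H_t$-measurable; then the paragraph above yields that $\tilde{R}_{t+1}$ is $\sigma(H_t, A_t, R_{t+1, A_t, B_t})$-measurable. The update $X_{t+1} = g_t(X_t, \tilde{R}_{t+1})$ uses the deterministic map $g_t$, and I would argue that both arguments are $H_{t+1}$-measurable: $X_t$ is $H_t$-measurable and $H_t \subseteq H_{t+1}$, while the information defining $\tilde{R}_{t+1}$, namely $A_t$, $B_t$, and $R_{t+1, A_t, B_t} = \mathcal{R}(Y_{t+1, A_t, B_t})$, is contained in the appended history entry $(A_t, B_t, Y_{t+1, A_t, B_t})$ of $H_{t+1}$. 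Hence $X_{t+1}$ is $H_{t+1}$-measurable, closing the induction.

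The only genuine subtlety — and the point I expect to require the most care — is precisely this bookkeeping about which quantities are already encoded in the history. The crucial observations are that the no-regret update map $g_t$ is deterministic, that the bounded reward $R$ is a fixed measurable transform $\mathcal{R}$ of the observed noisy signal $Y$ so that conditioning on $R_{t+1, A_t, B_t}$ is implied by including $Y_{t+1, A_t, B_t}$ in $H_{t+1}$, and that the previously constructed imagined rewards fold entirely into $H_t$ rather than introducing fresh randomness. There is no hard analytic content here; the work lies in making the recursive measurability argument airtight and in confirming that the division by $X_{t,a}$ is well defined under the algorithm's sampling distribution.
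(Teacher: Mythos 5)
Your proposal is correct and follows exactly the reasoning the paper relies on: the paper states this Fact without proof, treating it as immediate from the explicit IWE formula together with the (inductively obvious) $H_t$-measurability of the sampling weights $X_t$ (and $\hat{X}_t$ for IWE-RM). Your write-up simply makes that implicit induction and the positivity of the denominators explicit, which is the natural — and essentially the only — argument here.
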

\begin{remark}
  For any $t\in \NN$, the imagined reward vector $\tilde{R}_{t+1}$ constructed by importance weighted estimator satisfies $\E{ \tilde{R}_{t+1}(a) \given H_t, \theta} = \E{ f_{\theta}(a, B_t) \given H_t, \theta} = \E{ g_{\theta}(e_a, Y_t) \given H_t, \theta }$ for any $a \in \actions$. 
  Therefore, we have $\E{ \operatorname{pess}_{t+1} \given \theta } = \E{ \operatorname{est}_{t+1} \given \theta } = 0$.
\end{remark}
\begin{lemma}
	\label{lem:sum-max-square}
	For all real $a$, define $a^+ = \max\{a,0\}$. For all $a, b$, it is the case that
	\begin{align*}
	\left( (a+b)^+ \right)^2 \leq (a^+)^2 + 2 (a^+)b + b^2
	\end{align*}
\end{lemma}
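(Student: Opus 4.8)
The plan is to prove this pointwise inequality by a direct case analysis on the signs of $a+b$ and of $a$, since the positive-part operation $a^+ = \max\{a,0\}$ is piecewise linear and each case reduces to an elementary polynomial comparison. The guiding observation is that the right-hand side is exactly the second-order expansion $(a^+)^2 + 2(a^+)b + b^2$, which should be compared against $((a+b)^+)^2$; depending on the sign pattern, the two either coincide or the left side collapses to zero.

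First I would split on whether $a+b \le 0$. If $a+b \le 0$, then $(a+b)^+ = 0$ and the left-hand side vanishes, so it suffices to check nonnegativity of the right-hand side. I would verify this by a further split: when $a \le 0$ we have $a^+ = 0$ and the right-hand side equals $b^2 \ge 0$, while when $a > 0$ we have $a^+ = a$ and the right-hand side equals $(a+b)^2 \ge 0$. Both subcases are immediate.

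Next, when $a+b > 0$, the left-hand side equals $(a+b)^2$. If additionally $a \ge 0$, then $a^+ = a$ and the right-hand side is precisely $(a+b)^2$, so equality holds. The remaining subcase, $a < 0$ together with $a+b > 0$, is the only nontrivial one: here $a^+ = 0$ so the right-hand side collapses to $b^2$, and I must establish $(a+b)^2 \le b^2$.

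The main obstacle, really the only one, is this last subcase, and it is dispatched by noting that $a < 0$ and $a+b > 0$ together force $0 < a+b < b$ (in particular $b > 0$), whence $(a+b)^2 < b^2$. Collecting the four cases establishes the inequality for all real $a,b$. As a conceptual remark, the same bound follows from the observation that $\phi(x) = (x^+)^2$ has the $2$-Lipschitz derivative $\phi'(x) = 2x^+$, so the descent lemma gives $\phi(a+b) \le \phi(a) + \phi'(a)b + b^2$; the explicit case analysis simply renders this self-contained without invoking smoothness machinery.
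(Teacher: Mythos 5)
Your proof is correct: all four cases in your sign analysis check out, and the only nontrivial subcase ($a<0$, $a+b>0$, where the claim reduces to $(a+b)^2 \le b^2$ via $0 < a+b < b$) is handled properly. However, the paper takes a genuinely different and much shorter route: it observes the chain $(a+b)^+ \le (a^+ + b)^+ \le \abs{a^+ + b}$, where the first inequality follows from $a \le a^+$ and the monotonicity of $x \mapsto x^+$, and the second from $x^+ \le \abs{x}$; squaring both (nonnegative) ends of the chain then gives $\left((a+b)^+\right)^2 \le (a^+ + b)^2 = (a^+)^2 + 2a^+ b + b^2$ in one line. The paper's argument buys brevity and avoids any branching, at the cost of having to spot the right intermediate quantity $(a^+ + b)^+$; your case analysis is more mechanical and self-verifying, and your closing remark---that the inequality is an instance of the descent lemma for $\phi(x) = (x^+)^2$, whose derivative $2x^+$ is $2$-Lipschitz---is a nice conceptual framing that neither proof makes explicit, and it correctly explains why the ``second-order expansion'' structure of the right-hand side is no accident.
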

\begin{proof}
	\(
	(a+b)^+ \leq (a^+ +b)^+ \leq \abs{a^+ + b}.
	\)
\end{proof}

\begin{lemma}
For all vector $v \in \R^{\mathcal{A}}$, define $v^+ = (v_{a}^+ )_{a \in \mathcal{A}}$.
  Following~\cref{alg:iwe-rm}, we have the important observation
	\[
	\<\tilde{C}_{t-1}^+,\tilde{\Re}_{t}> \leq 0
	\]
	\label{lem:blackwell}
\end{lemma}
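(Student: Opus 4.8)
The plan is to establish a Blackwell-type orthogonality relation: I would show that the estimated regret vector $\tilde{\Re}_t$ is ``orthogonal'' to the current mixed strategy $\hat{X}_t$, and then use the fact that, by the regret-matching update \cref{eq:update}, the vector $\tilde{C}_{t-1}^+$ is a nonnegative scalar multiple of $\hat{X}_t$. Together these force the inner product to vanish.

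First I would record the structure of the estimator. Writing $a^\star = A_t$ for the realized action and $R = R_{t+1, A_t, B_t}$ for the realized (scalar) reward, \cref{eq:rm_est} reads
\[
\tilde{\Re}_{t,a} = \indict_{a = a^\star}\,\frac{R}{X_{t, a^\star}} \;-\; R\,\frac{\hat{X}_{t, a^\star}}{X_{t, a^\star}}.
\]
The point to notice is that the first summand is supported only on $a = a^\star$, while the second summand is a single constant that does not depend on the index $a$. This decomposition is what makes the orthogonality exact rather than merely in expectation.

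Next I would split into the two branches of \cref{eq:update}. If $\sum_{a}\tilde{C}_{t-1,a}^+ = 0$, then $\tilde{C}_{t-1}^+$ is the zero vector and $\langle \tilde{C}_{t-1}^+, \tilde{\Re}_t\rangle = 0$ trivially. Otherwise $\sum_{a}\tilde{C}_{t-1,a}^+ > 0$, in which case the update gives $\hat{X}_{t,a} = \tilde{C}_{t-1,a}^+ / \sum_{a'}\tilde{C}_{t-1,a'}^+$, i.e. $\tilde{C}_{t-1}^+ = \big(\sum_{a'}\tilde{C}_{t-1,a'}^+\big)\hat{X}_t$. Pulling out the nonnegative scalar reduces the claim to $\langle \hat{X}_t, \tilde{\Re}_t\rangle = 0$, which is the one-line computation
\[
\langle \hat{X}_t, \tilde{\Re}_t\rangle = \hat{X}_{t,a^\star}\,\frac{R}{X_{t,a^\star}} - R\,\frac{\hat{X}_{t,a^\star}}{X_{t,a^\star}}\sum_{a}\hat{X}_{t,a} = 0,
\]
using that $\hat{X}_t$ is a probability vector so that $\sum_a \hat{X}_{t,a} = 1$.

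I do not expect a substantive obstacle here: the relation actually holds with equality, so the asserted inequality $\langle \tilde{C}_{t-1}^+, \tilde{\Re}_t\rangle \le 0$ is immediate. The only points needing care are bookkeeping ones — aligning the time index so that $\hat{X}_t$ is driven by $\tilde{C}_{t-1}^+$ (since the update in \cref{eq:update} produces $\hat{X}_{t+1}$ from $\tilde{C}_t$), correctly isolating the $a$-independent baseline term of $\tilde{\Re}_{t,a}$, and separately disposing of the degenerate branch where every cumulative positive regret vanishes and the arbitrary-simplex-vector option of \cref{eq:update} is invoked.
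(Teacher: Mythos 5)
Your proposal is correct and takes essentially the same approach as the paper: both dispose of the degenerate branch trivially (there $\tilde{C}_{t-1}^+$ is the zero vector) and, in the non-degenerate case, exploit the proportionality $\tilde{C}_{t-1}^+ = \bigl(\sum_{a}\tilde{C}_{t-1,a}^+\bigr)\hat{X}_t$ from the update rule to show the inner product is exactly zero. The only cosmetic difference is that you normalize first and compute $\langle \hat{X}_t, \tilde{\Re}_t\rangle = 0$ via $\sum_a \hat{X}_{t,a}=1$, whereas the paper substitutes $\hat{X}_{t,A_t} = \tilde{C}_{t-1,A_t}^+/\sum_{a}\tilde{C}_{t-1,a}^+$ directly into the unnormalized inner product; the algebra is identical.
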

\begin{proof}
Suppose at round $t$, Alice choose $A_t \sim P_{X_t}$ and receive the feedback $R_{t+1, A_t, B_t}$. By algorithm~\ref{alg:iwe-rm} and \cref{eq:rm_est,eq:update,eq:mixture},
If $\sum_{a} \tilde{C}_{t-1,a}^+ \leq 0$, then obviously  $\tilde{C}^+_{t-1,a}=0$ for all action $a \in \mathcal{A}$. 
Then, the lemma trivially holds.

Otherwise, we have
\begin{align*}
    \<\tilde{C}_{t-1}^+, \tilde{\Re}_{t}>
    & = R_{t+1, A_t, B_t} \left(   \frac{\tilde{C}_{t-1, A_t}^+}{ X_{t, A_t} } - \frac{ \hat{X}_{t, A_t}}{{X}_{t, A_t}}\sum_{a}\tilde{C}_{t-1,a}^+ \right) \\
    & = R_{t+1, A_t, B_t} \left(   \frac{\tilde{C}_{t-1, A_t}^+}{ X_{t, A_t} } - \frac{ \tilde{C}_{t-1, A_t}^+ /\sum_{a} \tilde{C}_{t-1, a}^+ }{{X}_{t, A_t}}\sum_{a}\tilde{C}_{t-1,a}^+ \right) = 0
  \end{align*}
  
\end{proof}

\begin{lemma}
  Following algorithm~\ref{alg:iwe-rm}, we have an important inequality
    \[
        \sum_{a} \left({\tilde{C}_{T, a}^+}\right)^2 \leq  \sum_{t=1}^T \sum_a \left( \tilde{\Re}_{t, a} \right)^2
    \]
\label{lem:regret-bound}
\end{lemma}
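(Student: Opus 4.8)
The plan is to prove the inequality by treating $\norm{\tilde{C}_t^+}^2 = \sum_a (\tilde{C}_{t,a}^+)^2$ as a Blackwell-type potential and showing that it grows by at most $\sum_a (\tilde{\Re}_{t,a})^2$ per round, then telescoping. The whole argument rides on the additive recursion $\tilde{C}_{t} = \tilde{C}_{t-1} + \tilde{\Re}_{t}$ that is immediate from the definition $\tilde{C}_{t,a} = \sum_{s=0}^t \tilde{\Re}_{s,a}$, combined with the two lemmas just established, which were tailored precisely for this computation.

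First I would expand a single step of the potential. Fixing a round $t$ and an action $a$, I write $\tilde{C}_{t,a}^+ = (\tilde{C}_{t-1,a} + \tilde{\Re}_{t,a})^+$ and apply \cref{lem:sum-max-square} with the substitution $a \leftarrow \tilde{C}_{t-1,a}$, $b \leftarrow \tilde{\Re}_{t,a}$, giving for every action
\[
(\tilde{C}_{t,a}^+)^2 \le (\tilde{C}_{t-1,a}^+)^2 + 2\,(\tilde{C}_{t-1,a}^+)\,\tilde{\Re}_{t,a} + (\tilde{\Re}_{t,a})^2.
\]
Summing over all $a \in \mathcal{A}$ collapses the middle term into an inner product, producing
\[
\norm{\tilde{C}_t^+}^2 \le \norm{\tilde{C}_{t-1}^+}^2 + 2\,\<\tilde{C}_{t-1}^+, \tilde{\Re}_t> + \sum_a (\tilde{\Re}_{t,a})^2.
\]
Next I would invoke \cref{lem:blackwell}, which states exactly that $\<\tilde{C}_{t-1}^+, \tilde{\Re}_t> \le 0$; this is the step where the specific regret-matching update in \cref{eq:rm_est,eq:update} is used. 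Dropping this nonpositive cross term yields the clean one-step bound $\norm{\tilde{C}_t^+}^2 \le \norm{\tilde{C}_{t-1}^+}^2 + \sum_a (\tilde{\Re}_{t,a})^2$, and telescoping it across rounds accumulates the per-round squared regrets while the potential terms cancel.

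The only real subtlety, rather than an obstacle, is the bookkeeping at the boundary: I must verify that the initial potential $\norm{\tilde{C}_0^+}^2$ either vanishes or is itself dominated by its round's squared regret, so that the telescoped sum matches the stated index range. This is handled by the trivial bound $\norm{v^+}^2 \le \norm{v}^2$, which lets me fold the base contribution into the sum and recover $\sum_a (\tilde{C}_{T,a}^+)^2 \le \sum_{t} \sum_a (\tilde{\Re}_{t,a})^2$. Because \cref{lem:sum-max-square} supplies the per-coordinate quadratic expansion and \cref{lem:blackwell} kills the cross term, there is no conceptual difficulty remaining; the entire content of the lemma is the Blackwell nonpositivity that the preceding lemma already provides.
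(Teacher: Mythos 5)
Your proposal is correct and follows essentially the same route as the paper's proof: expand one step of the potential via \cref{lem:sum-max-square}, kill the cross term with the Blackwell condition of \cref{lem:blackwell}, and telescope. Your explicit treatment of the base term $\sum_a (\tilde{C}_{0,a}^+)^2$ is slightly more careful than the paper, which simply concludes ``by telescoping series,'' but the substance is identical.
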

\begin{proof}
	Since from the update rule,
	\begin{align*}
	{\tilde{C}_{T, a}} = {\tilde{C}_{T-1, a}} + \tilde{\Re}_{T, a},
	\end{align*}
	by Lemma \ref{lem:sum-max-square} and \ref{lem:blackwell}:
	\begin{align*}
	\sum_{a} \left({\tilde{C}_{T, a}^+}\right)^2
	&\leq 
	\sum_a \left( \left({\tilde{C}_{T-1, a}^+}\right)^2
	+ 2 {\tilde{C}_{T-1, a}^+} {\tilde{\Re}_{T, a}}
	+ \left(\tilde{\Re}_{T, a} \right)^2 \right)\\
	&\leq \sum_{a} \left({\tilde{C}_{T-1, a}^+}\right)^2 + \sum_{i}(\tilde{\Re}_{T, a})^2
	\end{align*}
	By telescoping series, we conclude the lemma.
\end{proof}

\begin{proof}[Proof of Theorem~\ref{thm:iwe-rm}]
Notice that the action $A_t$ selected by Alice and the action $B_t$ selected by Bob is independent conditioned on history $H_t$ and $\theta$.
According to the algorithm~\ref{alg:iwe-rm},
the conditional expectation of the estimated immediate regret is
\begin{align*}
\E{\tilde{\Re}_{t, a} \mid \theta, H_t}
= \E{ f_{\theta}(a, B_t) \mid \theta, H_t } - \sum_{a} \hat{X}_{t, a} \E{ f_{\theta}(a, B_t) \mid \theta, H_t }
\end{align*}
\paragraph{Step 1 (Bounding the bias of estimated regret.)}
Recall the definition of immediate regret at time $t$ conditioned on history is
\[
    \Re_t(a) := \E{R_{t+1, a, B_t} - R_{t+1, A_t, B_t} \mid \theta, H_t} = \E{ f_{\theta}(a, B_t) \mid \theta, H_t } - \sum_{a} {X}_{t, a} \E{ f_{\theta}(a, B_t) \mid \theta, H_t }
\]
In the following, we use short notation $\E[t]{\cdot} = \E{ \cdot \mid H_t, \theta }$.
For any $a \in \mathcal{A}$, the difference with the estimated regret under conditional expectation is
\begin{align*}
\Re_t(a) - \E[t]{\tilde{\Re}_{t, a}} 
& = \sum_{a} (\hat{X}_{t, a} - X_{t, a}) \sum_{b} Y_{t, b} f_{\theta}(a, b) \\
& = \sum_{a} ( \gamma_t \hat{X}_{t, a} - \gamma_t /\mathcal{A} ) \sum_{b} Y_{t, b} f_{\theta}(a, b) \\
& \leq \sum_{a} \abs{ \gamma_t \hat{X}_{t, a} - \gamma_t /\mathcal{A} } \\
& \le \sum_{a} \left( \abs{ \gamma_t \hat{X}_{t, a}} + \abs{ \gamma_t /\mathcal{A} } \right) = 2 \gamma_t
\end{align*}
\paragraph{ Step 2 (Bounding the potential.)}
For any $a \in \mathcal{A}$,
\begin{align*}
    \E[0]{ \tilde{C}_{T, a} }
    & \leq \E[0]{ \tilde{C}_{T, a}^+ }
    = \E[0]{ \sqrt{ (\tilde{C}_{T,a}^+)^2} }
    \leq \E[0]{ \sqrt{ \sum_a \left( \tilde{C}_{T, a}^+ \right)^2} },
\end{align*}
where the last inequality is due to the Jensen inequality.
By lemma~\ref{lem:regret-bound} and taking expectation,
\begin{align*}
    \E[0]{ \sqrt{ \sum_a \left(\tilde{C}^+_{T,i} \right)^2} }\leq \E[0]{ \sqrt{ \sum_{t=1}^T \sum_a \left( \tilde{\Re}_{t, a} \right)^2 } }
\end{align*}
The RHS of the above inequality can be bounded as
\begin{align*}
    \E[0]{\sum_{t=1}^T \sum_a \left( \tilde{\Re}_{t, a} \right)^2} 
    & = \E[0]{\sum_{t=1}^T \sum_a \left(\frac{\indict_{A_t = a} R_{t+1, A_t, B_t} }{X_{t, a}} - R_{t+1, A_t, B_t} \frac{\hat{X}_{t,A_t}}{X_{t,A_t}} \right)^2} \\
    & = \E[0]{ \sum_{t=1}^T \sum_{a} 
    R_{t+1, A_t, B_t}^2 \left(
    \left(\frac{\indict_{A_t = a}}{X_{t,A_t}} \right)^2 -  \frac{2 \hat{X}_{t, A_t} }{ X_{t, A_t}^2} \indict_{A_t=a}
    + \frac{\hat{X}_{t, A_t}^2}{X_{t, A_t}^2} \right)}  \\
    & = \E[0]{
    \sum_{t=1}^T R_{t+1, A_t, B_t}^2
    \left( \frac{1}{X_{t, A_t}^2}
    - \frac{2 \hat{X}_{t, A_t} }{ X_{t, A_t}^2}
     + \abs{\mathcal{A}} \frac{\hat{X}_{t, A_t}^2}{X_{t, A_t}^2}
    \right)
    } \\
    & = \E[0]{
    \sum_{t=1}^T \E[t]{R_{t+1, A_t, B_t}^2
    \left( \frac{1}{X_{t, A_t}^2}
    - \frac{2 \hat{X}_{t, A_t} }{ X_{t, A_t}^2}
     + \abs{\mathcal{A}} \frac{\hat{X}_{t, A_t}^2}{X_{t, A_t}^2}
    \right)
     } },
\end{align*}
where we have the following derivation by the fact 
\[
    \E[t]{R_{t, a, b}^2} := \E[t]{ (f_{\theta]}(a, b) + W_{t+1})^2 } = \E[t]{ f_{\theta]}(a, b)^2 + W_{t+1}^2 } \le 1 + \sigma_w^2
\]
and the fact $0 \le {\hat{X}_{t, a}}/{X}_{t, a} \le 1/(1-\gamma_t)$ for all $a \in \mathcal{A}$,
\begin{align*}
  & \E[t]{R_{t+1, A_t, B_t}^2
    \left( \frac{1}{X_{t, A_t}^2}
    - \frac{2 \hat{X}_{t, A_t} }{ X_{t, A_t}^2}
     + \abs{\mathcal{A}} \frac{\hat{X}_{t, A_t}^2}{X_{t, A_t}^2}
    \right)} \\
  & = \sum_{a} \frac{\E[t]{R_{t+1,a, B_t}^2}}{X_{t, a}} 
  + \sum_{a} \frac{\hat{X}_{t, a}}{ {X}_{t, a} } 
  \E[t]{R_{t+1, A_t, B_t}^2} 
  \left(\abs{\mathcal{A}} \hat{X}_{t,a} - 2 \right)\\
  & \le (1+ \sigma_w^2) \left( \sum_{a} \frac{1}{{X}_{t, a}} 
  + \sum_{a} \frac{\hat{X}_{t, a}}{ {X}_{t, a} } \left( \abs{\mathcal{A}} \hat{X}_{t, a} - 2 \right) \right)\\
  & \le (1+ \sigma_w^2) \left( \sum_{a} \frac{1}{X_{t, a}} 
  + \sum_{a} \frac{\hat{X}_{t, a}}{{X}_{t, a}} \left(\abs{\mathcal{A}} - 2 \right) \right) \\
  & \le (1+ \sigma_w^2) \left( \sum_{a} 
  \frac{\abs{\mathcal{A}}}{\gamma_t} + 
  \min( \frac{\abs{\mathcal{A}}}{\gamma_t}, \frac{\abs{\mathcal{A}}}{1- \gamma_t}) (\abs{\mathcal{A}}-2) \right)
  \le \frac{2 (1+ \sigma_w^2) \abs{\mathcal{A}}^2}{\gamma_t}
\end{align*}
Then, we derive one important relationship
\begin{align*}
    \E[0]{ \sqrt{ \sum_a \left(\tilde{C}^+_{T,i} \right)^2} }\leq \E[0]{ \sqrt{ \sum_{t=1}^T \sum_a \left( \tilde{\Re}_{t, a} \right)^2 } } \le \sqrt{ \sum_{t=1}^T \frac{2 (1+ \sigma_w^2) \abs{\mathcal{A}}^2 }{\gamma_t} }
\end{align*}

\paragraph{Step 3 (Put all together.)}
\begin{align*}
    \E[0]{ \sum_{t=1}^T \Re_{t}(a) } = \E[0]{ \sum_{t=1}^T \E[t]{\tilde{\Re}_{t, a}} + \sum_{t=1}^T 2\gamma_t }
    = \E[0]{ \tilde{C}_{T, a} + \sum_{t=1}^T 2\gamma_t } \le \sqrt{ \sum_{t=1}^T \frac{2 (1+ \sigma_w^2)\abs{\mathcal{A}}^2 }{\gamma_t} } + 2\gamma_t
\end{align*}
When $\gamma_t= \gamma$,
\begin{align*}
\E[0]{ \sum_{t=1}^T \Re_{t}(a) } \leq \sqrt{T}\sqrt{\frac{2 (1+ \sigma_w^2) \abs{\mathcal{A}}^2}{\gamma}} + 2 \gamma T ,
\end{align*}
Taking $\gamma = \sqrt[3]{((1+ \sigma_w^2)\abs{\mathcal{A}}^2)/{2T}}$, we have
\begin{align*}
    \E[0]{ \sum_{t=1}^T \Re_{t}(a) } \le 2^{4/3}(1+ \sigma_w^2)^{1/3} \abs{\mathcal{A}}^{2/3} T^{2/3}.
\end{align*}
\end{proof}

\section{Failure analysis for Thompson sampling in \cref{sec:divergence}}
\label{sec:failure_analysis}
\subsection{Basic setting of the counter example} 
Consider a class of matrix games with a payoff matrix $\theta$ defined as 
\begin{equation}
\theta=\begin{bmatrix} 
1 & 1-\Delta \\
1-\Delta & 1 
\end{bmatrix},
\end{equation} 
where $\Delta\in(0,1)$. At time $t$, Alice plays action $A_t \sim P_{X_t}$, and Bob is a best response player who can observe $X_t$ and play the best response strategy $Y_t = \max_{y} X_t^\top \theta y$ by selecting action $B_t \sim P_{Y_t}$. Alice receives the noiseless reward $R_{t+1, A_t, B_t} = \theta_{A_t, B_t}$. 
From the example $\theta$, we can observe the following:
\begin{itemize}
    \item Observation $1$: When Alice uses a pure strategy, she suffers a regret of $\Delta$ at that round due to Bob's best-response strategy.
    \item Observation $2$: The best-response strategy for a uniform strategy is also a uniform strategy. 
\end{itemize} 
Let's define the following terms:
\begin{itemize}
    \item $X_t$ and $reg_t$: Alice's strategy and instantaneous regret at time $t$.
    \item $\Tilde{R}_t$ and $m_t$: $\Tilde{R}_t = \left[ \Tilde{R}_{t1}, \Tilde{R}_{t2} \right]$ is the estimated reward vector by TS-RM estimator; $m_t = \Tilde{R}_{t1}-\Tilde{R}_{t2}$ represents the difference between two rewards. 
\end{itemize} 
Now, let's consider the following remark regarding initialization:
\begin{remark}[Initialization]\label{rem:rm01}
The TS-RM algorithm for Alice, initialized with a uniform strategy, will always result in a pure strategy in $X_2$. 
\end{remark}
\begin{proof} 
In the regret-matching algorithm, the instantaneous regret $reg_1$ at $t=1$ can be represented as
\begin{equation}
    reg_1 = \Tilde{R}_1-\Tilde{R}_1^TX_1 \cdot \mathbf{1},
\end{equation}
Since Alice and Bob are both initialized with a uniform strategy, i.e., $X_1=[0.5, 0.5]$, it can be observed that if $m_1\neq0$, the two elements in $reg_1$ will always have opposite signs. If $m_1=0$, $X_2=X_1$ and can still be regarded as an initialization step.According to the regret-matching updating rule, $X_2\propto\max(reg_1, 0)$, which means $X_2$ must be a pure strategy.
\end{proof} 
Based on Remark~\ref{rem:rm01}, we can draw the following conclusions regarding the counter example:
\begin{itemize}
    \item The choice of uniform initialization for the TS-RM algorithm does not affect the divergence result. 
    \item This result holds regardless of the specific action chosen by Alice at $t=1$, indicating that two symmetric conditions arise depending on whether $X_2=[1,0]$ or $X_2=[0,1]$. 
\end{itemize}

\subsection{TS-RM suffers linear regret}\label{apppro:tsrm} 
According to \cref{rem:rm01}, without loss of generality (w.l.o.g.), let us define the following events: 
\begin{itemize}
    \item Event $\omega_t$: Alice picks the second row and the best-response opponent chooses the first column at time $t$. 
    \item Event $\Omega_t$: Alice picks the second row and the best-response opponent chooses the first column for all time $t^\prime\leq t$. 
\end{itemize} 
The occurrence of event $\Omega_t$ implies that Alice experiences linear regret until time $t$. However, the actual convergence probability is greater than $\prob(\Omega_t)$ since even if $\Omega_t$ does not occur (i.e., Alice occasionally chooses the optimal result $1$), there is still a probability that TS-RM fails. Quantifying this probability is challenging. If we can demonstrate that $\Omega_t$ occurs with a constant probability $c$, then the divergence probability of TS-RM should be greater than $c$. Specifically, due to the symmetric property of the example $\theta$, we obtain the following propositions:
\begin{proposition} 
If Alice initializes with a uniform strategy,
\begin{align*}
\Re(T) \ge 2 \prob(\Omega_T) \Delta\cdot T
\end{align*}
If $\Omega_t$ happens with constant probability for all $t \ge 1$, then Alice suffers linear regret. 
\end{proposition}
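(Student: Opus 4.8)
The plan is to convert the dynamical ``trap'' into a lower bound on the external regret by isolating the two symmetric absorbing configurations. First I invoke \cref{rem:rm01}: from the uniform start, $X_2$ collapses to a pure strategy, $[1,0]$ or $[0,1]$, and the row/column symmetry of $\theta$ makes these equiprobable. Writing $\Omega_T^{(1)} := \Omega_T$ (Alice always second row, Bob always first column) and letting $\Omega_T^{(2)}$ be its mirror (Alice always first row, Bob always second column), symmetry gives $\prob(\Omega_T^{(2)}) = \prob(\Omega_T^{(1)}) = \prob(\Omega_T)$, and the two events are disjoint.

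Next I compute the regret exactly on each trap. On $\Omega_T^{(1)}$ the realized play is $A_t \equiv 2,\ B_t \equiv 1$, so Alice earns $\theta_{2,1} = 1-\Delta$ each round while the best fixed action in hindsight is the first row, earning $\theta_{1,1} = 1$; thus the pathwise external regret $\max_a \sum_t \theta_{a,B_t} - \sum_t \theta_{A_t,B_t}$ equals $\Delta T$, and identically on $\Omega_T^{(2)}$. Restricting the regret expectation to the disjoint union $\Omega_T^{(1)} \cup \Omega_T^{(2)}$ then yields
\begin{align*}
  \Re(T)\ \ge\ \Delta T\,\big(\prob(\Omega_T^{(1)}) + \prob(\Omega_T^{(2)})\big)\ =\ 2\,\prob(\Omega_T)\,\Delta T,
\end{align*}
\emph{provided} the contribution of the complementary event is nonnegative.

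Securing this nonnegativity is the step I expect to be the main obstacle, since Bob plays a best response to $X_t$ and the pathwise external regret can be negative on sample paths where Alice's sampled actions happen to match Bob's. I would handle it with a nonnegative per-round surrogate: because Bob best-responds, $\E[t]{\theta_{A_t,B_t}} = X_t^\top \theta Y_t \le 1-\Delta/2$ (the game value), while the constant column sums $\theta_{1,b} + \theta_{2,b} = 2-\Delta$ force $\max_a \sum_t \theta_{a,B_t} \ge T(1-\Delta/2)$ pathwise. Combining these gives $\Re(T) \ge \sum_t \E{(1-\Delta/2) - X_t^\top \theta Y_t}$, a sum of nonnegative terms; each trapped round contributes $\Delta/2$, so after adding the symmetric branch this surrogate alone already gives $\Re(T) \ge \Delta T\,\prob(\Omega_T)$, which is what guarantees the off-trap remainder cannot destroy the sharper two-trap bound displayed above.

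Finally the linear-regret conclusion follows immediately: by \cref{prop:tsrm} there is a constant $c(\Delta,\sigma_n) > 0$ with $\prob(\Omega_t) \ge c$ for all $t \ge 1$, whence $\Re(T) \ge 2c\,\Delta\,T = \Omega(T)$. Hence TS-RM violates the $o(T)$ no-regret requirement, i.e.\ Alice suffers linear regret.
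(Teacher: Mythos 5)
Your route is the one the paper gestures at --- the paper in fact gives no written proof of this proposition beyond the phrase ``due to the symmetric property of the example $\theta$'' --- and much of what you write is correct and more rigorous than the source. In particular, your surrogate argument is valid: since Bob best-responds, $\mathbb{E}[\theta_{A_t,B_t}\mid H_t]=\min_y X_t^\top\theta y\le 1-\Delta/2$, while $\max_a \mathbb{E}\bigl[\sum_t \theta_{a,B_t}\bigr]\ge T(1-\Delta/2)$ because every column of $\theta$ sums to $2-\Delta$; on either trap the per-round surrogate gap is exactly $\Delta/2$, so $\Re(T)\ge \prob\bigl(\Omega_T^{(1)}\cup\Omega_T^{(2)}\bigr)\cdot \Delta T/2=\prob(\Omega_T)\,\Delta T$. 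That bound alone, combined with \cref{prop:tsrm}, gives the second sentence of the proposition (linear regret), which is the part actually used downstream.

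The genuine gap is that the displayed inequality with the factor $2$ is not established, and your patching step is a non sequitur. Two separate problems. First, the regret in \cref{eq:adv_regret} fixes a single comparator $A^*$ with the maximum \emph{outside} the expectation; no single row earns $\Delta T$ on both traps (row $1$ is the good comparator on $\Omega_T^{(1)}$, row $2$ on $\Omega_T^{(2)}$), so ``restricting the regret expectation to the disjoint union'' yields only $\prob(\Omega_T)\Delta T$ plus an off-trap remainder for that one comparator --- the two traps cannot be added under this definition. Second, even if one reads $\Re(T)$ as the expectation of the pathwise maximum (where the two traps do both contribute $\Delta T$), the existence of the weaker global bound $\Re(T)\ge\prob(\Omega_T)\Delta T$ says nothing about the sign of the off-trap remainder in the two-trap decomposition: they are different decompositions, and a lower bound on the whole does not control one of its pieces. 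A correct patch must bound the off-trap term itself, e.g.\ via
\begin{align*}
\mathbb{E}\Bigl[\1{\mathrm{off}}\max_a \textstyle\sum_t(\theta_{a,B_t}-\theta_{A_t,B_t})\Bigr]
\;\ge\; \mathbb{E}\bigl[\1{\mathrm{off}}\,S\bigr]
\;=\;\mathbb{E}[S]-\mathbb{E}\bigl[\1{\mathrm{traps}}\,S\bigr],
\qquad S:=\textstyle\sum_t\bigl((1-\Delta/2)-\theta_{A_t,B_t}\bigr),
\end{align*}
together with the exact computation $\mathbb{E}[\1{\mathrm{traps}}\,S]=\prob(\Omega_T)\Delta T$ and $\mathbb{E}[S]\ge \prob(\Omega_T)\Delta T$, i.e.\ the surrogate has to be split trap-by-trap rather than invoked globally; and note this rescue is unavailable under the paper's own fixed-comparator definition. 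As written, your proof delivers $\Re(T)\ge\prob(\Omega_T)\Delta T$ and the linear-regret conclusion, but not the constant claimed in the statement.
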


\begin{proposition}[Failure of TS-RM]\label{prop:tsrm_appendix}
Suppose Alice initializes with a uniform strategy and utilizes Regret Matching with Thompson Sampling estimator (TS-RM). For any $\Delta \in (0, 1)$ and $\sigma_n > 0$, there exists a constant $c(\Delta, \sigma_n) > 0$ such that for all rounds $t \ge 1$, $\prob(\Omega_t) \ge c(\Delta, \sigma_n)$.
\end{proposition}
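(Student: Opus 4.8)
\emph{Proof proposal.} The plan is to reduce the event $\Omega_t$ to a one-sided barrier event for a real-valued random walk with eventual negative drift, and then to lower bound the survival probability of that walk by a positive constant that does not depend on $t$. First I would fix the initialization. By \cref{rem:rm01}, starting from the uniform strategy Alice's strategy $X_2$ is pure, and by the symmetry of $\theta$ under simultaneously swapping both rows and both columns the two outcomes $X_2=[1,0]$ and $X_2=[0,1]$ are exchangeable; hence the favorable configuration (Alice on the second row, Bob best-responding with the first column, seeding the posterior of the cell $(2,1)$ at value $1-\Delta$) occurs with probability bounded below by an absolute constant, and I condition on it. Conditioned on $\Omega_\tau$ the realized pair is always $(2,1)$, so only that cell is ever queried: since the feedback is noiseless its posterior mean is a deterministic sequence converging to $1-\Delta$ with variance $\sigma_s^2(2,1)=\sigma_n^2/(\sigma_n^2/\sigma_p^2+s)\to 0$ (the one-hot update of \cref{app:update}), whereas $(1,1)$ is never queried and its posterior stays equal to the zero-mean prior. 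Because $A_s=2$ throughout, the regret-matching cumulative regret satisfies $C_\tau(2)=0$ and $C_\tau(1)=\sum_{s}m_{s+1}$ with $m_{s+1}=\tilde R_{s+1}(1)-\tilde R_{s+1}(2)$, and, up to the measure-zero event $\{C_\tau(1)=0\}$, Alice keeps the pure second-row strategy (so Bob keeps responding with the first column) exactly as long as the partial sums $C_\tau(1)$ stay nonpositive. Thus $\prob(\Omega_t)\ge (\text{const})\cdot\prob\big(\sup_{\tau}C_\tau(1)\le 0\big)$, and it suffices to bound the latter away from $0$ uniformly in $t$.

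Next I would analyze the increments. Under the conditioning the posteriors are deterministic in $s$ and the Thompson samples are drawn freshly each round, so the $m_{s+1}$ are \emph{independent} (though not identically distributed). Here $\tilde R_{s+1}(2)=\min(\tilde f_{s+1}(2,1),1)$ concentrates at $1-\Delta$, while $\tilde R_{s+1}(1)=\min(\tilde f_{s+1}(1,1),1)$ with $\tilde f_{s+1}(1,1)\sim\mathcal N(0,\sigma_p^2)$. The key elementary computation is $\mathbb{E}[\min(\tilde f_{s+1}(1,1),1)]\le\mathbb{E}[\tilde f_{s+1}(1,1)]=0<1-\Delta$ for \emph{every} $\Delta\in(0,1)$, so there exist $s_0$ and $\delta=\delta(\Delta,\sigma_n)>0$ with $\mathbb{E}[m_{s+1}]\le-\delta$ for all $s\ge s_0$, the increments being uniformly sub-Gaussian with a parameter depending only on $\sigma_p,\sigma_n$. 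This negative drift is the quantitative form of the ``insufficient optimism'' of vanilla TS: the untried but superior row is sampled from a prior whose mean lies below the value already observed for the inferior row, so a single high sample does not flip the accumulated regret.

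Finally comes the main obstacle: bounding $\prob(\sup_\tau C_\tau(1)\le 0)$ below by a positive constant independent of $t$. I would split at $s_0$. The transient consists of finitely many independent increments, and I claim a positive-probability event on which all partial sums through $s_0$ stay $\le 0$ and $C_{s_0}(1)\le -K$ for a chosen $K$ (each increment is negative with positive probability, since $\tilde f_{s+1}(1,1)$ can be made small). On the post-transient walk the increments have mean $\le-\delta$ and sub-Gaussian tails, so an exponential-supermartingale / optional-stopping (Chernoff) argument applied to $\exp(\lambda C_\tau(1))$ with $\lambda\asymp\delta$ gives that a walk started at $-K$ exceeds $0$ with probability at most $e^{-c'K}$; choosing $K$ so this is $\le 1/2$ yields $\prob(\sup_\tau C_\tau(1)\le 0)\ge \tfrac12\,\prob(\text{transient event})=:c(\Delta,\sigma_n)>0$. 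Since $\{C_\tau(1)\le 0\ \forall\,\tau\le t-1\}\supseteq\{\sup_\tau C_\tau(1)\le 0\}$, this constant lower-bounds $\prob(\Omega_t)$ for every $t\ge 1$. The delicate points are handling the transient phase, where the drift has not yet turned negative, and obtaining uniform-in-$t$ exponential control of the running maximum of a non-identically-distributed, negative-drift random walk; both $\delta$ and the sub-Gaussian constant, and hence $c$, depend on $\Delta$ and $\sigma_n$.
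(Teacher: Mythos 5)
Your proposal is correct in substance, but the core probabilistic estimate is carried out by a genuinely different route than the paper's. Both arguments share the same reduction: fix the initialization via \cref{rem:rm01} and symmetry, observe that on $\Omega_t$ only the cell $(2,1)$ is ever queried (so its posterior is a deterministic sequence converging to $1-\Delta$ while row~1 keeps its prior), and identify $\Omega_t$, up to the harmless $0.5m_1\le 0$ term, with the event that the partial sums of fresh, independent Thompson increments $m_k$ stay nonpositive. From there the paper exploits exact Gaussianity: it writes $\prob(\Omega_t)=\prod_s \prob(\omega_s\mid\Omega_{s-1})$, lower-bounds each factor by the unconditional probability $\prob(N_s\le 0)=\Phi(-\mu_s/\sigma_s)$, where $N_s$ is Gaussian with mean $\approx -(1-\Delta)s$ and standard deviation $\Theta(\sqrt{s})$, and shows the infinite product converges to a positive constant because the failure probabilities $1-\Phi(-\mu_s/\sigma_s)$ decay like $e^{-cs}$; this yields explicit constants (e.g.\ $c\approx 0.54$ for $\Delta=\sigma_n^2=0.1$). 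You instead treat the same event as a barrier-crossing problem for a negative-drift random walk: a positive-probability transient event drives the walk to $-K$ while staying nonpositive, and then an exponential supermartingale/Chernoff bound shows that a sub-Gaussian walk with per-step drift $\le -\delta$ started at $-K$ ever returns to $0$ with probability at most $e^{-c'K}$. Your route is more elementary and more robust: it needs only independence, eventual negative drift, and uniform sub-Gaussian tails, so it covers the clipped rewards $\min(\tilde f,1)$ of the main-text algorithm (which the paper's appendix computation silently drops) and would survive non-Gaussian posteriors; the price is a less explicit constant and the extra bookkeeping of the transient phase. It also sidesteps the paper's unproved positive-association step $\prob(\sum_k m_k\le 0\mid \Omega_{t-1})\ge \prob(\sum_k m_k\le 0\mid \bar{\Omega}_{t-1})$, since you bound the joint barrier event directly rather than factoring it through conditional probabilities.

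Two small repairs are needed. First, your bookkeeping $C_\tau(2)=0$ is off: the second cumulative-regret component equals $-0.5m_1>0$ on the conditioned event, and this strict positivity is exactly what makes regret matching keep the pure strategy $[0,1]$ whenever $C_\tau(1)\le 0$; if $C_\tau(2)$ were literally $0$, the RM update in \cref{eq:update} would fall into its ``arbitrary/uniform'' branch and your reduction would break. Second, the claim that the increments are independent ``under the conditioning'' should really be phrased as a coupling: realize the fresh posterior samples as an auxiliary independent sequence and note that on the event that this auxiliary walk stays nonpositive, the algorithm's trajectory coincides with the described one; conditioning on $\Omega_{t-1}$ does not by itself preserve independence. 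The paper's own proof is equally informal on this point, so neither issue is a fundamental gap, but both should be stated correctly in a final write-up.
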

To investigate the divergence behavior of the TS-RM algorithm, an experiment is conducted using the counter example in . Different values of $\Delta$ and $\sigma_n^2$ are considered, and $200$ independent simulation runs are performed for each combination. The averaged divergence results across these runs are shown in Fig.\ref{fig:divts}, which illustrates that the probability of divergence decreases as $\Delta$ and $\sigma_n^2$ increase, consistent with our proposition (Prop.\ref{prop:tsrm_appendix}). In the following, we provide a detailed proof for the divergence of the TS-RM algorithm. 
\begin{figure}
    \centering
    \includegraphics[width=.4\linewidth]{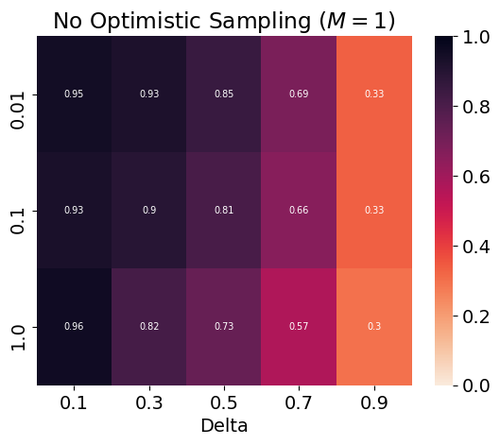}
    \caption{Divergence probability for TS-RM}
    \label{fig:divts}
\end{figure}

\begin{proof} 
Our goal is to prove that there exists a constant $c \geq 0$ such that $\prob(\Omega_t)\geq c$ for all $t$. The probability $\prob(\Omega_t)$ can be expressed as
\begin{equation}  
    \prob(\Omega_t) = \prob(\omega_1)\prob(\omega_2|\Omega_1)\ldots\prob(\omega_t|\Omega_{t-1}) 
\end{equation}  
Referring to~\cref{rem:rm01}, $X_2$ is a pure strategy. Since Alice chooses the second row according to $\Omega_2$, we get $X_2=[0,1]$ and $m_1<0$. Following the event $\Omega_t$, we get $X_t=[0,1], \forall t\geq2$, which indicates that for the TS-RM estimator $\Tilde{R}_t$, only the posterior distribution of $\Tilde{R}_{t2}$ is updated. Since no noise is considered in received rewards, by Bayesian rule, we have
$$\Tilde{R}_t=\left[z_t, \frac{t}{t+\sigma_n^2}(1-\Delta)+\sqrt{\frac{\sigma_n^2}{\sigma_n^2+t}}z^{\prime}_t\right],$$
where $z_t, z^{\prime}_t\sim\mathcal{N}(0, 1)$ are two independent \textit{r.v.}s, and $\sigma_n^2$ is the noise variance. Therefore, we can express the regret as: 
\begin{equation}
    reg_t = \Tilde{R}_t - X_t^{T}\Tilde{R}_t\cdot\mathbf{1} = [m_t, 0], \quad \forall t\geq2 
\end{equation}
where $ m_t = z_t - \frac{t}{t+\sigma_n^2}(1-\Delta)-\sqrt{\frac{\sigma_n^2}{t+\sigma_n^2}}z^{\prime}_{t} = z_t-\sqrt{\frac{\sigma_n^2}{t+\sigma_n^2}}z^{\prime}_t-\frac{t}{t+\sigma_n^2}(1-\Delta)$. 

The cumulative regret can be represented as
\begin{equation} 
    \Re_t = [0.5m_1+\sum_{k=2}^{t}m_k, -0.5m_1] 
\end{equation} 
According to updating rule of TS-RM, we have:
\begin{equation}
\begin{aligned}
\prob(\omega_t|\Omega_{t-1}) &=\prob(0.5m_1+\sum_{k=2}^{t}m_k\leq 0|\Omega_{t-1}) \\ 
&\geq \prob(\sum_{k=2}^{t}m_k\leq0|\Omega_{t-1}) \\ 
&=\prob(\sum_{k=2}^{t}m_k\leq0|\Omega_{t-1})\left(\prob(\Omega_{t-1})+\prob(\bar{\Omega}_{t-1})\right) \\
&\geq \prob(\sum_{k=2}^{t}m_k\leq0|\Omega_{t-1})\prob(\Omega_{t-1}) + \prob(\sum_{k=2}^{t}m_k\leq0|\bar{\Omega}_{t-1})\prob(\bar{\Omega}_{t-1}) \\
&=\prob(\sum_{k=2}^{t}m_k\leq0) \\
\end{aligned}
\end{equation}
where the first inequality is because $m_1\leq0$ (conditioned on $\Omega_{t-1}$), and the second one is due to $\prob(\sum_{k=2}^{t}m_k\leq0|\Omega_{t-1})\geq \prob(\sum_{k=2}^{t}m_k\leq0|\bar{\Omega}_{t-1})$.

Define 
$$N_t = \sum\limits_{k=2}^{t}m_k\sim \mathcal{N}\left(-\sum\limits_{k=2}^{t}\frac{k}{k+\sigma_n^2}(1-\Delta),\  \sum\limits_{k=2}^{t}(1+\frac{\sigma_n^2}{k+\sigma_n^2})\right)\triangleq \mathcal{N}(\mu_t, \sigma_t^2).$$ 
As $t\to\infty$, we have:
\begin{equation} 
\begin{aligned}  
\lim_{t\to\infty}\log\prob(\Omega_t) &= \log\prod_{t=1}^{\infty}\prob(\omega_t|\Omega_{t-1})=\sum_{t=1}^{\infty}\log\prob(\omega_t|\Omega_{t-1}) \\
&\geq \sum_{t=1}^{\infty}\log\prob(N_t\leq 0) \\ 
&= \sum_{t=1}^{\infty}\log\prob(\mu_t+\sigma_tZ\leq0), \quad Z\sim\mathcal{N}(0,1) \\ 
&= \sum_{t=1}^{\infty}\log\prob(Z\leq -\frac{\mu_t}{\sigma_t}) \\
&= \sum_{t=1}^{\infty}\log \Phi(-\frac{\mu_t}{\sigma_t}) \\
&=\sum_{t=1}^{\infty}\log\Phi \left( \frac{\sum_{k=2}^{t}\frac{k}{k+\sigma_n^2}(1-\Delta)}{\sqrt{\sum_{k=2}^{t}(1+\frac{\sigma_n^2}{k+\sigma_n^2})}} \right) \\
&\geq \sum_{t=1}^{\infty}\log\Phi \left( \frac{(1-\Delta)\left(t-\sigma_n^2\ln{(t+\sigma_n^2)}+2\sigma_n^2\ln{\sigma_n}-1/(\sigma_n^2+1)\right)}{\sqrt{t+\sigma_n^2\ln{(t+\sigma_n^2)}-2\sigma_n^2\ln{\sigma_n}-(\sigma_n^2+2)/(\sigma_n^2+1)}} \right) \\ 
\end{aligned}  
\end{equation} 
where $\Phi(\cdot)$ is the cumulative distribution function of the standard normal distribution, and the last inequality is due to $\sum\limits_{k=1}^{t}\frac{1}{k+a}\leq \int_0^{t}\frac{1}{k+a}dk$. Define
\begin{equation}\label{equ:tsrm01}
   f_t(\Delta,\sigma_n) = \frac{(1-\Delta)\left(t-\sigma_n^2\ln{(t+\sigma_n^2)}+2\sigma_n^2\ln{\sigma_n}-1/(\sigma_n^2+1)\right)}{\sqrt{t+\sigma_n^2\ln{(t+\sigma_n^2)}-2\sigma_n^2\ln{\sigma_n}-(\sigma_n^2+2)/(\sigma_n^2+1)}},
\end{equation}
Referring to the lower bound of the standard Gaussian distribution, we can continue to derive: 
\begin{equation}\label{equ:tsrm02}
\begin{aligned}
\lim_{t\to\infty}\log\prob(\Omega_t) &\geq \sum_{t=1}^{\infty}\log\Phi(f_t(\Delta,\sigma_n)) \\ 
&\geq \sum_{t=1}^{\infty}\log\left(1-\frac{1}{\sqrt{2\pi}f_t(\Delta,\sigma_n)e^{f_t^2(\Delta,\sigma_n)/2}} \right) \\ 
& \geq \sum_{t=1}^{\infty} \left( -\frac{1}{\sqrt{2\pi}f_t(\Delta,\sigma_n)e^{f_t^2(\Delta,\sigma_n)/2}} \right) \\ 
&> -\infty, 
\end{aligned}
\end{equation}
This shows that there exists a constant $c^{\prime}>0$ such that:
\begin{equation}
\lim_{t\to\infty}\log\prob(\Omega_t)\geq\log c^{\prime} >-\infty 
\end{equation}
In other words, we have:
\begin{equation} 
\lim_{t\to\infty}\prob(\Omega_t)\geq c > 0,
\end{equation}
where $c=e^{c^{\prime}}$. Moreover,  $\prob(\Omega_t)\geq\lim\limits_{t\to\infty}\prob(\Omega_t)\geq c$ for a finite sequence. Specifically, let $\Delta=0.1$ and $\sigma_n^2=0.1$, we can get $c^{\prime}=-0.62$, and $c=0.54$. 
\end{proof} 
Moreover, the function $f_t(\Delta, \sigma_n^2)$, combined with the derivation above, demonstrates that the divergence probability $\prob(\Omega_t)$ decreases as $\Delta$ and $\sigma_n^2$ increase, which is consistent with our proposition (Proposition~\ref{prop:tsrm_appendix}) and the experiments. 

The above argument is based on the frequentist setting where the underlying instance is fixed, and the agent does not access the right noise likelihood function of the environment. We conjecture that under the Bayesian setting where the prior and likelihood in the game environment are available to the agent, with the exact Bayes posterior, the TS-RM still suffers linear Bayesian adversarial regret.
\subsection{Why optimistic variant of TS would not suffer linear regret?}\label{apppro:otsrm} 
Assume that Alice chooses the wrong action until time $t$. By Prop~\ref{prop:tsrm_appendix}, the TS-RM algorithm will continue to choose the wrong action with a constant probability $c$. Different from TS-RM, we will prove that even if Alice chooses the wrong action until time $t$, OTS-RM will eventually yield a sub-linear regret with high probability.
\begin{proof}
Unlike TR-RM, the OTS-RM algorithm uses $M$ samples for optimistic sampling in each round. Under the assumption that Alice takes the wrong action until time $t$, we have:
\begin{equation}
\begin{cases}
x_{ti}&\sim\mathcal{N}_i(0,1),\quad i=1,\ldots,M\\
y_{ti}& \sim\mathcal{N}_i\left(\frac{t}{t+\sigma_n^2}(1-\Delta), \frac{\sigma_n^2}{\sigma_n^2+t}\right),\quad i=1,\ldots,M
\end{cases}
\end{equation}
Let $\Tilde{R}_{t1}=\max\limits_{i}x_{ti}$ and $\Tilde{R}_{t2}=\max\limits_{i}y_{ti}$, which are just $\Tilde{R}_{t}(1nd)$ and $\Tilde{R}_{t}(2nd)$ mentioned above, respectively.

The proof will first show that $\Tilde{R}_{t1}\geq \Tilde{R}_{t2}$ with high probability. As a result, $\Re_{t1}$ will decrease, and eventually Alice's strategy will change from a pure strategy $[0,1]$ to a mixed strategy, indicating a decay of $\prob(\Omega_t)$ over time $t$.

According to the anti-concentration property in~\cref{lem:normal_optimism}, we have:
\begin{equation}\label{appequ: anticb}
\prob(\max_iy_{ti}\leq\frac{t}{t+\sigma_n^2}(1-\Delta)+\sqrt{\frac{2\sigma_n^2\log(M/\delta_1)}{t+\sigma_n^2}}) \geq 1-\delta_1
\end{equation}
Thus, the first step in the proof can be written as:
\begin{equation}
\begin{aligned}
\prob(\Tilde{R}_{t1}\geq \Tilde{R}_{t2}) &= \prob(\max_{i}x_i\geq\max_{i}y_i) \\ 
&\geq (1-\delta_1)\prob(\max_{i}x_i\geq\max_{i}y_i\mid \epsilon) \\ 
&\geq (1-\delta_1)\prob(\max_ix_{ti}\geq\frac{t}{t+\sigma_n^2}(1-\Delta)+\sqrt{\frac{2\sigma_n^2\log(M/\delta_1)}{t+\sigma_n^2}}\mid \epsilon) \\ 
&\geq 1-\delta_1-(1-\delta_1)\Phi^{M}\left(\frac{t}{t+\sigma_n^2}(1-\Delta)+\sqrt{\frac{2\sigma_n^2\log(M/\delta_1)}{t+\sigma_n^2}}\right) 
\end{aligned}
\end{equation}
Here, $\epsilon$ represents the event where the anti-concentration property occurs. 

Let $f(t,\Delta,\sigma_n)=\frac{t}{t+\sigma_n^2}(1-\Delta)+\sqrt{\frac{2\sigma_n^2\log(M/\delta_1)}{t+\sigma_n^2}}$. Then, we have:
\begin{equation}\label{equ:ostconv}
\begin{aligned}
\prob(\Tilde{R}_{t1}\geq \Tilde{R}_{t2}) &\geq 1-\delta_1-(1-\delta_1)\Phi^{M}(f) \\ 
&\geq 1-\delta_1-(1-\delta_1)\left(1-\frac{f}{\sqrt{2\pi}(f^2+1)e^{f^2/2}} \right)^M\\
&\geq 1-\delta_1-(1-\delta_1)\exp{\left(\frac{-Mf}{\sqrt{2\pi}(f^2+1)e^{f^2/2}}\right) } \\ 
\end{aligned}    
\end{equation}   
where $\Phi(x)\leq 1-\frac{x}{\sqrt{2\pi}(x^2+1)e^{x^2/2}}$~\citep{gordon1941values} and $(1-x)^M\leq e^{-Mx}$.
\end{proof}
To obtain further insights into the relationship between $M$ and $t$, we have depicted a figure in Figure \ref{fig:otsconv} that corresponds to the inequality in Equation \ref{equ:ostconv}. Based on the analysis, we draw the following conclusions:
\begin{itemize}
\item When $\Delta,\sigma_n$ and $t$ are fixed, $\prob(\Tilde{R}_{t1}\geq \Tilde{R}_{t2})$ increases with $M$. Additionally, the value of $t$ has a significant influence on $\prob(\Tilde{R}_{t1}\geq \Tilde{R}_{t2})$.  
\item When $\Delta,\sigma_n$ and $M$ are fixed, the probability $\prob(\Tilde{R}_{t1}\geq \Tilde{R}_{t2})$ increases with time $t$. Moreover, $\prob(\Tilde{R}_{t1}\geq \Tilde{R}_{t2})$ will quickly reach a region close to the maximum in just a few rounds. 
\item When $\sigma_n, M$ and $t$ are fixed, $\prob(\Tilde{R}_{t1}\geq \Tilde{R}_{t2})$ initially increases with $\Delta$ and subsequently decreases with $\Delta$. 
\begin{figure}
    \centering
    \includegraphics[width=.6\linewidth]{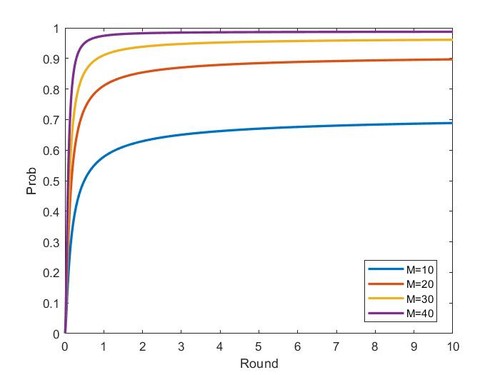}
    \caption{Relationship between $t$ and $M$ (fix $\Delta$ and $\sigma_n$)}  
    \label{fig:otsconv} 
\end{figure}
\end{itemize}
These findings provide valuable insights into the behavior of the OTS-RM algorithm and support our claim that $\prob(\Tilde{R}_{t1}\geq \Tilde{R}_{t2})$ increases as $M$ grows. This increasing probability implies that Alice's strategy will transition from a pure strategy $[0,1]$ to a mixed strategy, indicating a decay of $\prob(\Omega_t)$ over time $t$. Consequently, the algorithm achieves sub-linear regret with high probability.

\section{Technical details in \cref{sec:analysis}}
\label{sec:analysis_details}
Our analysis road map is as follows.
First, in~\cref{sec:general-regret}, as described in \cref{prop:regret-decomposition}, we derive a general regret decomposition in given any imagined reward sequence $\{ \tilde{R}_t: t \in \Z_{++} \}$, where each $\tilde{R}_{t+1}$ is constructed using history information $H_t$ with algorithmic randomness.
Then, to further upper bound the regret, we introduce the generic upper confidence bound (UCB) sequence and lower confidence bound (LCB) sequence as in \cref{def:UCB_seq}. With these sequences, we have a general regret bound in \cref{prop:general-regret-bound} with generic UCB and LCB sequences.
Next, as described in \cref{rem:proof-sketch}, we specify the so-called information-theoretic confidence bound in \cref{sec:info-conf-bound} and show that the imagined reward sequence has good properties when compared with specified sequences in \cref{sec:optimism}, finally yielding the information-theoretic regret upper bound in \cref{sec:estimation-information}.

\subsection{General regret bound}
\label{sec:general-regret}
\begin{proposition}[Restate regret decomposition in \cref{prop:regret-decomposition}]
  For any $a \in \actions$, the one-step regret can be decomposed by
    \begin{align*}
      & \E{ R_{t+1, a, B_t} - R_{t+1, A_t, B_t} \given \theta} = 
      \E{ f_{\theta}(a, B_t) - f_{\theta}(A_t, B_t) \given \theta} \nonumber \\
      & = \underbrace{ \E{ \tilde{R}_{t+1}(a) - \tilde{R}_{t+1}(A_t) \given \theta} }_{(I)} + \underbrace{ \E{ f_{\theta}({a, B_t} ) - \tilde{R}_{t+1}( a ) \given \theta} }_{(II)}
      + \underbrace{ \E{ \tilde{R}_{t+1}(A_t) - f_{\theta}(A_t, B_t) \given \theta} }_{(III)}
    \end{align*}
\end{proposition}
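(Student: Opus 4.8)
The statement bundles two facts: a reduction from observed rewards to mean-reward values, and a purely algebraic add-and-subtract identity. The plan is to dispatch them in that order, since the second is trivial and all the (minimal) content sits in the first.

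First I would establish the opening equality $\E{R_{t+1, a, B_t} - R_{t+1, A_t, B_t}\given \theta} = \E{f_\theta(a, B_t) - f_\theta(A_t, B_t)\given \theta}$ by conditioning finely enough that the reward noise averages out. For the benchmark term, conditioning on $(\theta, H_t, B_t)$ the reward $R_{t+1, a, B_t}$ has conditional mean $f_\theta(a, B_t)$, because by Assumption~\ref{assump:noise-likelihood} the noise $W_{t+1}$ is zero-mean and independent of the conditioning variables; the tower property then gives $\E{R_{t+1, a, B_t}\given \theta} = \E{f_\theta(a, B_t)\given \theta}$. For the played term I condition on $(\theta, H_t, A_t, B_t)$, where $A_t \sim \pi_t(H_t)$ and $B_t$ are both independent of $W_{t+1}$ given the history, so the conditional mean is $f_\theta(A_t, B_t)$; averaging over $(A_t, B_t, H_t)$ yields $\E{R_{t+1, A_t, B_t}\given \theta} = \E{f_\theta(A_t, B_t)\given \theta}$. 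Subtracting the two identities gives the first equality.

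Next, pointwise in every realization of $\theta$, the actions $(A_t, B_t)$, and the algorithmic randomness generating $\tilde{R}_{t+1}$, I would write the elementary identity
\[
f_\theta(a, B_t) - f_\theta(A_t, B_t) = \big(\tilde{R}_{t+1}(a) - \tilde{R}_{t+1}(A_t)\big) + \big(f_\theta(a, B_t) - \tilde{R}_{t+1}(a)\big) + \big(\tilde{R}_{t+1}(A_t) - f_\theta(A_t, B_t)\big),
\]
which holds because the two inserted copies of $\tilde{R}_{t+1}(a)$ cancel and likewise the two copies of $\tilde{R}_{t+1}(A_t)$. Applying $\E{\cdot \given \theta}$ and using linearity of expectation distributes the three bracketed quantities into exactly $(I)$, $(II)$, and $(III)$ as stated.

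There is essentially no hard step here: the decomposition itself is an identity and the only substantive content is the reward-to-mean reduction above. The one point that needs care is that $\tilde{R}_{t+1}$ is a \emph{random} vector built from $H_{t+1}$ and the sampling noise $Z_{t+1}$, so the add-and-subtract must be carried out at the level of random variables before integrating; this is legitimate precisely because the identity holds on every sample path. I would also flag the convention used throughout the analysis (see the importance-weighted computations in \cref{sec:iwe}) that the bounded reward is treated as $f_\theta$ plus zero-mean noise, so that $\E{R_{t+1, a, b}\given \theta} = f_\theta(a, b)$; this conditional-mean property is the sole assumption on which the first equality rests.
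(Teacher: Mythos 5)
Your proof is correct and follows essentially the same route as the paper: the paper's own proof likewise asserts the reward-to-mean equality $\E{ R_{t+1, a, B_t} - R_{t+1, A_t, B_t} \given \theta} = \E{ f_{\theta}(a, B_t) - f_{\theta}(A_t, B_t) \given \theta}$ (which you justify in more detail via conditioning and the tower property under Assumption~\ref{assump:noise-likelihood}) and then writes the same pointwise add-and-subtract identity before taking expectations. Your extra care about $\tilde{R}_{t+1}$ being random and about the convention $\E{R_{t+1,a,b} \given \theta} = f_{\theta}(a,b)$ only makes explicit what the paper leaves implicit.
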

\begin{proof}
    Since
\begin{align*}
  \E{ R_{t+1, a, B_t} - R_{t+1, A_t, B_t} \given \theta }
  = \E{ f_{\theta}(a, B_t) - f_{\theta}(A_t, B_t) \given \theta }
\end{align*}
We have introduced the imagined time-varying sequence $\tilde{R} := \{ \tilde{R}_{t} : t \in \Z_{++} \}$, where each $\tilde{R}_t$ is constructed using history information $H_t$ and takes value in $[0, 1]^{\actions}$.
\begin{align}
  \label{eq:general-decomposition-a}
  f_{\theta}(a, B_t) - f_{\theta}(A_t, B_t)
  = \underbrace{ \tilde{R}_{t+1}( a ) - \tilde{R}_{t+1}(A_t) }_{(I)~\operatorname{adv}_{t+1}(a) } + 
  \underbrace{ f_{\theta}(a, B_t) - \tilde{R}_{t+1}(a) }_{(II)~\operatorname{pess}_{t+1}(a) }
  + \underbrace{ \tilde{R}_{t+1}(A_t) - f_{\theta}( A_t, B_t ) }_{(III) \est_{t+1} }
\end{align}
\end{proof}

\paragraph*{Reduction to full-information adversarial regret.}
Any algorithm $\pi = \pi^{\operatorname{adv-est}}$ constructs the imagined reward sequence $\tilde{R}$ and use $\operatorname{adv}$ algorithm for no-regret update will lead to,
\begin{align}
\label{eq:reduction-to-full-adv}
& \sum_{t=0}^{T-1} \E{ f_{\theta}(a, B_t) - f_{\theta}(A_t, B_t) \given \theta } \nonumber \\
& \le
\Re_{\operatorname{full}}(a; T, \operatorname{adv}, \tilde{R}) + \sum_{t=0}^{T-1} \E{ \operatorname{pess}_{t+1}(a) \mid \theta} + \sum_{t=0}^{T-1}\E{ \est_{t+1} \given \theta }
\end{align}
where
\begin{align*}
  \Re_{\operatorname{full}}(a; T, \operatorname{adv}, \tilde{R} ) & = \E{ \sum_{t=0}^{T-1} \tilde{R}_{t+1}(a) - \tilde{R}_{t+1} (A_t) }.
\end{align*}
Recall the Bayesian adversarial regret is
\begin{align*}
    \Re(T, \pi^{\operatorname{alg}}, \pi^B) 
    = \E{ \Re( T, \pi^{\operatorname{alg}}, \pi^B, \theta ) }
\end{align*}
where the expectation is taken over the prior distribution of $\theta$.
From \cref{eq:reduction-to-full-adv}, we have
\begin{align}
    \Re(T, \pi^{\operatorname{alg}}, \pi^B) 
    & = \E{ \max_{a \in \mathcal{A}} \sum_{t=0}^{T-1} \E{ R_{t+1, a, B_t} - R_{t+1, A_t, B_t} \mid \theta } } \nonumber \\
    & \le \max_{a \in \actions} \Re_{\operatorname{full}}(a; T, \adv, \tilde{R} ) +
    \E{ \max_{a \in \actions} \sum_{t=0}^{T-1} \E{ \operatorname{pess}_{t+1}(a) \mid \theta } } + \sum_{t=0}^{T-1}\E{\est_{t+1} } \nonumber \\
    & = \Re_{\operatorname{full}}(T, \adv, \tilde{R} ) +
    \E{ \max_{a \in \actions} \sum_{t=0}^{T-1} \E{ \operatorname{pess}_{t+1}(a) \mid \theta } } + \sum_{t=0}^{T-1}\E{\est_{t+1} }
    \label{eq:decomposition-Bayes-adv}
\end{align}

\begin{remark}
  If each imagined reward $\tilde{R}_{t}$ in the sequence $\tilde{R}$ takes bounded value in $[-c, c]^{\actions}$,
  any suitable full information adversarial algorithm $\adv$ will give satisfied bound for $\Re_{\operatorname{full}}(a; T, \operatorname{adv}, \tilde{R} )$ for any $a \in \actions$.
  Specifically, Hedge suffers $\Re_{\operatorname{adv}}(a; T, \text{Hedge}, \tilde{R} ) = \mathcal{O}( 2c \sqrt{T \log \actions} )$
  and RM suffers $\Re_{\operatorname{full}}(a; T, \text{RM}, \tilde{R} ) = \mathcal{O}( 2c \sqrt{T \actions} )$.
\end{remark}
Next, we focus on the pessimism term and estimation term.
We now focus on the case generalized from the optimistic Thompson sampling.
\begin{assumption}[Restriction on the imagined reward sequence $\tilde{R}$]
  \label{asmp:res_imagine_R}
  In the following context, the imagined reward sequence $\tilde{R} = (\tilde{R}_{1}, \ldots, \tilde{R}_{t+1}, \ldots )$ satisfies that
  (1) $\tilde{R}_{t+1}(a) \in [0, C]$ for all $a \in \actions$ and
  (2) $\tilde{R}_{t+1}(a)$ is random only through its dependence on $Z_{t+1}$ given the history $H_t, B_t$ for all $a \in \actions$.
  To clarify, $\tilde{R}_{t+1}(a)$ has no dependence on $A_t$ and $\tilde{R}_{t+1, A_t, B_t}$.
\end{assumption}
\begin{definition}[UCB and LCB sequence]
  \label{def:UCB_seq}
    UCB sequence $U = ( U_t \given t \in \NN )$ LCB sequence $L = ( L_t \given t \in \NN )$ are two sequences of functions where each $0 \le L_t \le U_{t} \le C$ are both deterministic given history $H_{t}$.
\end{definition}
\begin{definition}[Optimistic Event]
\label{def:optimistic-event}
  For any imagined reward sequence $\tilde{R}_{t+1}$ in \cref*{asmp:res_imagine_R}
  and any upper confidence sequence $U$ in \cref{def:UCB_seq},
  we define the event 
  \[
    \event{E}_{t}(\tilde{R}, U, B_t) := \{ \tilde{R}_{t+1}(a) \ge U_{t}(a, B_t), \forall a \in \mathcal{A} \}.
  \]
\end{definition}
\begin{fact}
  The event $\event{E}^o_{t}(\tilde{R}, U, B_t)$ is random only through its dependence on $Z_{t+1}$ given $H_t, B_t$.
  The pessimism term can be decomposed according to the event $\event{E}^o_{t}(\tilde{R}, U, B_t)$ 
\begin{align*}
  \tilde{R}_{t+1}(a) - f_{\theta}(a, B_t)
  = \1{ \event{E}^o_t(\tilde{R}, U, B_t ) } ( \tilde{R}_{t+1}(a) - f_{\theta}(a, B_t) )
  + 
  (1 - \1{ {\event{E}}^o_t(\tilde{R}, U, B_t )} ) ( \tilde{R}_{t+1}(a) - f_{\theta}(a, B_t) )
\end{align*}
Consider the case where $f_{\theta}$ takes values in $[0, C]$ and  $\tilde{R}_{t+1} \in [0, C]$ by \cref*{asmp:res_imagine_R}, for all $a \in \mathcal{A}$,
\begin{align}
  \label{eq:pess}
    f_{\theta}(a, B_t) - \tilde{R}_{t+1}(a)
  \le \1{ \event{E}^o_t(\tilde{R}, U, B_t ) } ( f_{\theta}(a, B_t) - U_{t}(a, B_t))
  + 
  C (1 - \1{ {\event{E}}^o_t(\tilde{R}, U, B_t) } ).
\end{align}
\end{fact}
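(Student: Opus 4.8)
The plan is to verify the three assertions of the \factname{} in turn; each reduces to the definitions and range constraints already in place, so the argument is short once organized correctly. First I would settle the measurability claim. The optimistic event $\event{E}^o_t(\tilde{R}, U, B_t) = \{\tilde{R}_{t+1}(a) \ge U_t(a, B_t),\ \forall a \in \actions\}$ is a finite intersection, over $a \in \actions$, of threshold-crossing events. By \cref*{def:UCB_seq} each threshold $U_t(a, B_t)$ is deterministic given $H_t$, hence given $(H_t, B_t)$; by \cref*{asmp:res_imagine_R} each $\tilde{R}_{t+1}(a)$ is random only through $Z_{t+1}$ once $(H_t, B_t)$ is fixed. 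Consequently the indicator $\1{\event{E}^o_t(\tilde{R}, U, B_t)}$ is a measurable function of $(Z_{t+1}, H_t, B_t)$ alone, which is exactly the first claim.

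The displayed identity for $\tilde{R}_{t+1}(a) - f_\theta(a, B_t)$ needs no work: it is the trivial decomposition $X = \1{\event{E}^o_t} X + (1 - \1{\event{E}^o_t}) X$ with $X = \tilde{R}_{t+1}(a) - f_\theta(a, B_t)$, since the two indicators sum to one. I record it only to set up the case split used for the substantive bound.

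The remaining step is the inequality \cref{eq:pess}. I would negate the identity, writing $f_\theta(a, B_t) - \tilde{R}_{t+1}(a) = \1{\event{E}^o_t}\bigl(f_\theta(a, B_t) - \tilde{R}_{t+1}(a)\bigr) + (1 - \1{\event{E}^o_t})\bigl(f_\theta(a, B_t) - \tilde{R}_{t+1}(a)\bigr)$, and bound the two summands on the event and its complement respectively. On $\event{E}^o_t$ the defining inequality $\tilde{R}_{t+1}(a) \ge U_t(a, B_t)$ yields $f_\theta(a, B_t) - \tilde{R}_{t+1}(a) \le f_\theta(a, B_t) - U_t(a, B_t)$, controlling the first summand by $\1{\event{E}^o_t}(f_\theta(a, B_t) - U_t(a, B_t))$. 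On the complement I would invoke the two-sided range constraints—$\tilde{R}_{t+1}(a) \ge 0$ from \cref*{asmp:res_imagine_R} and $f_\theta(a, B_t) \le C$ from the case $f_\theta \in [0, C]$ under consideration—to get $f_\theta(a, B_t) - \tilde{R}_{t+1}(a) \le f_\theta(a, B_t) \le C$, controlling the second summand by $C(1 - \1{\event{E}^o_t})$. Adding the two bounds produces \cref{eq:pess}.

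Because every step is either a restatement of an assumption or an elementary inequality, there is no genuine analytical obstacle. The only point demanding care is bookkeeping: tracking the direction of each inequality under the negation, applying the threshold bound only on $\event{E}^o_t$, and invoking precisely the lower bound $\tilde{R}_{t+1}(a) \ge 0$ together with the upper bound $f_\theta(a, B_t) \le C$ on the complement. So the real content is the clean case split in the third step, while the measurability assertion is essentially a corollary of \cref*{asmp:res_imagine_R} and \cref*{def:UCB_seq}.
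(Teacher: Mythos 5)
Your proof is correct and matches the paper's intended justification: the paper states this as a Fact without an explicit proof precisely because it follows, as you show, from the measurability hypotheses in the cited assumption and definition, the trivial indicator decomposition, and a case split using $\tilde{R}_{t+1}(a) \ge U_t(a,B_t)$ on the event and the range bounds $\tilde{R}_{t+1}(a) \ge 0$, $f_{\theta} \le C$ on its complement. Your bookkeeping of the sign flip (bounding $f_{\theta}(a,B_t) - \tilde{R}_{t+1}(a)$ rather than its negative) is exactly right, so there is nothing to add.
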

\begin{definition}[Concentration Event]
\label{def:concentration-event}
  For any imagined reward sequence $\tilde{R}_{t+1}$ in \cref*{asmp:res_imagine_R}
  and any upper confidence sequence $U'$ in \cref*{def:UCB_seq},
  we define the event 
  \[
    \event{E}^c_{t}(\tilde{R}, U', A_t, B_t) :=  \{ \tilde{R}_{t+1}(A_t) \le U'_{t}(A_t, B_t) \}.
  \]
\end{definition}
\begin{fact}
  Consider the case where $f_{\theta}$ takes values in $[0, C]$ and  $\tilde{R}_{t+1} \in [0, C]$ by \cref*{asmp:res_imagine_R}, the event $\event{E}_{t}(\tilde{R}, U', A_t, B_t)$ is random only through its dependence on $Z_{t+1}$ given $H_t, A_t, B_t$. The estimation term then becomes,
\begin{align}
  \label{eq:estimation}
  \tilde{R}_{t+1}(A_t) - f_{\theta}(A_t, B_t) \le 
   ( \1{ \event{E}^c_t( \tilde{R}, U', A_t, B_t ) } ) ( U'_{t}(A_t, B_t) - f_{\theta}(A_t, B_t) ) +
  C (1 - \1{ \event{E}^c_t( \tilde{R}, U', A_t, B_t ) } ).
\end{align}
\end{fact}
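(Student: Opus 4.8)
The plan is to mirror the argument behind the pessimism bound \cref{eq:pess}, since the estimation term is its structural twin: the only differences are that the relevant action is the played $A_t$ rather than an arbitrary $a$, and that we compare against an upper confidence function $U'$ via the concentration event $\event{E}^c_t$ of \cref{def:concentration-event}. Two things must be established, namely the measurability claim and then the pointwise inequality, the latter by a two-case split on $\event{E}^c_t$.

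First I would settle the measurability statement. By \cref{def:UCB_seq} the sequence $U'$ is deterministic given $H_t$, so $U'_t(A_t, B_t)$ is a deterministic function of $(H_t, A_t, B_t)$. By \cref{asmp:res_imagine_R}, $\tilde{R}_{t+1}(a)$ carries algorithmic randomness only through $Z_{t+1}$ once $H_t, B_t$ are fixed; evaluating at $a = A_t$ and additionally conditioning on $A_t$ preserves this. Hence $\event{E}^c_t = \{\tilde{R}_{t+1}(A_t) \le U'_t(A_t, B_t)\}$ compares a $Z_{t+1}$-measurable quantity against a constant given $(H_t, A_t, B_t)$, so its indicator is random only through $Z_{t+1}$, as claimed.

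Next I would split $\tilde{R}_{t+1}(A_t) - f_{\theta}(A_t, B_t)$ according to $\1{\event{E}^c_t}$ and its complement and bound each piece. On $\event{E}^c_t$ the defining inequality $\tilde{R}_{t+1}(A_t) \le U'_t(A_t, B_t)$ gives the contribution $\1{\event{E}^c_t}(U'_t(A_t, B_t) - f_{\theta}(A_t, B_t))$. On the complement I use the range constraints $\tilde{R}_{t+1}(A_t) \le C$ (from \cref{asmp:res_imagine_R}) and $f_{\theta}(A_t, B_t) \ge 0$ to bound the difference by $C$, producing the term $C(1 - \1{\event{E}^c_t})$. Adding the two bounds yields \cref{eq:estimation}.

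I do not anticipate a genuine obstacle; the content is a deterministic case analysis once the assumption $f_{\theta} \in [0, C]$ is in force. The only point needing care is that the constant $C$ on the off-event contribution genuinely uses both $\tilde{R}_{t+1}(A_t) \le C$ and the nonnegativity $f_{\theta}(A_t, B_t) \ge 0$; dropping either breaks the bound. Stating the measurability claim explicitly is also worthwhile, as it is precisely what later licenses conditioning on $(H_t, A_t, B_t)$ and integrating out $Z_{t+1}$ when \cref{eq:estimation} is invoked inside the estimation-term analysis.
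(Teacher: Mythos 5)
Your proposal is correct and matches the paper's intended argument: the paper states this Fact without explicit proof, relying on the same two-case split on the indicator of $\event{E}^c_t$ that it displays for the twin pessimism bound in \cref{eq:pess}, with the on-event term controlled by the defining inequality $\tilde{R}_{t+1}(A_t) \le U'_t(A_t,B_t)$ and the off-event term by $\tilde{R}_{t+1}(A_t) \le C$ together with $f_{\theta} \ge 0$. Your explicit treatment of the measurability claim (deterministic $U'$ given $H_t$ plus \cref{asmp:res_imagine_R}) is likewise exactly what the paper presupposes when it later conditions on $(H_t, A_t, B_t)$ in \cref{sec:optimism}.
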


\begin{definition}[Confidence event]
\label{def:confidence-event}
  Define the confidence event at round $t$ as
  \begin{align*}
    \event{E}_t(f_{\theta}, B_t) := \{ \forall a \in \mathcal{A}, f_{\theta}(a, B_t) \in [L_t(a, B_t), U_t(a, B_t)] \}.
  \end{align*}
\end{definition}
\begin{fact}
  The event $\mathcal{E}^c_t(f_{\theta}, B_t)$ is deterministic conditioned on history $H_t$, $B_t$ and $\theta$.
\end{fact}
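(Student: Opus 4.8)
The plan is to verify the claim directly from the definitions, since the confidence event is assembled entirely out of quantities that cease to be random once we condition on $(H_t, B_t, \theta)$. First I would rewrite the event as the finite intersection $\event{E}_t(f_\theta, B_t) = \bigcap_{a \in \actions} \{ f_\theta(a, B_t) \in [L_t(a, B_t), U_t(a, B_t)] \}$, so that establishing the assertion reduces to showing that, for each fixed $a \in \actions$, the membership predicate $f_\theta(a, B_t) \in [L_t(a, B_t), U_t(a, B_t)]$ is a deterministic statement under the conditioning.

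Next I would argue that each of the three quantities entering this predicate is a deterministic function of the conditioning variables. The mean reward $f_\theta(a, B_t)$ is, by its very definition, a fixed function of the parameter $\theta$ and the action pair $(a, B_t)$, hence non-random once $\theta$ and $B_t$ are given. For the two bounds, I would invoke Definition~\ref{def:UCB_seq}, which stipulates that both $U_t$ and $L_t$ are deterministic given the history $H_t$; evaluating them at the argument $(a, B_t)$ therefore produces quantities that are deterministic given $(H_t, B_t)$. The point I would stress is that, unlike the imagined reward $\tilde{R}_{t+1}$ of Assumption~\ref{asmp:res_imagine_R}, the confidence sequences carry no dependence on the auxiliary sampling randomness $Z_{t+1}$ nor on the current action $A_t$; this is exactly what distinguishes the confidence event from the optimistic and concentration events of Definitions~\ref{def:optimistic-event} and~\ref{def:concentration-event}, whose facts explicitly record a residual dependence on $Z_{t+1}$.

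Combining these observations, for each fixed $a$ the interval endpoints and the enclosed value are all determined by $(H_t, B_t, \theta)$, so the predicate holds or fails deterministically, and taking the finite intersection over $a \in \actions$ preserves this property. Equivalently, the indicator $\1{\event{E}_t(f_\theta, B_t)}$ is $\sigma(H_t, B_t, \theta)$-measurable and takes a constant value in $\{0,1\}$ under the conditioning, which is precisely the asserted fact. I do not anticipate any genuine obstacle; the only step requiring care is to confirm that the confidence sequences are specified without reference to $Z_{t+1}$ or $A_t$, so that no randomness survives the conditioning, and this is guaranteed directly by Definition~\ref{def:UCB_seq}.
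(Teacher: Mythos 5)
Your proof is correct and follows exactly the reasoning the paper relies on: the paper states this Fact without proof, treating it as immediate from Definition~\ref{def:UCB_seq} (which makes $U_t$ and $L_t$ deterministic given $H_t$) and from $f_\theta(a,B_t)$ being a fixed function of $(\theta, B_t)$, which is precisely the argument you spell out. Your additional observation that the confidence event, unlike the optimistic and concentration events, carries no dependence on $Z_{t+1}$ or $A_t$ is a faithful and useful articulation of why the conditioning on $(H_t, B_t, \theta)$ removes all randomness.
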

Based on the definition of the confidence event, the pessimism term from \cref{eq:pess} becomes
\begin{align*}
  f_{\theta}(a, B_t) - \tilde{R}_{t+1}(a)
  \le C ( 1 - \1{\event{E}^o_t(\tilde{R}, U, B_t) \cap \event{E}^c_t(f_{\theta}, B_t) } ), \quad \forall a \in \actions.
\end{align*}
The estimation term from \cref{eq:estimation} becomes
\begin{align*}
  \tilde{R}_{t+1}(A_t) - f_{\theta}(A_t, B_t)
  & \le \1{\event{E}^c_t(\tilde{R}, U', A_t, B_t)\cap \event{E}^c_t(f_{\theta}, B_t)} (U'_t(A_t, B_t) - L_t(A_t, B_t)) \\
  & \quad
  + C \left(1 - \1{\event{E}^c_t(\tilde{R}, U, A_t, B_t)} \1{\event{E}^c_t(f_{\theta}, B_t)} \right)
\end{align*}
Denote the complement event of $\mathcal{E}$ as $\neg {\mathcal{E}}$.
\begin{align}
\label{eq:upper-pess}
    \E{ \max_{a \in \actions} \sum_{t=0}^{T-1} \E{ \operatorname{pess}_{t+1}(a) \mid \theta } } 
    & \le 
    \E{ \max_{a \in \actions} \sum_{t=0}^{T-1} \E{ C ( 1 - \1{\event{E}^o_t(\tilde{R}, U, B_t) \cap \event{E}^c_t(f_{\theta}, B_t) } ) \mid \theta } } \nonumber \\
    & = \E{ \sum_{t=0}^{T-1} \E{ C ( 1 - \1{\event{E}^o_t(\tilde{R}, U, B_t) \cap \event{E}^c_t(f_{\theta}, B_t) } ) } } \nonumber \\
    & = C \sum_{t=0}^{T-1} \prob\left( \neg {\event{E}^o_t(\tilde{R}, U, B_t)} \cup \neg { \event{E}^c_t(f_{\theta}, B_t) } \right)
\end{align}
and by assuming $U'_t \ge L_t$
\begin{align}
\label{eq:upper-est}
    \E{ \sum_{t=0}^{T-1} \est_{t+1} } 
    & \le \sum_{t=0}^{T-1} \E{ (U'_t(A_t, B_t) - L_t(A_t, B_t) ) } \nonumber \\
    & + C \sum_{t=0}^T \prob( \neg \event{E}^c_t(\tilde{R}, U', A_t, B_t) \cup \neg \event{E}^c_t(f_{\theta}, B_t))
\end{align}
\begin{proposition}[General Regret Bound with confidence sequence.]
\label{prop:general-regret-bound}
    Given sequences $U' \ge U \ge L$, we upper bound the Bayesian adversarial regret with
    \begin{align}
        \Re(T, \pi^{\operatorname{alg}}, \pi^B) 
        & \le  \Re_{\full}(T, \adv, \tilde{R}) + \sum_{t=0}^{T-1} \E{ (U'_t(A_t, B_t) - L_t(A_t, B_t) ) } \nonumber \\
        & \quad + C \sum_{t=0}^{T-1} \prob( \neg \event{E}^c_t(\tilde{R}, U', A_t, B_t) ) + 2 \prob( \neg \event{E}^c_t(f_{\theta}, B_t)) + \prob\left( \neg {\event{E}^o_t(\tilde{R}, U, B_t)} \right)
    \end{align}
\end{proposition}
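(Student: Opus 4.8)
The plan is to assemble the three estimates already derived in the excerpt and finish with a single union-bound step; there is no new analytic content, only bookkeeping. The starting point is the reduction to full-information adversarial regret established in \cref{eq:decomposition-Bayes-adv}, which bounds the Bayesian adversarial regret by
\[
\Re(T, \pi^{\operatorname{alg}}, \pi^B) \le \Re_{\full}(T, \adv, \tilde{R}) + \E{ \max_{a \in \actions} \sum_{t=0}^{T-1} \E{ \operatorname{pess}_{t+1}(a) \mid \theta } } + \sum_{t=0}^{T-1}\E{\est_{t+1} }.
\]
First I would cite this display and then control the two aggregates on the right separately, using the bounds already proved for the pessimism and estimation terms.

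For the pessimism aggregate I would invoke \cref{eq:upper-pess}. After decomposing $\operatorname{pess}_{t+1}(a)$ over the optimistic event $\event{E}^o_t$ and the confidence event $\event{E}^c_t(f_\theta, B_t)$ and using that both $f_\theta$ and $\tilde{R}_{t+1}$ lie in $[0,C]$, that display reads
\[
\E{ \max_{a \in \actions} \sum_{t=0}^{T-1} \E{ \operatorname{pess}_{t+1}(a) \mid \theta } } \le C \sum_{t=0}^{T-1} \prob\left( \neg {\event{E}^o_t(\tilde{R}, U, B_t)} \cup \neg { \event{E}^c_t(f_{\theta}, B_t) } \right).
\]
For the estimation aggregate I would likewise use \cref{eq:upper-est}, which under the standing assumption $U'_t \ge L_t$ gives
\[
\E{ \sum_{t=0}^{T-1} \est_{t+1} } \le \sum_{t=0}^{T-1} \E{ U'_t(A_t, B_t) - L_t(A_t, B_t) } + C \sum_{t=0}^{T-1} \prob\left( \neg \event{E}^c_t(\tilde{R}, U', A_t, B_t) \cup \neg \event{E}^c_t(f_{\theta}, B_t) \right).
\]

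The only remaining step is to split each complement-of-intersection probability by subadditivity, i.e. $\prob(\neg \event{E}_1 \cup \neg \event{E}_2) \le \prob(\neg \event{E}_1) + \prob(\neg \event{E}_2)$, and to add the three displays. The confidence-event term $\prob(\neg \event{E}^c_t(f_\theta, B_t))$ is produced once by the pessimism bound and once by the estimation bound, which is exactly the origin of the factor $2$ in the claimed inequality, while the optimistic-event term $\prob(\neg \event{E}^o_t(\tilde{R}, U, B_t))$ and the concentration-event term $\prob(\neg \event{E}^c_t(\tilde{R}, U', A_t, B_t))$ each appear once. Collecting everything yields the stated bound, with the ordering $U' \ge U \ge L$ ensuring the width $U'_t - L_t$ is nonnegative so that the displayed right-hand side is genuinely an upper bound.

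Since every ingredient is in hand, I do not expect any real obstacle; the subtlety worth double-checking is the interchange of $\max_a$ and the $\theta$-expectation in the pessimism term. The bound from \cref{eq:upper-pess} goes through precisely because the post-event envelope $C\big(1 - \1{\event{E}^o_t \cap \event{E}^c_t}\big)$ no longer depends on $a$, so the $\max_{a\in\actions}$ can be dropped before taking expectations. Keeping that dependence structure straight, and confirming that the $H_t$-measurability of $U', U, L$ lets the indicator manipulations commute with the conditional expectations (as recorded in the facts preceding \cref{eq:upper-pess} and \cref{eq:upper-est}), is the single point that requires care.
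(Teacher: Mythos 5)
Your proposal is correct and follows essentially the same route as the paper: the paper's own proof simply states that the proposition is a direct consequence of \cref{eq:decomposition-Bayes-adv,eq:upper-pess,eq:upper-est}, which is precisely the assembly you carry out, with the union bound $\prob(\neg \event{E}_1 \cup \neg \event{E}_2) \le \prob(\neg \event{E}_1) + \prob(\neg \event{E}_2)$ made explicit and correctly identified as the source of the factor $2$ on $\prob(\neg \event{E}^c_t(f_{\theta}, B_t))$. Your added remark on why the $\max_a$ can be dropped in the pessimism term (the envelope $C(1 - \1{\event{E}^o_t \cap \event{E}^c_t})$ is independent of $a$) is exactly the mechanism already built into \cref{eq:upper-pess}, so nothing further is needed.
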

\begin{proof}
    This is a direct consequence of \cref{eq:decomposition-Bayes-adv,eq:upper-pess,eq:upper-est}.
\end{proof}
\begin{remark}
\label{rem:proof-sketch}
    In the following sections, we will discuss the specific choice of the UCB and LCB sequence $U', U, L$. 
In \cref{sec:info-conf-bound}, we will show that with information-theoretic confidence bound, the probability $\prob( \neg \event{E}^c_t(f_{\theta}, B_t) )$ of the function $f_{\theta}$ not covered in the confidence region is small.
In \cref{sec:optimism}, we will show the imagined reward sequence constructed by OTS and UCB will lead to small will stay within the sequence $U'$ and $U$ with high probability, i.e., the probability $\prob\left( \neg {\event{E}^o_t(\tilde{R}, U, B_t)} \right)$ and $\prob( \neg \event{E}^c_t(\tilde{R}, U', A_t, B_t) )$ is small enough.
In \cref{sec:estimation-information}, we show that $\sum_{t=0}^{T-1} \E{ (U'_t(A_t, B_t) - L_t(A_t, B_t) ) }$ can be bounded by the mutual information $I(\theta; H_T)$ with the information-theoretic confidence bound defined in \cref{sec:info-conf-bound}.
\end{remark}

\subsection{Information-theoretic confidence bound}
\label{sec:info-conf-bound}
\begin{fact}[Chernoff bound]
  \label{fact:gaussian_chernoff}
  Suppose $X$ is normal distributed $N(\mu, \sigma^2)$, the optimized chernoff bound for $X$ is
  \begin{align*}
    \prob( X - \mu \ge c ) & \le \min_{t>0} \frac{\exp( \sigma^2 t^2 /2 )}{ \exp( tc ) } = \exp( - c^2 / \sigma^2 )
  \end{align*}
\end{fact}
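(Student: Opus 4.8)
The plan is to apply the generic Chernoff recipe: control the tail by a shifted exponential moment and then optimize the free parameter. First I would recall that for $X \sim N(\mu, \sigma^2)$ the centered variable $X - \mu$ is $N(0, \sigma^2)$, whose moment generating function is $\E{\exp(t(X-\mu))} = \exp(\sigma^2 t^2/2)$ for every $t \in \R$. This is the single computational input, and it follows by completing the square inside the Gaussian integral; I would not spell this out beyond noting that it is the standard Laplace-transform identity for a centered normal.

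Next, for any fixed $t > 0$ the map $x \mapsto \exp(t(x-\mu))$ is nonnegative and strictly increasing, so the event $\{X - \mu \ge c\}$ coincides with $\{\exp(t(X-\mu)) \ge \exp(tc)\}$. Applying Markov's inequality to the nonnegative variable $\exp(t(X-\mu))$ then gives
\begin{align*}
  \prob(X - \mu \ge c) \le \frac{\E{\exp(t(X-\mu))}}{\exp(tc)} = \frac{\exp(\sigma^2 t^2/2)}{\exp(tc)}.
\end{align*}
Since this holds for every admissible $t > 0$, I may pass to the infimum over $t > 0$, which produces exactly the $\min_{t>0}$ expression appearing in the statement.

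Finally I would carry out the optimization explicitly. The exponent to be minimized is $\sigma^2 t^2/2 - tc$, a strictly convex quadratic in $t$ whose first-order condition $\sigma^2 t - c = 0$ yields the minimizer $t^\star = c/\sigma^2$, which is admissible because $c > 0$. Substituting $t^\star$ back collapses the bound to the claimed closed form, which completes the argument.

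There is essentially no conceptual obstacle here, as this is the textbook Gaussian Chernoff bound; the only point demanding care is the bookkeeping of the constant when $t^\star$ is substituted. Evaluating $\sigma^2 (t^\star)^2/2 - t^\star c$ at $t^\star = c/\sigma^2$ gives $\tfrac{c^2}{2\sigma^2} - \tfrac{c^2}{\sigma^2} = -\tfrac{c^2}{2\sigma^2}$, so the standard derivation yields $\exp(-c^2/(2\sigma^2))$. I would therefore flag that the closed form on the right-hand side should read $\exp(-c^2/(2\sigma^2))$; the looser bound $\exp(-c^2/\sigma^2)$ as written does not follow from this optimization and appears to carry a factor-of-two discrepancy in the exponent, which is immaterial for the downstream use (it only affects a constant inside the logarithmic confidence radii) but worth correcting for precision.
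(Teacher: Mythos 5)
Your proof is correct, and it is the standard argument one would expect here: Gaussian MGF, Markov's inequality on the exponentiated variable, then minimization of the convex quadratic exponent. The paper states this as a \emph{Fact} without any proof, so there is no alternative derivation to compare against.

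Your flag of the factor-of-two discrepancy is also right and worth keeping: the minimizer $t^\star = c/\sigma^2$ gives $\min_{t>0}\exp(\sigma^2 t^2/2 - tc) = \exp(-c^2/(2\sigma^2))$, so the closed form $\exp(-c^2/\sigma^2)$ claimed after the equality sign in the Fact is simply wrong (it asserts a strictly smaller, hence unobtainable, bound). Notably, the paper's own downstream application in Lemma~\ref{lem:cb}, which concludes $\prob\bigl(\abs{f_\theta(a,b)-\mu_t(a,b)} \ge \sqrt{\beta'_t}\,\sigma_t(a,b) \mid H_t\bigr) \le 2\exp(-\beta'_t/2)$, is consistent with your corrected exponent $\exp(-c^2/(2\sigma^2))$ rather than with the exponent as written, which confirms the right-hand side of the Fact is a typo and that your correction leaves the rest of the analysis intact.
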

\begin{lemma}
\label{lem:cb}
Conditioned on $H_t$ and $B_t$, 
Define the set $\mathcal{F}_t$ as
\begin{align*}
  \mathcal{F}_t := \left\{ f_{\theta} : \abs{ f_{\theta}(a, B_t) - \mu_{t}(a, B_t) } \le \sqrt{\beta'_t} \sigma_{t}(a, B_t), \forall a \in \actions \right\}
\end{align*}
Then,
\[
  \prob( f_{\theta} \in \mathcal{F}_t ) \ge 1 -  2 \actions\exp( - \beta'_t / 2).
\]
\end{lemma}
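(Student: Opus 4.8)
The plan is to reduce the statement to a coordinatewise Gaussian tail bound followed by a union bound over the finite action set $\actions$. The key structural input is that, conditioned on the history $H_t$ and Bob's action $B_t$, the posterior law of the scalar $f_{\theta}(a, B_t)$ is \emph{exactly} Gaussian with mean $\mu_{t}(a, B_t)$ and variance $\sigma_{t}^2(a, B_t)$ for every fixed $a \in \actions$. This is precisely the content of the Bayesian update rules recorded in \cref{app:update}: under a Gaussian prior together with the zero-mean Gaussian noise likelihood of \cref{assump:noise-likelihood}, conjugacy guarantees that the posterior over $f_{\theta}$ remains jointly Gaussian, so each marginal $f_{\theta}(a, B_t) \mid H_t, B_t$ is the Gaussian $\mathcal{N}(\mu_{t}(a, B_t), \sigma_{t}^2(a, B_t))$ with exactly the posterior mean and variance given by the linear-Gaussian and GP formulas there. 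I would state this marginalization explicitly as the first step.

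Second, I would fix an arbitrary $a \in \actions$ and apply the Gaussian Chernoff bound of \cref{fact:gaussian_chernoff} to the centered variable $f_{\theta}(a, B_t) - \mu_{t}(a, B_t)$ at deviation level $c = \sqrt{\beta'_t}\,\sigma_{t}(a, B_t)$. Using the tail estimate in both directions (by symmetry of the Gaussian) yields
\begin{align*}
  \prob\!\left( \abs{ f_{\theta}(a, B_t) - \mu_{t}(a, B_t) } > \sqrt{\beta'_t}\,\sigma_{t}(a, B_t) \;\middle|\; H_t, B_t \right) \le 2 \exp(-\beta'_t/2),
\end{align*}
the factor $2$ arising from combining the upper and lower tails. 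Here I would be slightly careful about the constant in the exponent: the optimized Chernoff exponent for $\mathcal{N}(\mu, \sigma^2)$ is $-c^2/(2\sigma^2)$, so substituting $c = \sqrt{\beta'_t}\,\sigma_{t}(a, B_t)$ produces exactly $-\beta'_t/2$, matching the target rate.

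Finally, I would take a union bound over the action set $\actions$. Since $\{ f_{\theta} \notin \mathcal{F}_t \}$ is the event that the deviation inequality fails for at least one $a$, subadditivity gives
\begin{align*}
  \prob\!\left( f_{\theta} \notin \mathcal{F}_t \;\middle|\; H_t, B_t \right) \le \sum_{a \in \actions} 2\exp(-\beta'_t/2) = 2\abs{\actions}\exp(-\beta'_t/2),
\end{align*}
and taking complements delivers $\prob(f_{\theta} \in \mathcal{F}_t) \ge 1 - 2\abs{\actions}\exp(-\beta'_t/2)$. There is no genuinely hard step: this is a textbook concentration-plus-union-bound argument. The only point demanding care --- and the closest thing to an obstacle --- is the first step, namely justifying that the conditional marginal is \emph{exactly} $\mathcal{N}(\mu_{t}(a, B_t), \sigma_{t}^2(a, B_t))$ rather than merely approximately Gaussian; this is where the exact linear-Gaussian / Gaussian-process posterior conjugacy from \cref{app:update} is essential and cannot be bypassed.
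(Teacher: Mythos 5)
Your overall route is the same as the paper's: identify the conditional law of $f_{\theta}(a,\cdot)$ as Gaussian with posterior mean $\mu_t$ and variance $\sigma_t^2$, apply the optimized two-sided Gaussian Chernoff bound of \cref{fact:gaussian_chernoff} (giving the factor $2$ and the exponent $-\beta'_t/2$), and finish with a union bound over $\actions$. Incidentally, your caution about the exponent is warranted: the optimized exponent is indeed $-c^2/(2\sigma^2)$, which is what the paper's proof uses, even though the displayed statement of \cref{fact:gaussian_chernoff} writes $\exp(-c^2/\sigma^2)$; your substitution $c=\sqrt{\beta'_t}\,\sigma_t(a,B_t)$ giving $-\beta'_t/2$ matches the proof.

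The one place where your write-up has a genuine gap is exactly the step you call ``the only point demanding care,'' but the care needed is not the one you identify. Conjugacy from \cref{app:update} gives that $f_{\theta}(a,b) \mid H_t \sim \mathcal{N}(\mu_t(a,b),\sigma_t^2(a,b))$ for each \emph{fixed} pair $(a,b)$ --- the posterior formulas there condition on $H_t$ only. Your first step asserts more: that the law of $f_{\theta}(a,B_t)$ conditioned on $H_t$ \emph{and} $B_t$ is still $\mathcal{N}(\mu_t(a,B_t),\sigma_t^2(a,B_t))$. That does not follow from conjugacy alone; it additionally requires that conditioning on Bob's action does not update the posterior of $f_{\theta}$, i.e.\ that $B_t$ is conditionally independent of $f_{\theta}$ given $H_t$. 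The paper makes this move explicit: it first proves the tail bound for every fixed $b \in \mathcal{B}$ conditioned on $H_t$, then invokes the observation that, conditioned on $H_t$, the opponent's action $B_t$ is independent of $f_{\theta}(a,b)$ for all $(a,b)$, so the random $B_t$ can be substituted for the fixed $b$, and finally takes expectations over $H_t$. Without that independence the conclusion could fail: if Bob had side knowledge of $\theta$ and steered $B_t$ toward columns where Alice's posterior is most wrong, the event $\left\{ \abs{f_{\theta}(a,B_t)-\mu_t(a,B_t)} > \sqrt{\beta'_t}\,\sigma_t(a,B_t) \right\}$ could occur far more often than the unconditional Gaussian tail suggests. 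So: same proof, correct constants, but you should insert the conditional-independence step explicitly rather than attributing the whole first step to conjugacy.
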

\begin{proof}

Since $f_{\theta}(a, b) \mid H_t$ is distributed as $N(\mu_t(a, b), \sigma_t(a, b) )$, by \cref{fact:gaussian_chernoff},
\begin{align}
  \label{eq:con_f}
  \prob( \abs{ f_{\theta}(a, b) - {\mu_t(a, b)} } \ge \sqrt{\beta'_t} \sigma_{t}(a, b) \mid H_t )
  & \le 2 \exp \left( - \frac{\beta'_t}{ 2 } \right)
\end{align}
By union bound, we have
\begin{align*}
  \prob( \abs{ f_{\theta}(a, b) - {\mu_t(a, b)} } \ge \sqrt{\beta'_t} \sigma_{t}(a, b) , \forall a \in \actions \mid H_t ) \le 2 \actions \exp( - \beta'_t / 2 ),
\end{align*}
for any fixed $b \in \bactions$,
We observe that conditioned on $H_t$, the opponent's action $B_t$ is independent of $f_{\theta}(a, b)$ for all $(a, b) \in \actions \times \bactions$.
Therefore, we further derive
\begin{align*}
      \prob( \abs{ f_{\theta}(a, B_t) - {\mu_t(a, B_t)} } \ge \sqrt{\beta'_t} \sigma_{t}(a, b) , \forall a \in \actions \mid H_t ) \le 2 \actions \exp( - \beta'_t / 2 ).
\end{align*}
Taking expectations on both sides, we prove the lemma.
\end{proof}
Let the UCB and LCB sequences be 
\[
    U = ( \mu_t(a, b) + \sqrt{\beta'_t} \sigma_t(a, b) : t \in \NN), \quad L = ( \mu_t(a, b) - \sqrt{\beta'_t} \sigma_t(a, b): t \in \NN).
\]
Let 
\[
    \beta'_t = 2 \log \actions \sqrt{t}.
\] 
By \cref{lem:cb}, we can see the probability introduced in \cref{def:confidence-event} is $\prob( \neg \event{E}^c_t(f_{\theta}, B_t) ) \le 2\actions\exp( - \beta_t'/2) = 2/\sqrt{t}$. 
\paragraph{Relate $\sigma_t$ to information-theoretic quantity.}
\begin{fact}[Mutual information of Gaussian distribution]
  \label{fact:mutual_gaussian}
  If $f_\theta(a) \sim N(\mu(a), \sigma(a))$ and $o = f_\theta(a) + w$ with fixed $a$ and $w \sim N\left(0, \sigma_{w}^{2}\right)$, then
  \[
  I(\theta ; o)=\frac{1}{2} \log \left(1 + {\sigma_{w}^{-2} \sigma(a) }\right) 
  \]
\end{fact}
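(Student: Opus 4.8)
The plan is to recognize this as the standard Gaussian-channel mutual information computation, preceded by one preliminary reduction step. First I would argue that $I(\theta; o) = I(f_\theta(a); o)$. The key observations are that $f_\theta(a)$ is a deterministic function of $\theta$, so that $I(\theta, f_\theta(a); o) = I(\theta; o)$, and that the noise $w$ is independent of $\theta$, so that conditioned on the value of $f_\theta(a)$ the observation $o = f_\theta(a) + w$ carries no further information about $\theta$, i.e. $I(\theta; o \mid f_\theta(a)) = 0$. Applying the chain rule $I(\theta, f_\theta(a); o) = I(f_\theta(a); o) + I(\theta; o \mid f_\theta(a))$ then yields $I(\theta; o) = I(f_\theta(a); o)$; equivalently, one may invoke the Markov chain $\theta \to f_\theta(a) \to o$, noting that here the data-processing relation holds with equality precisely because $f_\theta(a)$ is a function of $\theta$.

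Next I would compute $I(f_\theta(a); o)$ through differential entropy, writing $I(f_\theta(a); o) = h(o) - h(o \mid f_\theta(a))$. For the conditional term, conditioned on $f_\theta(a)$ the observation $o$ equals a constant shifted by $w$, so $h(o \mid f_\theta(a)) = h(w) = \tfrac12 \log(2\pi e\, \sigma_w^2)$, using the standard formula for the differential entropy of a Gaussian. For the marginal term, $o$ is the sum of the independent Gaussians $f_\theta(a) \sim N(\mu(a), \sigma(a))$ and $w \sim N(0, \sigma_w^2)$ — where $\sigma(a)$ is read as the \emph{variance}, consistent with the form of the target identity — hence $o \sim N(\mu(a), \sigma(a) + \sigma_w^2)$ and $h(o) = \tfrac12 \log\big(2\pi e\,(\sigma(a) + \sigma_w^2)\big)$.

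Subtracting the two entropies gives $I(\theta; o) = \tfrac12 \log\!\big((\sigma(a)+\sigma_w^2)/\sigma_w^2\big) = \tfrac12 \log(1 + \sigma_w^{-2}\sigma(a))$, which is the claimed expression.

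I do not anticipate a genuine obstacle here, as this is essentially a textbook identity. The only point requiring care is the initial reduction $I(\theta; o) = I(f_\theta(a); o)$ — establishing equality rather than merely a data-processing inequality — which relies on $f_\theta(a)$ being a deterministic function of $\theta$ together with the independence of $w$ from $\theta$. Once that reduction is in place, the remainder is the routine evaluation of Gaussian differential entropies and a single subtraction.
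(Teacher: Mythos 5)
Your proposal is correct and is essentially the paper's own argument: the paper writes $I(\theta;o)=H(o)-H(o\mid\theta)$ and evaluates both terms as Gaussian differential entropies, which is exactly your computation, with your explicit reduction $I(\theta;o)=I(f_\theta(a);o)$ merely making precise the step the paper leaves implicit (that conditioning on $\theta$ fixes $f_\theta(a)$, so $H(o\mid\theta)=h(w)$). Your reading of $\sigma(a)$ as a variance is also the consistent one; the paper's intermediate expression $\sigma(a)^2+\sigma_w^2$ is a minor notational slip relative to its own statement and conclusion.
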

\begin{proof}
    \begin{align*}
        I(\theta; o) = H(o) - H(o \mid \theta ) = \frac{1}{2} \log 2 \pi e (\sigma(a)^2 + \sigma_w^2) - \frac{1}{2} \log 2 \pi e \sigma_w^2 = \frac{1}{2} \log ( 1 + \sigma_w^{-2} \sigma(a) )
    \end{align*}
\end{proof}
For an fixed action pair $(a, b)$, by the \cref{fact:mutual_gaussian},
\begin{align*}
  I_{t}\left( \theta ; R_{t+1, A_t, B_t} \mid A_{t} = a, B_t = b \right) =
  \frac{1}{2} \log \left( 1 + \frac{ \sigma^2  _{t}(a, b)  }{ \sigma_{w}^{2} } \right)
\end{align*}
Let the width function $w_t$ be
\[
w_{t}(a, b) = \sqrt{\beta_{t} I_{t} \left( \theta ; R_{t+1, A_t, B_t} \mid A_{t} = a, B_t = b \right)} \quad \text { with } \quad \beta_{t} = 
\frac{2 \beta'_t}{\log (1 + \sigma_w^{-2} ) }.
\]
Expanding $w_t(a, b)$ leads to
\begin{align}
  \label{eq:width}
  w_{t}(a, b)^2 
  = \beta_t I_t( \theta; R_{t+1, A_t, B_t} \given A_t= a, B_t = b )
   = \frac{ \beta'_t \log( 1 + \sigma_w^{-2} \sigma^2_t(a, b) ) }{ \log( 1 + \sigma_w^{-2} ) }
   \ge \beta'_t \sigma^2_t(a, b).
\end{align}
The last inequality follows from the fact that $\frac{x}{\log (1+x)}$ monotonically increases for $x>0$, and that $\sigma^2_{t}(a, b) \le k((a, b), (a, b)) \le 1$, leading to the result
$
  \sigma_{t}^2(a, b) \le \frac{1}{\log(1 + \sigma_w^{-2})} \log(1 + \sigma_w^{-2}\sigma^2_t(a, b)).
$

\subsection{Anti-concentration behavior of optimistic Thompson sampling}
\label{sec:optimism}
Let the UCB sequences be 
\[
    U = ( \mu_t(a, b) + \sqrt{\beta'_t} \sigma_t(a, b) : t \in \NN), \quad U' = ( \mu_t(a, b) + \sqrt{2\log (M \sqrt{t})} \sigma_t(a, b): t \in \NN).
\]
In this section, we show the probability $\prob\left( \neg {\event{E}^o_t(\tilde{R}, U, B_t)} \right)$ and $\prob( \neg \event{E}^c_t(\tilde{R}, U', A_t, B_t) )$ is small enough.
The following two lemmas are important.
\begin{lemma}
\label{lem:normal_optimism}
Consider a normal distribution $N \left(0, \sigma^2 \right)$ where $\sigma$ is a scalar.
Let $\eta_{1}, \eta_{2}, \ldots, \eta_{M}$ be $M$ independent samples from the distribution.
For any $w \in \R_+$,
\begin{align*}
  \prob\left( \max_{j \in [M]} \eta_{j} \ge w  \right) = 1 - \left[\Phi\left( \frac{w}{\sigma} \right)\right]^M
\end{align*}
\end{lemma}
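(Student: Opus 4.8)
The plan is to compute the probability of the complementary event and exploit independence. First I would observe that the event $\{\max_{j \in [M]} \eta_j \ge w\}$ is the complement of $\{\max_{j \in [M]} \eta_j < w\}$, and that the maximum of the samples falls strictly below $w$ if and only if \emph{every} sample does; that is, $\{\max_{j \in [M]} \eta_j < w\} = \bigcap_{j=1}^{M} \{\eta_j < w\}$.

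Next, invoking the independence of $\eta_1, \ldots, \eta_M$, this intersection factorizes as $\prob\!\left(\bigcap_{j=1}^{M} \{\eta_j < w\}\right) = \prod_{j=1}^{M} \prob(\eta_j < w)$. Then I would standardize: since each $\eta_j \sim N(0, \sigma^2)$, we have $\eta_j / \sigma \sim N(0,1)$, so $\prob(\eta_j < w) = \Phi(w/\sigma)$, where $\Phi$ is the standard normal CDF. Because the $\eta_j$ are identically distributed, the product collapses to $[\Phi(w/\sigma)]^M$, and taking the complement yields $\prob\!\left(\max_{j \in [M]} \eta_j \ge w\right) = 1 - [\Phi(w/\sigma)]^M$, as claimed.

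There is no genuine obstacle here, as this is the standard formula for the distribution of the maximum of i.i.d.\ random variables. The only point meriting a line of care is the passage between the strict inequality $\eta_j < w$ and the non-strict one $\eta_j \le w$; since the Gaussian law is continuous we have $\prob(\eta_j = w) = 0$, so $\prob(\eta_j < w) = \prob(\eta_j \le w) = \Phi(w/\sigma)$. This observation also makes clear that the hypothesis $w \in \R_+$ plays no role in the derivation, and the identity in fact holds for every $w \in \R$.
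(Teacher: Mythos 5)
Your proof is correct and follows essentially the same route as the paper's: pass to the complementary event, factor it by independence of the samples, and standardize each $\eta_j$ to express the marginal probability via $\Phi(w/\sigma)$. Your explicit remark on the strict versus non-strict inequality (justified by continuity of the Gaussian law) is a point the paper silently glosses over, but it does not change the argument.
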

\begin{proof}
By the fact of Normal distribution,
$
  \prob \left( \eta_j \le w \right) = \prob ( { \eta_j } / { \sigma } \le {w} / {\sigma} ) = \Phi({w} / {\sigma}).
$
We have
\begin{align*}
  \prob( \max_{j \in [M]} \eta_j \ge w ) = 1 - \prob( \max_{j \in [M]} \eta_j \le w ) = 1 - \prob( \forall j \in [M], \eta_j \le w )
  = 1 - \left[\Phi\left(\frac{w}{\sigma} \right)\right]^{M}.
\end{align*}
\end{proof}
\begin{proposition}
Let the UCB sequence be 
\[
    U = ( \mu_t(a, b) + \sqrt{\beta'_t} \sigma_t(a, b) : t \in \NN).
\]
Let
\[
M = \frac{\log(\sqrt{t})}{\log \frac{1}{ \Phi(\sqrt{\beta'_t}) } }
\]
and we have,
\begin{align*}
    \prob( \neg \event{E}_{t}(\tilde{R}, U, B_t) ) \le \frac{1}{\sqrt{t}}.
\end{align*}
The summation of failure probability is 
\begin{align*}
    \sum_{t = 1}^T \frac{1}{\sqrt{t}} = 2 \sum_{t = 1}^T\frac{t - (t-1)}{ \sqrt{t} + \sqrt{t} } 
    \le 2 \sum_{t = 1}^T\frac{t - (t-1)}{ \sqrt{t} + \sqrt{t-1} } = 2 \sum_{t=1}^T (\sqrt{t} - \sqrt{t-1}) = 2 \sqrt{T}
\end{align*}
\end{proposition}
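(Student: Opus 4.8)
The plan is to reduce the optimistic event $\event{E}_t(\tilde R,U,B_t)$ (\cref{def:optimistic-event}) to a single–action anti-concentration statement for the maximum of $M$ Gaussian samples, and then to pay for the union over actions through the choice of $M$. First I would condition on the history $H_t$ and the opponent's action $B_t$. By \cref{asmp:res_imagine_R} the only remaining randomness in $\tilde R_{t+1}$ is the sampling noise $Z_{t+1}$, and the $M$ Thompson draws $\tilde f^{\operatorname{TS},j}_{t+1}(a,B_t)$, $j\in[M]$, are i.i.d.\ $\mathcal{N}(\mu_t(a,B_t),\sigma_t(a,B_t)^2)$ (and independent across $a$). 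Writing $\eta_j=\tilde f^{\operatorname{TS},j}_{t+1}(a,B_t)-\mu_t(a,B_t)$, the OTS estimate is $\tilde f^{\operatorname{OTS}}_{t+1}(a,B_t)=\mu_t(a,B_t)+\max_{j\in[M]}\eta_j$, so the per-action optimism condition $\tilde f^{\operatorname{OTS}}_{t+1}(a,B_t)\ge U_t(a,B_t)=\mu_t(a,B_t)+\sqrt{\beta'_t}\,\sigma_t(a,B_t)$ is exactly $\max_{j}\eta_j\ge \sqrt{\beta'_t}\,\sigma_t(a,B_t)$.

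Next I would invoke \cref{lem:normal_optimism} with $\sigma=\sigma_t(a,B_t)$ and $w=\sqrt{\beta'_t}\,\sigma_t(a,B_t)$, so that $w/\sigma=\sqrt{\beta'_t}$ and the single-action failure probability becomes
\[
\prob\!\left(\tilde f^{\operatorname{OTS}}_{t+1}(a,B_t)< U_t(a,B_t)\ \middle|\ H_t,B_t\right)=\big[\Phi(\sqrt{\beta'_t})\big]^{M}=\exp\!\Big(-M\log\tfrac{1}{\Phi(\sqrt{\beta'_t})}\Big).
\]
A union bound over the $\abs{\actions}$ actions then gives $\prob(\neg\event{E}_t(\tilde R,U,B_t))\le \abs{\actions}\,[\Phi(\sqrt{\beta'_t})]^{M}$; substituting the prescribed $M$ collapses the exponent to $\exp(-\log\sqrt{t})=t^{-1/2}$ per action, and to absorb the factor $\abs{\actions}$ it suffices to take $M=\log(\abs{\actions}\sqrt{t})/\log\tfrac{1}{\Phi(\sqrt{\beta'_t})}$ (equivalently, fold $\abs{\actions}$ into $\beta'_t$ exactly as in \cref{lem:cb}), which yields $\prob(\neg\event{E}_t(\tilde R,U,B_t))\le 1/\sqrt{t}$. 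Taking the outer expectation over $H_t,B_t$ removes the conditioning. The summation bound is the routine telescoping estimate $\tfrac{1}{\sqrt t}=\tfrac{2}{\sqrt t+\sqrt t}\le \tfrac{2}{\sqrt t+\sqrt{t-1}}=2(\sqrt t-\sqrt{t-1})$, which sums to $2\sqrt T$.

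The main obstacle I anticipate is the interplay between the clip operator and the unclipped threshold $U_t$: since $\tilde R_{t+1}(a)=\clip_{[0,1]}(\tilde f^{\operatorname{OTS}}_{t+1}(a,B_t))\le 1$, the inequality $\tilde R_{t+1}(a)\ge U_t(a,B_t)$ can never hold when $U_t(a,B_t)>1$, so the clean computation above is valid only after replacing $U_t$ by $\min(U_t,1)$. I would therefore carry out the argument with $\min(U_t,1)$: when $U_t(a,B_t)\le 1$ the event $\max_j\eta_j\ge\sqrt{\beta'_t}\sigma_t$ already forces $\tilde R_{t+1}(a)\ge \min(U_t,1)$, and when $U_t(a,B_t)>1$ optimism relative to $f_\theta\in[0,1]$ is automatic, so the anti-concentration computation is unchanged and the stated failure bound is preserved. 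The only remaining care is measurability, which \cref{asmp:res_imagine_R} supplies: it guarantees that $\event{E}_t$ depends on the randomness solely through $Z_{t+1}$ given $H_t,B_t$, legitimizing the treatment of the $\eta_j$ as fresh posterior Gaussians independent of $B_t$.
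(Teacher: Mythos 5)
Your reduction of the optimism event to the Gaussian anti-concentration statement of \cref{lem:normal_optimism}, your resolution of the clipping issue (working with $\min(U_t,1)$ and noting that optimism is automatic when $U_t>1$, which mirrors the paper's use of $\clip_{[-c,c]}(x)\ge\min(x,c)$ in its step $(ii)$), and the telescoping bound on $\sum_t t^{-1/2}$ all agree with the paper. The crucial divergence is in how the quantifier ``$\forall a\in\actions$'' inside $\event{E}_t(\tilde R,U,B_t)$ is handled. You take the $M$ Thompson draws to be independent across actions, so you must pay a union bound over the $\abs{\actions}$ actions; with the prescribed $M=\log(\sqrt t)/\log\tfrac{1}{\Phi(\sqrt{\beta'_t})}$ this yields only $\prob(\neg\event{E}_t(\tilde R,U,B_t))\le\abs{\actions}/\sqrt t$, and you then repair it by enlarging $M$ to $\log(\abs{\actions}\sqrt t)/\log\tfrac{1}{\Phi(\sqrt{\beta'_t})}$. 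The paper's proof needs no union bound at all: in its analysis the OTS estimator is built from a \emph{single} set of standard Gaussian draws $z^1_{t+1},\ldots,z^M_{t+1}$ shared by every action, namely $\tilde f^{\mathrm{OTS}}_{t+1}(a,B_t)=(\max_{j\in[M]}z^j_{t+1})\cdot\sqrt{\beta'_t}\,\sigma_t(a,B_t)+\mu_t(a,B_t)$, so in standardized units every action faces the same threshold and the joint optimism event collapses to the single event $\{\max_{j\in[M]}z^j_{t+1}\ge\sqrt{\beta'_t}\}$, whose failure probability is exactly $\Phi(\sqrt{\beta'_t})^M$ with no $\abs{\actions}$ factor. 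This coupling of the samples across actions is the idea your argument is missing, and it is exactly what makes the stated (smaller) $M$ sufficient; as written, your proof establishes the proposition only in the weakened form $\abs{\actions}/\sqrt t$, or with your modified $M$.

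In fairness, the ambiguity originates in the paper itself: the main-text description of optimistic sampling reads as drawing $M$ independent samples per action, under which your union-bound treatment is the correct one, and the paper's own later estimation-error computation plugs in $M=\log(\abs{\actions}\sqrt T)/\log\tfrac{1}{\Phi(\sqrt{\beta'_t})}$ --- precisely your enlarged choice. But the proposition as stated, with an $M$ free of any $\log\abs{\actions}$ dependence, is provable only via the shared-sample construction used in the paper's proof, so you should either adopt that coupling or state the result with your larger $M$.
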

\begin{proof}
Recall that the optimistic Thompson sampling estimator $\tilde{R}$ is generated by
\begin{align*}
  \tilde{f}^{\operatorname{OTS}}_{t+1}(a, B_t) := (\max_{j \in [M]} z_{t+1}^j) \cdot \sqrt{\beta'_t} \sigma_{t}(a, B_t) + \mu_{t}(a, B_t)
  ~\text{and}~\tilde{R}_{t+1}(a) = \clip_{[-c, c]} \left( \tilde{f}_{t+1}^{\operatorname{OTS}}(a, B_t) \right), \forall a \in \actions.
\end{align*}
\begin{align}
  \prob( \event{E}_{t}(\tilde{R}, U, B_t) \mid H_t, B_t )
  & = \prob( \tilde{R}_{t+1}(a) \ge U_{t}(a, B_t), \forall a \in \mathcal{A} \mid H_t, B_t) \nonumber \\
  & \stackrel{(i)}{\ge} \prob( \tilde{R}_{t+1}(a) \ge \min\{ \mu_t(a, B_t) + \sqrt{\beta'_t} \sigma_t(a, B_t), c \}, \forall a \in \mathcal{A}  \given H_t, B_t) \nonumber \\
  & \stackrel{(ii)}{\ge} \prob \left( \left(\max_{j \in [M]} z^j_{t+1} \right) \sigma_t(a, B_t) \ge \sqrt{\beta'_t} \sigma_t(a, B_t), \forall a \in \mathcal{A} \given H_t, B_t \right) \nonumber \\
  & = \prob \left( \max_{j \in [M]} z^j_{t+1} \ge \sqrt{\beta'_t} \right) \nonumber \\
  & \stackrel{(iii)}{=} 1 - \Phi\left( \sqrt{\beta_t'} \right)^M .
\end{align}
where $(i)$ is by the definition of $\event{E}(c)$ and $\mathcal{F}_t$ and the fact $B_t$ is conditionally independent of $f_{\theta}$ given $H_t$;
$(ii)$ is due to $\clip_{[-c, c]}( x ) \ge \min(x, c)$ and the function $min(x, c)$ is non-decreasing function on $x$;
$(iii)$ is by the fact $\{ z^j_{t+1} \}_{j \in [M]}$ is independent of $H_{t+1} = (H_t, A_t, B_t, R_{t+1, A_t, B_t})$, the fact that $\sigma_t(a, b)$ and $w_t(a, b)$ are deterministic given $H_t$ and \cref{lem:normal_optimism}.
Solve $\Phi(\beta'_t)^M = 1/\sqrt{t}$, we have
\(
    M = \frac{\log(\sqrt{t})}{ \log \frac{1}{\Phi(\sqrt{\beta'_t})} }.
\)
\end{proof}
\begin{lemma}[Anti-concentration property of maximum of Gaussian R.V.]
\label{lem:anti}
  Consider a normal distribution $N \left(0, \sigma^2 \right)$ where $\sigma$ is a scalar.
  Let $\eta_{1}, \eta_{2}, \ldots, \eta_{M}$ be $M$ independent samples from the distribution.
  Then for any $\delta>0$,
  $$
    \prob \left(\max _{j \in[M]} \eta_{j} \leq \sqrt{ 2 \sigma^2 \log (M/ \delta)}\right) \geq 1- \delta .
  $$
\end{lemma}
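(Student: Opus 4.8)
The plan is to prove the complementary bound $\prob\bigl(\max_{j\in[M]}\eta_j > w\bigr) \le \delta$ for the threshold $w := \sqrt{2\sigma^2\log(M/\delta)}$, which is exactly what the stated inequality asserts. The approach is the textbook \emph{union bound plus Gaussian upper tail} argument, so there is no deep obstacle here; the only thing that requires care is tracking the constant in the exponent of the Gaussian tail so that the final probability collapses to precisely $\delta$.

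First I would pass to the complement and apply a union bound over the $M$ samples:
\begin{align*}
  \prob\Bigl(\max_{j\in[M]}\eta_j > w\Bigr)
  = \prob\Bigl(\exists\, j\in[M]:\eta_j > w\Bigr)
  \le \sum_{j=1}^{M}\prob\bigl(\eta_j > w\bigr).
\end{align*}
Since each $\eta_j\sim N(0,\sigma^2)$ is centered, I would then control the single-sample tail with the optimized Gaussian Chernoff bound (Fact~\ref{fact:gaussian_chernoff}), namely $\prob(\eta_j > w)\le \exp\bigl(-w^2/(2\sigma^2)\bigr)$ for $w>0$. Substituting $w=\sqrt{2\sigma^2\log(M/\delta)}$ gives $w^2/(2\sigma^2)=\log(M/\delta)$, whence $\exp\bigl(-w^2/(2\sigma^2)\bigr)=\delta/M$, and summing the $M$ identical terms yields the clean cancellation
\begin{align*}
  \prob\Bigl(\max_{j\in[M]}\eta_j > w\Bigr)
  \le M\cdot\frac{\delta}{M} = \delta,
\end{align*}
which is equivalent to the claimed lower bound $\prob\bigl(\max_{j\in[M]}\eta_j\le w\bigr)\ge 1-\delta$.

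The one place I would be careful is the exponent constant: it is the factor $1/(2\sigma^2)$ (from minimizing $\exp(\sigma^2 t^2/2 - tw)$ at $t=w/\sigma^2$) that makes the chosen $w$ give \emph{exactly} $\delta/M$ per sample, so that the union bound lands on $\delta$ rather than on a loose multiple of it. I would also note that no independence beyond the union bound is actually needed for this direction (union bound is valid regardless), so the i.i.d.\ hypothesis is stronger than required here; the independence matters for the matching lower tail used elsewhere (e.g.\ Lemma~\ref{lem:normal_optimism}), not for this upper bound. Everything else is a routine substitution, so I expect the entire proof to be a few lines.
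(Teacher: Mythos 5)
Your proof is correct. Note first that the paper states Lemma~\ref{lem:anti} without any proof, so there is no in-paper argument to compare against; your union-bound-plus-Gaussian-tail derivation is the standard way to fill that gap, and the cancellation $M\cdot(\delta/M)=\delta$ works exactly as you describe. An alternative route that stays entirely inside the paper's toolkit is to invoke Lemma~\ref{lem:normal_optimism}, which gives the exact law $\prob\bigl(\max_{j\in[M]}\eta_j\le w\bigr)=\left[\Phi\left(w/\sigma\right)\right]^M$, and then combine the single-sample tail bound $1-\Phi(x)\le e^{-x^2/2}$ with Bernoulli's inequality $(1-\delta/M)^M\ge 1-\delta$; this is essentially your union bound in multiplicative form, and neither route buys anything over the other. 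Two details you handled well are worth flagging. First, the exponent constant: you used the correct optimized Chernoff bound $\exp\bigl(-w^2/(2\sigma^2)\bigr)$, whereas Fact~\ref{fact:gaussian_chernoff} as printed in the paper asserts $\exp(-c^2/\sigma^2)$, which is a typo --- minimizing $\exp(\sigma^2 t^2/2 - tc)$ at $t=c/\sigma^2$ gives $\exp\bigl(-c^2/(2\sigma^2)\bigr)$ --- so your explicit re-derivation rather than a verbatim citation of that fact is the right move (citing it verbatim would still land below $\delta$, but would rest on a false premise). Second, your observation that independence is not actually needed for this direction is correct: the union bound is valid under arbitrary dependence, and independence only matters for the exact-distribution statement in Lemma~\ref{lem:normal_optimism}.
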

\begin{proposition}
    Let the UCB sequences be 
\[
    U' = ( \mu_t(a, b) + \sqrt{2\log (M \sqrt{t})} \sigma_t(a, b): t \in \NN).
\]
Then
\begin{align*}
    \prob( \neg \event{E}^c_t(\tilde{R}, U', A_t, B_t) ) \le \frac{1}{\sqrt{t}}.
\end{align*}
\end{proposition}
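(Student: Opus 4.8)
The plan is to reduce the failure event $\neg\event{E}^c_t(\tilde{R}, U', A_t, B_t)$ to a single anti-concentration statement for the maximum of $M$ i.i.d.\ Gaussians, evaluated \emph{only} at the played action $A_t$, and then to invoke \cref{lem:anti}. Because \cref{def:concentration-event} involves just the coordinate $A_t$, no union bound over $\actions$ is required; this is precisely why a clean $1/\sqrt{t}$ tail (rather than an $\abs{\actions}/\sqrt{t}$ one) is attainable here.

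First I would remove the clipping. Writing $\tilde{R}_{t+1}(A_t) = \clip_{[0,1]}\bigl(\tilde{f}^{\operatorname{OTS}}_{t+1}(A_t, B_t)\bigr)$ and using that the posterior mean of the $[0,C]$-valued $f_{\theta}$ satisfies $\mu_t(A_t,B_t)\ge 0$, we get $U'_t(A_t, B_t) = \mu_t(A_t, B_t) + \sqrt{2\log(M\sqrt{t})}\,\sigma_t(A_t, B_t) \ge 0$. Monotonicity of $\clip_{[0,1]}$ then yields, for every real $x \le U'_t(A_t,B_t)$, that $\clip_{[0,1]}(x) \le \clip_{[0,1]}\bigl(U'_t(A_t,B_t)\bigr) = \min\bigl(U'_t(A_t,B_t),1\bigr) \le U'_t(A_t,B_t)$, so
\[
\neg\event{E}^c_t(\tilde{R}, U', A_t, B_t) \subseteq \bigl\{ \tilde{f}^{\operatorname{OTS}}_{t+1}(A_t, B_t) > U'_t(A_t, B_t) \bigr\},
\]
and it suffices to bound the probability of the right-hand event.

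Next I would condition on $(H_t, A_t, B_t)$ and read off the conditional law from \cref{asmp:res_imagine_R}: the centered samples $\eta_j := \tilde{f}^{\operatorname{TS},j}_{t+1}(A_t, B_t) - \mu_t(A_t, B_t)$, $j\in[M]$, are i.i.d.\ $\mathcal{N}\bigl(0, \sigma_t^2(A_t, B_t)\bigr)$, depend on the remaining randomness only through $Z_{t+1}$, and satisfy $\tilde{f}^{\operatorname{OTS}}_{t+1}(A_t, B_t) - \mu_t(A_t, B_t) = \max_{j\in[M]}\eta_j$. Both $\sigma_t(A_t, B_t)$ and the gap $U'_t(A_t,B_t) - \mu_t(A_t,B_t) = \sqrt{2\log(M\sqrt{t})}\,\sigma_t(A_t,B_t)$ are $(H_t, A_t, B_t)$-measurable. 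Applying \cref{lem:anti} with $\sigma = \sigma_t(A_t, B_t)$ and $\delta = 1/\sqrt{t}$, the threshold $\sqrt{2\sigma_t^2(A_t,B_t)\log(M/\delta)}$ is \emph{exactly} $U'_t(A_t,B_t) - \mu_t(A_t,B_t)$, so
\[
\prob\Bigl( \tilde{f}^{\operatorname{OTS}}_{t+1}(A_t, B_t) \le U'_t(A_t, B_t) \,\Bigm|\, H_t, A_t, B_t \Bigr) \ge 1 - \frac{1}{\sqrt{t}}.
\]
Taking expectation over $(H_t, A_t, B_t)$ and combining with the inclusion above gives $\prob\bigl(\neg\event{E}^c_t(\tilde{R}, U', A_t, B_t)\bigr) \le 1/\sqrt{t}$.

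The step requiring the most care is the conditioning bookkeeping in the second display: one must check that the variance fed into \cref{lem:anti} is the posterior variance $\sigma_t^2(A_t, B_t)$ of the \emph{played} action, that the choice $\delta = 1/\sqrt{t}$ reproduces the $\log(M\sqrt{t})$ hard-wired into the definition of $U'$, and that $Z_{t+1}$ is independent of $(A_t, B_t)$ given $H_t$ so that the conditional law of the $\eta_j$ is the claimed centered Gaussian. The degenerate case $\sigma_t(A_t, B_t) = 0$ is trivial, since then $\tilde{f}^{\operatorname{OTS}}_{t+1}(A_t, B_t) = \mu_t(A_t, B_t) = U'_t(A_t, B_t)$ and the event $\event{E}^c_t$ holds surely.
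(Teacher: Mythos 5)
Your proposal is correct and takes essentially the same route as the paper, whose entire proof is a one-line invocation of \cref{lem:anti} under the conditional probability given $(H_t, A_t, B_t)$ with $\delta = 1/\sqrt{t}$, exactly the reduction you carry out. Your version is in fact more careful than the paper's: the paper silently ignores the clipping in the OTS construction and the degenerate case $\sigma_t(A_t,B_t)=0$, both of which you handle explicitly (your clipping step does lean on the unproven claim $\mu_t(A_t,B_t)\ge 0$, but that is a refinement of, not a departure from, the paper's argument).
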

\begin{proof}
    This is a direct consequence of \cref{lem:anti} under conditional probability given $H_t, A_t, B_t$ with setting $\delta = 1/\sqrt{t}$.
\end{proof}

\subsection{Bounding estimation regret via information-theoretic quantity}
\label{sec:estimation-information}

To characterize the property of game environments and how much information algorithm can acquire about the environment at each round, we define the information ratio of algorithm $\pi = (\pi_t)_{t\in \NN}$,

From \cref{eq:width}, we have 
\begin{align}
  \beta_t I_t( \theta; R_{t+1, A_t, B_t} \given A_t= a, B_t = b )
   \ge \beta'_t \sigma^2_t(a, b).
\end{align}
where
\[
\beta_{t} = 
\frac{2 \beta'_t}{\log (1 + \sigma_w^{-2} ) }.
\]
Recall that
\[
    U' = ( \mu_t(a, b) + \sqrt{2\log (M \sqrt{t})} \sigma_t(a, b): t \in \NN).
\]
Then,
\begin{align*}
     (U'_t(A_t, B_t) - L_t(A_t, B_t) ) &\le \left(\frac{\sqrt{2 \log (M \sqrt{t} ) } }{ \sqrt{\beta'_t} } + 1 \right) \sqrt{\beta'_t} \sigma_t(A_t, B_t) \\
    & \le \left(\frac{\sqrt{2 \log (M \sqrt{t} ) } }{ \sqrt{\beta'_t} } + 1 \right) \sqrt{\beta_t I_t( \theta; R_{t+1, A_t, B_t} \given A_t, B_t )} \\
    & = \left(\frac{\sqrt{2 \log (M \sqrt{t} ) } }{ \sqrt{\beta'_t} } + 1 \right) \sqrt{ \beta_t I_t( \theta; A_t, B_t, R_{t+1, A_t, B_t} ) }
\end{align*}
where $\frac{\sqrt{2 \log M \sqrt{t} } }{ \sqrt{\beta'_t} } = 
    \sqrt{ \frac{2 \log M \sqrt{t} }{ \log \actions \sqrt{t} } }$.
Then immediately from \cref{prop:general-regret-bound} and Cauchy–Bunyakovsky–Schwarz inequality,
\begin{align*}
  \E{ \sum_{t=0}^{T-1} U'_t(A_t, B_t) - L_t(A_t, B_t) } 
  & \le 
  \E{ \sum_{t=0}^{T-1} \sqrt{  \left(\sqrt{2 \log (M \sqrt{t}) / \beta'_t } + 1 \right)^2 \beta_t I_{t} (\theta ; A_t, B_t, R_{t+1, A_t, B_t} ) } } \\
  & \le \sqrt{ \E{  \sum_{t=0}^{T-1} \left(\sqrt{2 \log (M \sqrt{t}) / \beta'_t } + 1 \right)^2 \beta_t } } \sqrt{\E{  \sum_{t=0}^{T-1} I_{t} (\theta ; A_t, B_t, R_{t+1, A_t, B_t} ) } }
\end{align*}
Let $Z_t = (A_t, B_t, R_{t+1, A_t, B_t}))$, then
\begin{align*}
  \E { \sum_{t=0}^{T-1} I_{t} (\theta ; A_t, B_t, R_{t+1, A_t, B_t} ) }
  & = \E{ \sum_{t} I_t(\theta ; Z_t)} = \sum_{t=0}^{T-1} I( \theta ; Z_t \given Z_0, \ldots, Z_{t-1} ) \\
  & = 
  I( \theta;  Z_0, \ldots, Z_{T-1} ) = I(\theta; H_T)
\end{align*}
Then,
\begin{align*}
  \E{ \sum_{t=0}^{T-1} U'_t(A_t, B_t) - L_t(A_t, B_t) } 
  & \le \sqrt{ T \left(8 \log (M \sqrt{T}) / \log( 1 + \sigma_w^{-2} )  + 2\beta_T \right) I(\theta; H_T)}
\end{align*}
Plugin the choice of $M = \frac{\log(\actions \sqrt{T})}{\log \frac{1}{\Phi(\beta'_t)} }, \beta'_t = 2 \log \actions \sqrt{T}$ and $\beta_t = 2 \beta'_t / \log (1 + \sigma_w^{-2})$,
then we obtain the final result,
\[
    \log (M \sqrt{T}) / \log( 1 + \sigma_w^{-2} ) + 2\beta_T = \mathcal{O} ( \log \actions T + \log T + \log \log \actions T ).
\]
That is
\begin{align}
\label{eq:estimation-error-upper-bound}
  \E{ \sum_{t=0}^{T-1} U'_t(A_t, B_t) - L_t(A_t, B_t) } 
  & \le \mathcal{O} \left(\sqrt{ T \log (\mathcal{A} T) I(\theta; H_T)} \right).
\end{align}

\subsubsection{Bounds on the information gain $I(\theta; H_T)$.}
\begin{remark}
  An important property of the Gaussian distribution is that the information gain does not depend on the observed rewards. This is because the posterior covariance of a multivariate Gaussian is a deterministic function of the points that were sampled. For this reason, this maximum information ratio $\gamma_T$ in \cref{def:max_infor_gain} is well defined. That is,
  $I(\theta; H_T) = I( \theta;  A_0, B_0, \ldots, A_{T-1}, B_{T-1}) \le \gamma_T.$
\end{remark}

We adopt the results from~\citep*{srinivas2009gaussian} which gives the bounds of $\gamma_T$ for a range of commonly used covariance functions: finite dimensional linear, squared exponential and Matern kernels.
\begin{example}[Finite dimensional linear kernels]
  Finite dimensional linear kernels have the form $k(x, x') = x^\top x'$.
  GPs with this kernel correspond to random linear functions $f(x) = \theta^\top x, \theta \sim N(0, \sigma_0 I)$.
\end{example}

\begin{example}[Squared exponential kernel]
  The Squared Exponential kernel is $k(x, x') = \exp( - (2l^2)^{-1} \norm{x - x'}^2s )$, $l$ is a lengthscale parameter. Sample functions are differentiable to any order almost surely.
\end{example}

\begin{example}[Matern kernel]
  The Matern kernel is given by
  $k(x, x') = (2^{1-\nu} / \Gamma(\nu) ) r^\nu B_{\nu}(r)$ 
  and $r = ( \sqrt{2\nu} / l ) \norm{x - x'}$, where $\nu$ controls the smoothness of sample paths (the smaller, the rougher) and $B_{\nu}$ is a modified Bessel function.
  Note that as $\nu \rightarrow \infty$, appropriately rescaled Matern kernels converge to the Squared Exponential kernel.
\end{example}

{
\renewcommand{\arraystretch}{1.4}
\begin{table}[htbp]
  \centering
  \caption{Maximum information gain $\gamma_T$.}
  \begin{tabular}{@{}cccc@{}}
  \toprule
  Kernel     & Linear                    & Squared exponential               & Materns ($\nu > 1$)                                            \\ \midrule
  $\gamma_T$ & $\mathcal{O}( d \log T )$ & $\mathcal{O}( ( \log T )^{d+1} )$ & $\mathcal{O}( T^{d(d+1) /(2 \nu+d(d+1))}(\log T) )$ \\ \bottomrule
  \end{tabular}
\end{table}
}

\section{Experiments}
\def\red{\color{red}}

\subsection{Performance metric} 
\label{appexp:metric}
Two performance metrics are utilized to evaluate the algorithms: average regret and KL divergence to Nash equilibrium.
\paragraph{Average regret:} The expected regret of player $i$ over $T$ time steps is defined as:
    \begin{equation}\label{equ: metric}
        \text{Regret}^{i}(T)=\frac{1}{T}\max _{a \in \Delta^{\mathcal{D}(\mathcal{A}^i)}}\mathbb{E}\left[\sum_{t=1}^{T}  \phi\left(a, x_t^{-i}\right)-\phi\left(x_t^{i}, x_t^{-i}\right)\right],
    \end{equation} 
where $a$ and $x_t^{i}$ represent the strategy for player $i$ and $x_t^{-i}$ denotes the strategies of all other players. $\mathcal{D}(\mathcal{A}^i)$ is the action set for player $i$. The expectation is taken over the randomness of the algorithm and the environment.  

\paragraph{Duality Gap $\&$ KL divergence to Nash:} 
The duality gap for a strategy pair $(x,y)$ is defined as 
\begin{equation*}
    \text{Gap}(x, y)=\max _{\left(x^{\prime}, y^{\prime}\right) \in \Delta} \mathbb{E}\left[\phi\left(x^{\prime}, y\right)-\phi\left(x, y^{\prime}\right)\right] 
\end{equation*}
The duality gap provides a measure of how close a solution pair is to a Nash equilibrium. If a solution pair $(x, y)$ has a duality gap of $\epsilon$, it is considered an $\epsilon$-Nash Equilibrium.

After $T$ iterations, the average-iterate strategies are defined as:
\begin{equation*}
\overline{x}_T=\frac{1}{T} \sum_{t=1}^{T} x_{t}, \quad \overline{y}_{T}=\frac{1}{T} \sum_{t=1}^{T} y_{t}.
\end{equation*}
The KL divergences $KL(\overline{x}_T, x^)$ and $KL(\overline{y}_T, y^)$ are also used as performance metrics for comparing the average-iterate solution pair $(\overline{x}_T,\overline{y}_T)$ with a Nash equilibrium pair $(x^,y^)$.

\subsection{Different opponents in matrix game}
\label{appexp:opponent} 
Four types of opponents in random matrix games are introduced, and the performance of different algorithms against these opponents is compared.

\paragraph{Self-play opponent:} The opponent uses the same algorithm as the player.

\paragraph{Best-response opponent:} The strategy for a best-response opponent is defined as:
\begin{equation} 
    y^*=\argmin\limits_{y\in\Delta}y^T(Ax),
\end{equation}   
which implies that the opponent knows matrix $A$ and the player's strategy $x$ at each round.

\paragraph{Stationary opponent:}The stationary opponent always samples an action from a fixed strategy.

\paragraph{Non-stationary opponent:}A non-stationary opponent changes its strategy randomly every round. 

\subsection{Devices}
Intel(R) Xeon(R) God 6230R CPU @ 2.10GHz and RTX A5000.

\subsection{Additional results for random matrix games}\label{appexp:addres} 
This section presents additional evaluations of different algorithms on two-player zero-sum matrix games. The experiments consider payoff matrices where each entry is an i.i.d. random variable generated from the uniform distribution $[-1,1]$. Each player has $M$ actions, resulting in a squared payoff matrix of size $M$. The total number of rounds is set to $T=10^7$. In each round, the players receive noisy rewards $\pm \Tilde{r}_t$, where $\Tilde{r}t=A{ij}+\epsilon_t$, and $\epsilon_t\sim\mathcal{N}(0, 0.1)$. The experiments investigate different matrix sizes, specifically $M=5,10,20,50,70,100$, and for each choice of $M$, $100$ independent simulation runs are conducted. The performance of the algorithms is averaged over these simulation runs. 

\paragraph{Self-play opponent} 
First, we compare different algorithms under the self-play setting, where both players employ the same algorithm. Convergence curves of two performance metrics (see \cref{appexp:metric}) are shown in \cref{fig:rdnmtx_selfplay}, where each subplot (a)-(f) corresponds to a different matrix size $M$. The results indicate that algorithms exploiting the game structure outperform the two IWE baselines, particularly for smaller matrix sizes. As the matrix dimension increases, the performance gap between the proposed algorithms and the baselines diminishes. Among the algorithms, those based on the OTS method exhibit faster convergence than the ones based on UCB. Additionally, the average regret of RM decreases earlier than that of Hedge. 
 
\begin{figure}
     \centering
     \begin{subfigure}[b]{.49\linewidth}  
         \centering
         \includegraphics[width=\linewidth]{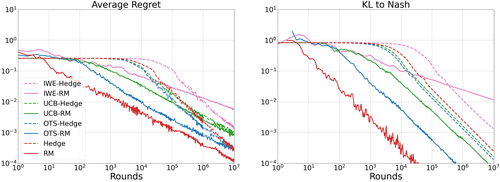} 
         \caption{$5\times5$}
     \end{subfigure}
     \hfill
     \begin{subfigure}[b]{.49\linewidth}  
         \centering
         \includegraphics[width=\linewidth]{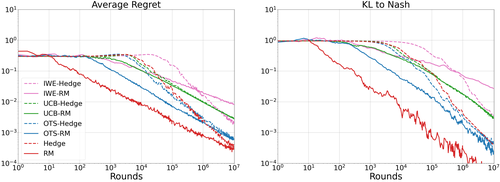}
         \caption{$10\times10$}
     \end{subfigure}
     \hfill
     \begin{subfigure}[b]{0.49\linewidth} 
         \centering
         \includegraphics[width=\linewidth]{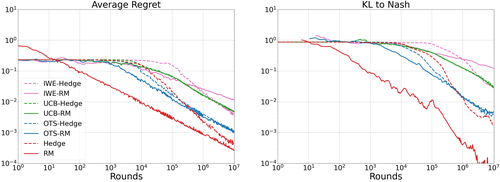}
         \caption{$20\times20$}
     \end{subfigure}
     \hfill
     \begin{subfigure}[b]{0.49\linewidth} 
         \centering
         \includegraphics[width=\linewidth]{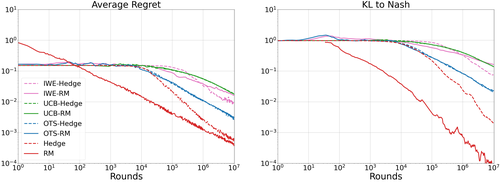}
         \caption{$50\times50$}
     \end{subfigure}
     \hfill
     \begin{subfigure}[b]{0.49\linewidth} 
         \centering
         \includegraphics[width=\linewidth]{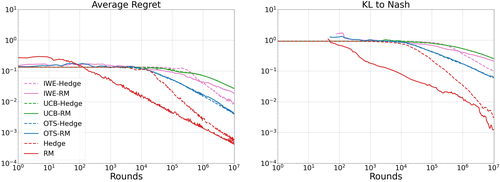}
         \caption{$70\times70$}
     \end{subfigure}
     \hfill
     \begin{subfigure}[b]{0.49\linewidth} 
         \centering
         \includegraphics[width=\linewidth]{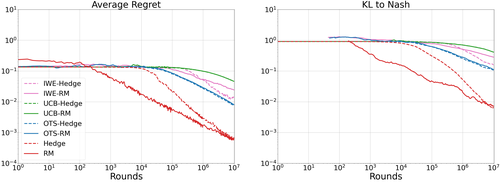}
         \caption{$100\times100$}
     \end{subfigure}
        \caption{Self-play on different random matrix.} 
        \label{fig:rdnmtx_selfplay} 
\end{figure} 

\paragraph{Best-response opponent}
In this subsection, we introduce a best-response opponent (see \cref{appexp:opponent}), while the player continues to use the various algorithms described above. \cref{fig:rdnmtx_br} presents the results for different matrix sizes. We observe that all the algorithms in our proposed framework outperform the IWE baselines. Once again, the OTS-based algorithms demonstrate a faster convergence behavior compared to UCB.

\begin{figure}[htbp]
     \centering
     \begin{subfigure}[b]{.49\linewidth}  
         \centering
         \includegraphics[width=\linewidth]{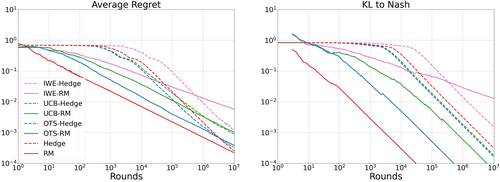}
         \caption{$5\times5$}
     \end{subfigure}
     \hfill
     \begin{subfigure}[b]{.49\linewidth}  
         \centering
         \includegraphics[width=\linewidth]{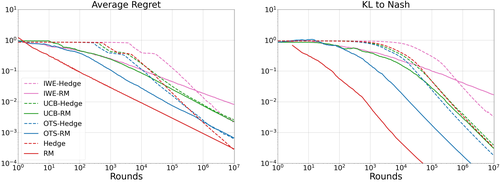}
         \caption{$10\times10$}
     \end{subfigure}
     \hfill
     \begin{subfigure}[b]{0.49\linewidth} 
         \centering
         \includegraphics[width=\linewidth]{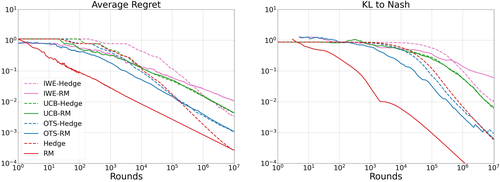}
         \caption{$20\times20$}
     \end{subfigure}
     \hfill
     \begin{subfigure}[b]{0.49\linewidth} 
         \centering
         \includegraphics[width=\linewidth]{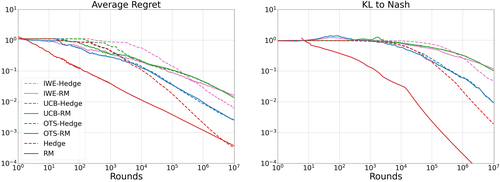}
         \caption{$50\times50$}
     \end{subfigure}
     \hfill
     \begin{subfigure}[b]{0.49\linewidth} 
         \centering
         \includegraphics[width=\linewidth]{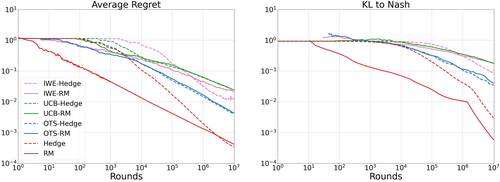}
         \caption{$70\times70$}
     \end{subfigure}
     \hfill
     \begin{subfigure}[b]{0.49\linewidth} 
         \centering
         \includegraphics[width=\linewidth]{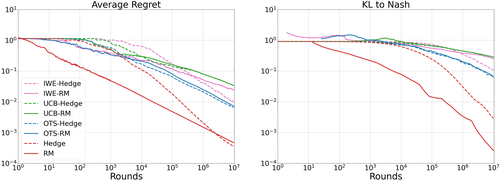}
         \caption{$100\times100$}
     \end{subfigure}
        \caption{Best-response opponent on different random matrix.} 
        \label{fig:rdnmtx_br} 
\end{figure}

\paragraph{Stationary opponent} 
Here, we consider a stationary opponent whose strategy remains fixed as a probability simplex over the action space, with values generated from a uniform distribution. The average regret in this scenario reflects the ability to exploit the opponent's weakness. Convergence curves of the two performance metrics are compared in \cref{fig:rdnmtx_stat}. The results clearly demonstrate that the OTS estimator provides a significant advantage in exploiting this weak opponent, in comparison to the IWE-based estimators.

\begin{figure}[htbp]
     \centering
     \begin{subfigure}[b]{.32\linewidth}  
         \centering
         \includegraphics[width=\linewidth]{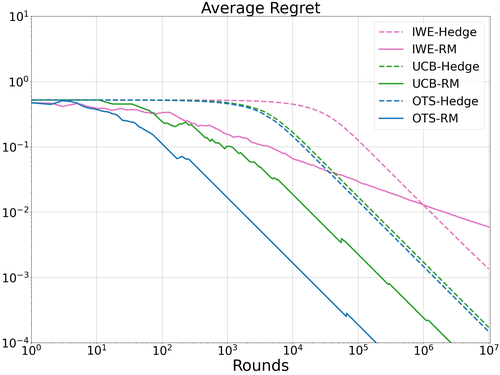}
         \caption{$5\times5$}
     \end{subfigure}
     \hfill
     \begin{subfigure}[b]{.32\linewidth}  
         \centering
         \includegraphics[width=\linewidth]{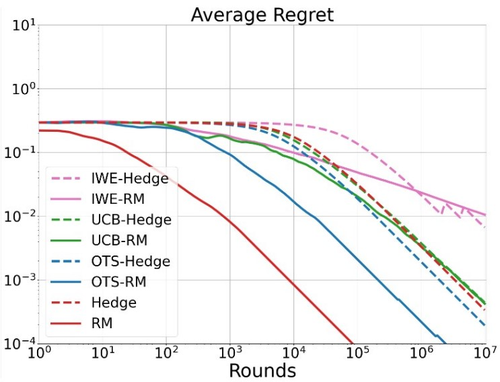}
         \caption{$10\times10$}
     \end{subfigure}
     \hfill
     \begin{subfigure}[b]{0.32\linewidth} 
         \centering
         \includegraphics[width=\linewidth]{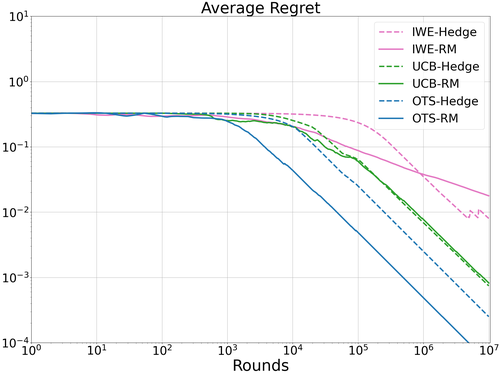}
         \caption{$20\times20$}
     \end{subfigure}
     \hfill
     \begin{subfigure}[b]{0.32\linewidth} 
         \centering
         \includegraphics[width=\linewidth]{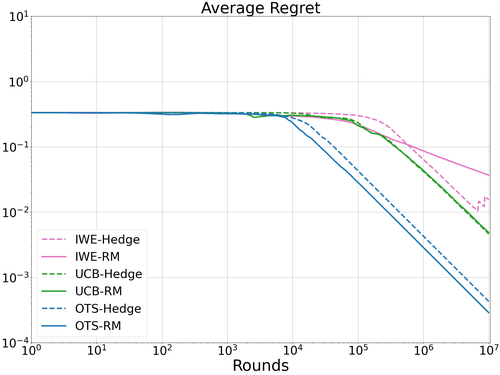}
         \caption{$50\times50$}
     \end{subfigure}
     \hfill
     \begin{subfigure}[b]{0.32\linewidth} 
         \centering
         \includegraphics[width=\linewidth]{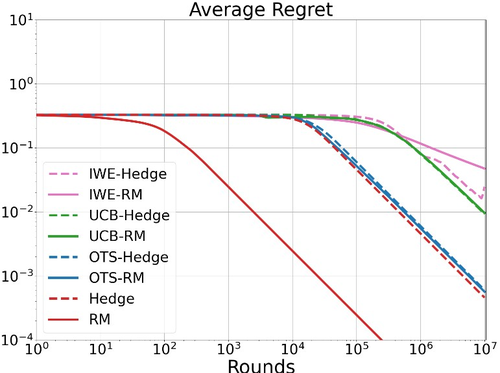}
         \caption{$70\times70$}
     \end{subfigure}
     \hfill
     \begin{subfigure}[b]{0.32\linewidth} 
         \centering
         \includegraphics[width=\linewidth]{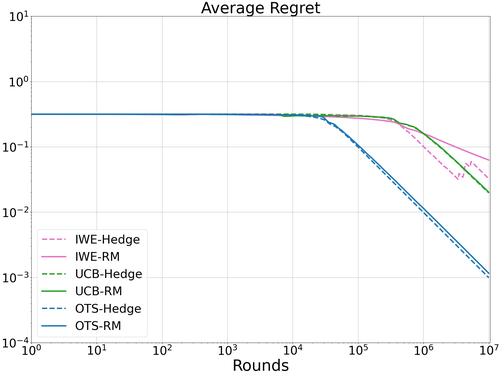}
         \caption{$100\times100$}
     \end{subfigure}
        \caption{Stationary opponent on different random matrix.} 
        \label{fig:rdnmtx_stat} 
\end{figure}

\paragraph{Non-stationary opponent} 
In this subsection, we introduce a non-stationary opponent, requiring the player to develop a robust strategy against all possible opponent's strategies. Specifically, a game matrix $A\in \mathbb{R}^{10\times5}$ is generated, with each element sampled from $\mathcal{N}(0.5, 2.0)$. The opponent's actions are drawn from a fixed strategy that randomly changes every $50$ rounds. Each algorithm is evaluated over $1000$ rounds, and $100$ simulation runs are conducted. \cref{fig:rdnmtx_robust} presents histograms of rewards over all rounds and simulation runs, while \cref{tab:rdnmtx_robust} summarizes the percentage of negative rewards and the mean reward values. The results show that OTS has a smaller percentage of negative rewards and achieves higher mean rewards compared to IWE, indicating its superior robustness. 
\begin{figure} 
    \centering
    \includegraphics[width=1\textwidth]{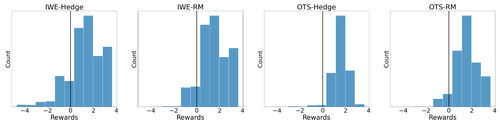} 
    \caption{Reward histograms of different algorithms against a non-stationary opponent.} 
    \label{fig:rdnmtx_robust} 
\end{figure} 

\begin{table}[htbp]
\centering
\begin{tabular}{ccc}
\hline
          & return$<0$ & mean return \\ \hline
IWE-Hedge & 19.4\% & 1.24        \\
IWE-RM    & 12.6\% & 1.50        \\
OTS-Hedge & 2.5\%  & 1.55        \\
OTS-RM    & 8.8\%  & 1.55      \\ \hline 
\end{tabular}
\captionof{table}{Returned rewards of IWE and OTS against a non-stationary opponent.} 
\label{tab:rdnmtx_robust}
\end{table}

\subsection{Sensitivity test for UCB-RM and OTS-RM} \label{appexp:sensitivity} 
In this subsection, we have conducted new experiments involving the selection of different parameters within the UCB-Regret Matching$(\beta)$ and OTS-Regret Matching$(M)$. These experiments encompass different matrix sizes $(10$, $20$, and $50)$ under the self-play setting. The choices of $\beta$ are as follows, presented in descending order: $\sqrt{2\log\mathcal{A}\sqrt{T}}, 0.2\sqrt{2\log\mathcal{A}\mathcal{B}\sqrt{T}}, 0.05\sqrt{2\log\mathcal{A}\mathcal{B}\sqrt{T}}$, and$0.01\sqrt{2\log\mathcal{A}\mathcal{B}\sqrt{T}}$. Notably, $\sqrt{2\log\mathcal{A}\sqrt{T}}$ stems from our proof, while the subsequent selections are smaller values based on $\sqrt{2\log\mathcal{A}\mathcal{B}\sqrt{T}}$. The results demonstrate that decreasing the value of $\beta$ yields an initial decrease followed by an increase in average regret. Notably, irrespective of $\beta$ variation, the performance of OTS-RM is always better than UCB-RM. Details are shown in Figure\ref{fig:beta}. 
\begin{figure}
    \centering
    \includegraphics[width=0.99\linewidth]{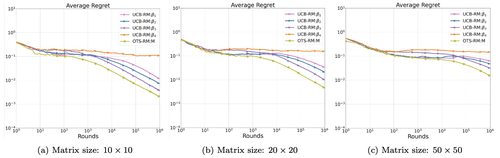}
    \caption{Comparison between UCB-RM and OTS-RM with different choices of $\beta$}
    \label{fig:beta}
\end{figure}

Another experiment on testing the sensitivity of OTS-RM is in Figure\ref{fig:M}. The parameter in OTS-RM is denoted as $M$, which signifies the number of samples taken for optimism in each round. Details about $M$ are outlined in \cref{thm:ots_advantages} and Appendix\ref{sec:estimation-information}. Noteworthy from the observations in Figure\ref{fig:M} is the minimal impact of varying $M$ across different values $(M=10, 20, 30)$ on the performance outcome, suggesting adjustments in $M$ do not significantly influence the performance.
\begin{figure}
    \centering
    \includegraphics[width=0.8\linewidth]{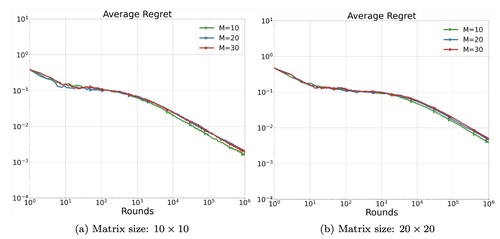}
    \caption{Sensitivity test for OTS-RM with different $M$}
    \label{fig:M}
\end{figure}

\subsection{Convergence rate related with dimensions}\label{appexp:conv}
The average regret bounds for IWE-Hedge and the proposed algorithms are $\Tilde{\mathcal{O}}(\sqrt{(M+N)/T})$ and $\Tilde{\mathcal{O}}(\sqrt{MN/T})$, respectively. Taking OTS-Hedge as an example, for a fixed iteration $T$, IWE-Hedge implies $\log(\text{average regret})\propto (1/2)\log(M+N)$, while OTS-Hedge indicates $\log(\text{average regret})\propto \frac{1}{2}\log(MN)$. When $M=N$, the logarithmic average regret for IWE-Hedge and TS-Hedge should increase with respect to $M$ at the rates of $1/2$ and $1$, respectively. The experimental results shown in \cref{fig:rndmtx_rate}, where $M$ ranges from $2$ to $100$ with a fixed $T=10^7$, match the theoretical predictions almost exactly. These empirical results provide strong support for our regret analysis. 

\begin{figure}
    \centering
    \includegraphics[width=0.7\linewidth]{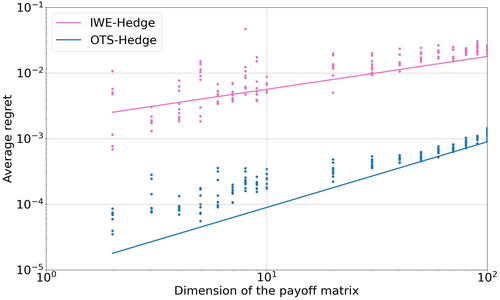} 
    \caption{Convergence rate with different dimensions.} 
    \label{fig:rndmtx_rate}
\end{figure}

\subsection{Radar anti-jamming problem} 
\label{appexp:radar}
The competition between radar and jammer is an important issue in modern electronic warfare, which can be viewed as a non-cooperative game with two players. This competition occurs at the signal level, where both the radar and jammer can change parameters of their transmitted signals. One representative game form is playing in the frequency domain, as the signal with different carrier frequencies is disjoint. 

In our example, we consider the pulse radar and the noise-modulated jammer. The radar transmits pulse signals one by one, with a waiting time interval between consecutive pulses. At the beginning of each pulse, both the radar and jammer transmit their own signals. After a short signal propagation time, each player receives their opponent's signal and obtains a reward. In our setting, both the radar and jammer have three candidate carrier frequencies, denoted as $\mathcal{F}=\{f_1, f_2, f_3\}$. The radar player has three sub-pulses in each radar pulse, and each sub-pulse can choose a different carrier frequency. The action set of the radar is denoted as $\mathcal{A}_R = \mathcal{F} \times \mathcal{F} \times \mathcal{F}$, which has a total of $27$ different actions. On the other hand, the jammer player can choose one carrier frequency to transmit the jamming signal and change the carrier frequency for different radar pulses. The action set of the jammer is denoted as $\mathcal{A}_J = \mathcal{F}$. After each iteration between the radar and jammer, the radar obtains a signal-inference-and-noise ratio (SINR), which serves as the reward for the radar in that round.

We can observe that both the radar and jammer's actions are related to the frequency set $\mathcal{F}=\{f_1, f_2, f_3\}$. Therefore, the reward in each round can be further defined as a linear function, making the anti-jamming scenario a linear game for the radar. 
\begin{definition}
The reward function in the anti-jamming problem is defined as follows:
$$\text{SINR}(a,b;\theta)=\phi(a,b)^T\theta$$ 
Here, $\phi(a,b)=P_a(\theta)/(P_{n_0}+P_{b}\mathds{1}(a=b))$ represents a known feature mapping that maps the actions of the radar and the jammer, denoted as $(a,b)$, into the frequency domain related with $\theta$. The parameter $\theta$ corresponds to the radar cross section (RCS) associated with the frequencies in the anti-jamming scenario. $P_a(\theta)$ is radar's received power related with $\theta$, $P_{n_0}$ is the noise power and $P_b$ is received jammer's power. The indicator function $\mathds{1}(a=b)$ evaluates to $1$ if the radar and jammer transmit on the same carrier frequency $(a=b)$ and $0$ otherwise. 
\end{definition}
This definition captures the essence of the anti-jamming problem, allowing us to evaluate the performance of different strategies and algorithms based on the signal-to-interference-plus-noise ratio (SINR), and construct the anti-jamming scenario as a linear game. 
\begin{figure}
    \centering
    \includegraphics[width=.5\linewidth]{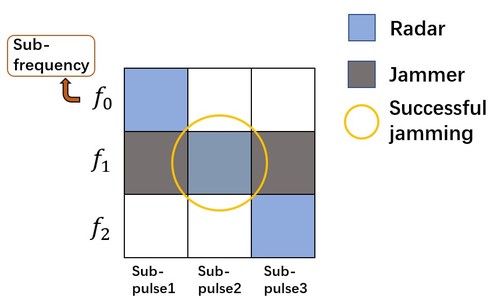}
    \caption{Illustration of the game between radar and jammer for one pulse.}
    \label{fig:radar02}
\end{figure}

\begin{table}
    \centering
    \caption{Parameters of FA radar and jammer.} 
    \begin{tabular}{cc} 
    \hline
    Parameter & Value \\
    \hline
    radar transmitter power $P_T$ & $30$kW \\
    radar transmit antenna gain $G_T$ & $32$dB  \\
    radar initial frequency $f_0$ & $3$GHz \\
    bandwidth of each subpulse $B$ & $2$MHz \\
    distance between the radar and the jammer & $100$km \\
    false alarm rate $p_f$ & $1\times10^{-4}$ \\
    jammer transmitter power $P_J$ & $100$W \\
    jammer transmit antenna gain $G_J$ & $15$dB \\ 
    \hline 
    \end{tabular} 
    \label{tab:radar}
\end{table}

\paragraph{Adaptive jammer} 
In the case of an adaptive jammer, it takes actions according to the radar's latest 10 pulses. Specifically, it counts the numbers of different carrier frequencies ($f_1, f_2, f_3 \in \mathcal{F}$) appearing in the radar's last 10 pulses, denoted as $N_1, N_2$, and $N_3$. The jammer then takes action $f_i$ with a probability proportional to $N_i$ for $i=1,2,3$.

\subsection{Repeated traffic routing problem}\label{app:traffic}
This section considers the traffic routing problem in the transportation literature, which is defined over a directed graph $(\mathcal{V},\mathcal{E})$ ($\mathcal{V}$ and $\mathcal{E}$ are vertices and edges sets respectively) and modeled as a multi-player game. Each node pair (referred as one origin node and one destination node) in the graph is treated as an individual player and every player $i$ seeks to find the `best' route (consists of several edges) to send $U^i$ units from its origin node to its destination node. The quality of the chosen route is measured by the travel time, which depends on the total occupancy of the traversed edges. If one edge is occupied by more players, more travel time of this edge is. Specifically, the travel time $t_e$ of edge $e\in\mathcal{E}$ is a function of the total units $u$ traversing $e$. One common choice is the Bureau of Public Roads function~\cite{leblanc1975algorithm}
$$
t_e(u)=c_e(1+0.5(\frac{u}{C_e})^4),
$$
where $c_e$ and $C_e$ are free-flow and capacity of edge $e$ respectively. The action of player $i$ is denoted as $a^i\in\mathcal{A}^i\subset \mathbb{R}^{|\mathcal{E}|}$ and the component corresponding to edge $e$ is denoted as $[a^i]_e$. If edge $e$ belongs to the route, $[a^i]_e=U^i$ and otherwise $[a^i]_e=0$. Further, let $a^{-i}\in\mathcal{A}^{-i}$ be the action of other players, the total occupancy of edge $e$ is $[a^i]_e+[g(a^{-i})]_e$, where $g(a^{-i})=\sum_{j\neq i}a^j$. This way, the total travel time of a joint action $\bm{a}=(a^i,a^{-i})$ for player $i$ can be expressed as 
$$
\ell^i(a^i, a^{-i})=\sum_{e\in\mathcal{E}}[a^i]_e t_e([a^i]_e+[g(a^{-i})]_e).
$$
The reward function of player $i$ is $r^i(a^i, a^{-i})=-\ell^i(a^i, a^{-i})$. Note that the mathematical form of the reward function is unknown to players, only values of $r^i(a^i, a^{-i})$ and actions $(a^i,a^{-i})$ can be observed by players.

In our experiment, the Sioux-Falls road network data set~\cite{bar2015transportation} is used and we set $c_e$ and $C_e$~\cite{bar2015transportation}. This network is a directed graph with $24$ nodes and $76$ edges and there are in total $N=528$ players. Each player $i$'s action space $\mathcal{A}^i$ is specified by the 5 shortest routes and any route that more than three times longer than the shortest route is further removed from $\mathcal{A}^i$. To exploit the correlations among actions $(a^i, a^{-i})$ in the reward function, the composite kernel proposed in~\cite{sessa2019no} is used. For player $i$, let $a^i,b^i\in\mathcal{A}^{i}$ and $a^{-i},b^{-i}\in\mathcal{A}^{-i}$, then the kernel function associating $(a^i, a^{-i})$ and $(b^i, b^{-i})$ is 
$$
K^{i}((a^i, a^{-i}),(b^i, b^{-i}))=k_{\rm L}(a^i, b^i)k_{\rm P}(a^i+g(a^{-i}), b^i+g(b^{-i})),
$$
where $k_{\rm L}(\cdot,\cdot)$ and $k_{\rm P}(\cdot,\cdot)$ are linear and polynomial kernels respectively. The hyperparameters of kernels are set the same as in \cite{sessa2019no}. Details on GP for estimating reward functions can be found in Section 2 of \cite{sessa2019no} and we do not repeated them here\footnote{  The implementation refers the code released by authors of \cite{sessa2019no} at \url{https://github.com/sessap/stackelucb}}. Except the regret, the congestion of edge $e$ is also included as a performance metric for a joint action $\bm{a}$, which is computed as $0.15(\sum_{i=1}^N [a^i]_e/C_e)^4$. The averaged congestion is obtained by averaging congestion over all edges.

\end{document}